\def\eqref#1{equation~\ref{#1}}
\def\1{\bm{1}}
\def\vbeta{{\mathbf{\beta}}}
\def\vx{{\mathbf{x}}}
\def\vy{{\mathbf{y}}}
\def\valpha{{\mathbf{\alpha}}}
\def\vol{\text{vol}}
\def\gP{{\mathcal{P}}}
\def\vx{{\bm{x}}}
\def\vy{{\bm{y}}}
\DeclareMathAlphabet{\mathsfit}{\encodingdefault}{\sfdefault}{m}{sl}
\SetMathAlphabet{\mathsfit}{bold}{\encodingdefault}{\sfdefault}{bx}{n}
\def\gN{{\mathcal{N}}}
\def\gP{{\mathcal{P}}}
\newcommand{\R}{\mathbb{R}}
\newlength{\dhatheight}
\newcommand{\dhat}[1]{%
    \settoheight{\dhatheight}{\ensuremath{\hat{#1}}}%
    \addtolength{\dhatheight}{-0.35ex}%
    \hat{\vphantom{\rule{1pt}{\dhatheight}}%
    \smash{\hat{#1}}}}
\DeclareMathOperator*{\argmin}{arg\,min}
\DeclareMathOperator*{\minmax}{min/max}
\newtheorem{theorem}{Theorem}
\newtheorem{proposition}{Proposition}
\newtheorem{lemma}{Lemma}
\newtheorem{corollary}{Corollary}
\icmltitlerunning{Trust Regions for Explanations via Black-Box Probabilistic Certification}
\begin{document}

\onecolumn
\icmltitle{Trust Regions for Explanations via\\ Black-Box Probabilistic Certification}

% It is OKAY to include author information, even for blind
% submissions: the style file will automatically remove it for you
% unless you've provided the [accepted] option to the icml2024
% package.

% List of affiliations: The first argument should be a (short)
% identifier you will use later to specify author affiliations
% Academic affiliations should list Department, University, City, Region, Country
% Industry affiliations should list Company, City, Region, Country

% You can specify symbols, otherwise they are numbered in order.
% Ideally, you should not use this facility. Affiliations will be numbered
% in order of appearance and this is the preferred way.
\icmlsetsymbol{equal}{*}

\begin{icmlauthorlist}
\icmlauthor{Amit Dhurandhar}{comp}
\icmlauthor{Swagatam Haldar}{sch}
\icmlauthor{Dennis Wei}{comp}
\icmlauthor{Karthikeyan Natesan Ramamurthy}{comp}
\end{icmlauthorlist}

\icmlaffiliation{comp}{IBM Research, Yorktown Heights, NY, USA}
\icmlaffiliation{sch}{Eberhard-Karls-Universität Tübingen, Tübingen, Germany}

\icmlcorrespondingauthor{Amit Dhurandhar}{adhuran@us.ibm.com}

% You may provide any keywords that you
% find helpful for describing your paper; these are used to populate
% the "keywords" metadata in the PDF but will not be shown in the document
\icmlkeywords{Explainable AI, Trust Regions}

\vskip 0.3in

% this must go after the closing bracket ] following \twocolumn[ ...

% This command actually creates the footnote in the first column
% listing the affiliations and the copyright notice.
% The command takes one argument, which is text to display at the start of the footnote.
% The \icmlEqualContribution command is standard text for equal contribution.
% Remove it (just {}) if you do not need this facility.

\printAffiliationsAndNotice{}  % leave blank if no need to mention equal contribution
%\printAffiliationsAndNotice{\icmlEqualContribution} % otherwise use the standard text.

\begin{abstract}
Given the black box nature of machine learning models, a plethora of explainability methods have been developed to decipher the factors behind individual decisions. In this paper, we introduce a novel problem of black box (probabilistic) explanation certification. We ask the question: Given a black box model with only query access, an explanation for an example and a quality metric (viz.~fidelity, stability), can we find the largest hypercube (i.e., $\ell_{\infty}$ ball) centered at the example such that when the explanation is applied to all examples within the hypercube, (with high probability) a quality criterion is met (viz. fidelity greater than some value)? Being able to efficiently find such a \emph{trust region} has multiple benefits: i) insight into model behavior in a \emph{region}, with a \emph{guarantee}; ii) ascertained \emph{stability} of the explanation; iii) \emph{explanation reuse}, which can save time, energy and money by not having to find explanations for every example; and iv) a possible \emph{meta-metric} to compare explanation methods. Our contributions include formalizing this problem, proposing solutions, providing theoretical guarantees for these solutions that are computable, and experimentally showing their efficacy on synthetic and real data. %We also discuss interesting future directions and extensions.
\end{abstract}

\section{Introduction}

Numerous feature based local explanation methods have been proposed %in literature 
\citep{lime,unifiedPI,saliency,LRPTOOLBOX,selvaraju2016grad,ans,mame,linex} to explain individual decisions of black-box models \citep{goodfellow2016deep}. However, these methods in general do not come with guarantees of how stable and widely applicable the explanations are likely to be. One typically has to find explanations independently for each individual example of interest by invoking these methods as many times. This situation motivates the following question considered in our work: \emph{Given a black-box model with only query access, an explanation for an example and a quality metric (viz. fidelity, stability), can we find the largest hypercube (i.e. $\ell_{\infty}$ ball) centered at the example such that when the explanation is applied to all examples within the hypercube, (with high probability) a quality criterion is met (viz. fidelity $>$ than some value)?} Answering this question affirmatively has benefits such as: i) providing insight into the behavior of the model over a region with a quality guarantee, a.k.a.~a \emph{trust region} that could aid in recourse; ii) ascertaining stability of explanations, which has recently been shown to be important \citep{hcomp} for stakeholders performing model improvement,  domain learning, adapting control and capability assessment; iii) explanation reuse, which can save on queries leading to savings in time, energy and even money \citep{macem}; and iv) serving as a possible meta-metric %\footnote{We say ``meta'' because the quality metric (viz. fidelity) can itself be an explanation metric \citep{molnarbook}.} 
to compare explanation methods. \emph{We demonstrate the last three benefits in Section \ref{sec:exp}}.

Since we assume only query access to the black box model, the setting is model agnostic and hence quite general. Furthermore, note that the explanation methods being certified could be model agnostic or white-box. Our certification methods require only that the explanation method %should still apply. As such, we simply need an oracle that 
can compute explanations for different examples, with no assumptions regarding its internal mechanism. %without restrictions on the mechanisms to find them making our setting even more general. 
We discuss general applicability in Appendix \ref{sec:discexp}.
As such, our contributions are the following: 1) We formalize the problem of explanation certification. 2) We propose an approach called Explanation certify (Ecertify) with three strategies of increasing complexity. 3) We theoretically analyze the whole approach by providing finite sample exponentially decaying bounds that can be estimated in practice, along with asymptotic bounds and further analysis of special cases. 
4) We empirically evaluate the quality of the proposed approach on synthetic and real data, demonstrating its utility. 
\vspace{-.1cm}
\section{Problem Formulation}
\label{sec:prob}
\vspace{-.1cm}
Before we formally define our problem note that vectors are in bold, matrices are in capital letters unless otherwise specified or obvious from the context, all operations between vectors and scalars are element-wise, $[[n]]$ denotes the set $\{1,...,n\}$ for any positive integer $n$ and $\log(.)$ is base $2$.

Let $\mathcal{X}\times \mathcal{Y}$ denote the input-output space where %$\vx\in \mathcal{X}$ lies in the $d$-dimensional space 
$\mathcal{X} \subseteq \R^d$. We are given a predictive model\footnote{One can assume one-hot encoding or frequency map approach \citep{macem} for discrete features.}
$g: \R^d \to \R$, an example $\vx_0 \in \R^d$ for which we have a local explanation function $e_{\vx_0}:\R^d \to \R$ (viz. linear like in LIME or rule lists or decision trees) and a quality metric $h:\R^2\to \R$ (higher the better, viz. fidelity, stability, etc.). Note that $e_{\vx_0}(\vx)$ denotes the explanation computed for $\vx_0$ applied to $\vx$. For instance, if the explanation is linear, we multiply the feature importance vector of $\vx_0$ with $\vx$. We could also have %more complicated 
non-linear explanations too, such as a (shallow) tree or a (small) rule list \citep{decl}. \emph{Also for ease of exposition, let us henceforth just refer to the quality metric as fidelity (defined in eq. \ref{eq:fid}), although our approach should apply to any such metric.} Given the above and a threshold $\theta$, our goal is to find the largest $\ell_\infty$ ball $B_\infty(\vx_0, w)$ centered at $\vx_0$ with radius (or half-width) $w$ such that $\forall \vx \in B_\infty(\vx_0, w)$, $f_{\vx_0}(\vx)\triangleq h\left(e_{\vx_0}(\vx), g(\vx)\right)\ge \theta$. Formally,
\begin{equation}\label{eq:obj}
    \max \quad w \quad \text{s.t.} \quad f_{\vx_0}(\vx) \ge \theta \quad \forall \vx \in B_\infty(\vx_0, w).
\end{equation}
We say that a half-width $w$ or region $B_\infty(\vx_0, w)$ is \emph{certified} if the constraint in \eqref{eq:obj} holds for $w$, and \emph{violating} if not.
Problem~\ref{eq:obj} is a challenging search problem even if we fix a $w$, since certifying the corresponding region is infeasible %since the space is infinitely large and checking for each individual $\vx$ is infeasible. Even finding if a region specified by some finite radius ball has $f(.)\ge \theta$  (dropping $\vx_0$ for convenience) for all examples within it is challenging 
as the set is uncountably infinite. 
Moreover, we do not have an upper bound on $w$ a priori. 
Thus for arbitrary $g(.)$, given that we have just query access and a finite query budget, we can only aim to approximately certify a region. Our desire is that the proposed methods will correctly certify a region with high probability, converging to certainty as the budget tends to infinity, while also being computationally efficient. The latter is important as %the search space is unbounded, not to mention 
one might want to obtain such trust regions for explanations on entire datasets, which may be very large. Sometimes, we equivalently state we query $f(.)$ rather than querying $g(.)$ and computing $f(.)$.

\vspace{-.1cm}
\section{Related Work}
\vspace{-.1cm}
Explainable AI (XAI) has gained prominence \citep{xai} over the last decade with the proliferation of deep neural models \citep{goodfellow2016deep} which are typically opaque. Many explanation techniques \citep{lime,unifiedPI,selvaraju2016grad,intgrad,ans,mame,montavon2017methods,bach2015pixel} have been proposed to address this issue and appropriate trust in these models. However, it is unclear how widely applicable are the provided explanations and whether they are consistent over neighboring examples. In this work, we provide this complementary perspective where rather than proposing yet another explainability method, we propose a way to certify explanations from existing methods by finding a region around an explained example where the explanation might still be valid. This has benefits like those mentioned in the introduction, as well as possibly leading to more robust explanation methods as we discuss later. The need for stable explanations \citep{hcomp}, possible recourse \citep{recustun} and even robust recourse \citep{robrec1,robrec2,robrec3,robrec4} further motivate our problem, where the latter methods try to find robust counterfactual explanations -- not certify a given explanation -- using white/black-box access. Our work also complements works in formal explanations \citep{formalexpIg,formalexpsuffr}, which try to find feature based explanations that satisfy criterion such as sufficient reason (or prime implicants) and are typically restricted to tree based models or quantized neural networks. Strategies such as model counting have been proposed here to certify non-formal explanations; however, besides being restricted to the aforementioned models and criterion, they require white box access and are challenging to scale.

Another related area, adversarial robustness \citep{advrsurvey}, also studies the problem of certification (e.g.~\citet{katz2017reluplex,AI2,weng2018towards,dvijotham2018dual,raghunathan2018semidefinite,advcert,mipverify,chen2019robustness}) but for the robustness of a single model to changes in the input, where no explanation is involved. Robustness certification can be seen as a special case of our problem where the explanation is a constant function. Within the robustness certification literature, randomized smoothing methods are more similar to our work in also using only query (a.k.a.~black box) access to the model. However, they certify a smoothed version of the original model \citep{smoothrobver}, where an example is perturbed using Gaussian smoothing to facilitate robustness guarantees. We are not aware of robustness certification works that use only query access to certify the original model. Zeroth order (ZO) optimization methods have been proposed for adversarial \emph{attacks} \citep{chen2017zoo,tu2019autozoom,zhao2019design}, i.e., finding adversarial examples. In the experiments, we adapt a ZO method from a state-of-the-art toolbox \citep{advtoolbox} to our setting, and show that our proposed methods scale much better while still being accurate. In \citet{clever}, extreme value theory is used for robustness certification, providing only asymptotic bounds for a special case of our problem (see Section \ref{sec:special:est}). Moreover, their approach assumes access to gradients and is therefore not black box. %but are typically expensive in high dimensions as they need to estimate the gradient. 

\begin{figure}[t]
    \centering
    \includegraphics[width=.8\columnwidth]{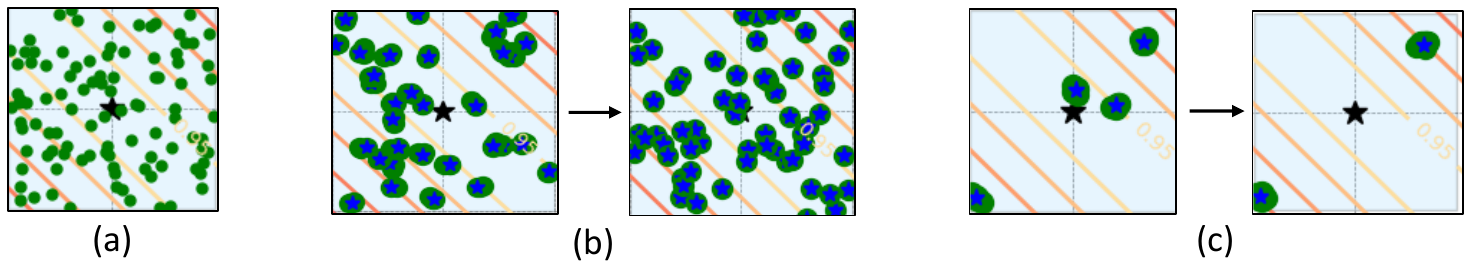}
    \caption{\small{Illustration of our three certification strategies. (a) depicts one of the final steps of the unif strategy, while (b) and (c) depict two consecutive close-to-final steps of unifI and adaptI respectively. The setup is the same as in Section \ref{sec:exp} with $d=2,~Q=1000$. The boxes have width $w=0.5$ which is the optimal width. The star in the center denotes the example whose explanation we want to certify, while the orange lines are level sets for fidelity ($\theta = 0.75$). The methods' different behaviors are apparent: unif queries examples uniformly at random, while unifI uniformly samples prototypes (blue stars) and then queries examples around these prototypes (green blobs). From one step to the next, unifI doubles the number of prototypes and halves the number of examples queried around each prototype. %them since, the total query budget has to be $\le q$ per iteration. 
    Contrastingly, adaptI, in the innermost loop, halves the number of prototypes where it adaptively queries more around prototypes close to low fidelity examples (lower left and upper right corners).}
    }
    \label{fig:meth}
    \vspace{-.25cm}
\end{figure}

\begin{algorithm}[htbp]
\SetAlgoLined

\textbf{Input:} example to be certified $\bm{x}_0$, quality metric $f(.)$ (viz. fidelity), threshold $\theta$, number of regions to check $Z$, query budget per region $Q$, lower and upper bounding half-widths ($lb$, $ub$) of initial region, and certification strategy to use ($s=\{$unif, unifI, adaptI$\}$).

  \textbf{Initialize:} $w = 0$, $B = \infty$

  \textbf{if} {$f(\bm{x}_0)<\theta$} \textbf{then} \KwOut{ -1~~~~~\# Even $\bm{x}_0$ fails certification.} 
  
 \For{$z=1$ to $Z$}
 {
 $\sigma = \frac{ub-lb}{d}$~~~~~\#Standard deviation of Gaussians in unifI and adaptI
 
 ($t, \bm{b}$) = Certify$\left(lb, ub, Q, \theta, f(.), \bm{x}_0, \sigma, s\right)$
 
 \# Find half-width of hypercube to certify.
 
 \eIf{$t==True$} 
 {
 $w = ub$, $lb$ = $ub$, $ub = \min\left(\frac{B+ub}{2}, 2ub\right)$}
 {
 
$B = \min\left\{|b_i-x_i| \text{ s.t. } |b_i-x_i|>lb~ \forall i\in[[d]]\right\}$, $ub = \frac{B+lb}{2}$ 
 }
}
\KwOut{$w$}
  \caption{Explanation certify (Ecertify). Code will be available at \url{https://github.com/Trusted-AI/AIX360}.}
  \label{algo:Ecertify}
  
\end{algorithm}

\begin{algorithm}[t!]
\SetAlgoLined
Let $R = [\bm{x}_0+ ub, \bm{x}_0- ub]\setminus [\bm{x}_0+ lb, \bm{x}_0- lb]$ be the region to query in and
let $q=\lfloor\frac{Q}{\log(Q)}\rfloor$

\# Choose sampling strategy as Uniform, Uniform Incremental or Adaptive Incremental

\uIf{$s==\text{unif}$}
{Uniformly sample $Q$ examples $\bm{r_1},...,\bm{r_Q}\in R$ and query $f(.)$

Let $\bm{r} = \argmin\limits_{\{\bm{r_1},...,\bm{r_Q}\}}f(\bm{r_i})$

\textbf{if} $f(\bm{r}) \ge \theta$ \textbf{then} \textbf{Output}{$(True, \bm{0})$} \textbf{else} \KwOut{$(False, \bm{r})$}
}

\uElseIf{$s==\text{unifI}$}{
\For{$i=1$ to $\lfloor\log(Q)\rfloor$}
{
Let $n=\min(2^i,q)$

Uniformly sample $n$ examples (a.k.a. prototypes) $\bm{r_1},...,\bm{r_n}$ in $R$

Sample $\lfloor \frac{q}{n}\rfloor$ examples (in $R$) from each Gaussian $\mathcal{N}(\bm{r_j},\sigma^2 I)$ ($j \in [[n]]$) and query $f(.)$

Let $\bm{r}$ be the minimum fidelity example amongst the $q$ queried examples

\textbf{if} {$f(\bm{r})<\theta$} \textbf{then} \KwOut{$(False, \bm{r})$}
}
\KwOut{$(True, \bm{0})$}
}

\ElseIf{$s==\text{adaptI}$}
{
\For{$i=1$ to $\lfloor\log(Q)\rfloor$}
{
\textbf{if} $i2^i\le q$ \textbf{then} $n=2^i$, $k=i$ \textbf{else} $n=2^k$

Let $m=n$

Uniformly sample $m$ examples (a.k.a. prototypes) $\bm{r_1},...,\bm{r_m}$ in $R$

\For{$j=1$ to $\lceil\log(n)\rceil$}{
Sample $\lfloor \frac{q}{m\lceil\log(n)\rceil}\rfloor$ examples (in $R$) from each Gaussian $\mathcal{N}(\bm{r_k},\sigma^2 I)$ where $\bm{r_k}$ belongs to (selected) $m$ prototypical examples and query $f(.)$

Find the minimum fidelity example (mfe) for each of the $m$ Gaussians

\textbf{if} the mfe amongst these is $\bm{r}$ and $f(\bm{r})< \theta$ \textbf{then} \KwOut{$(False, \bm{r})$}

Otherwise, select the $\lceil\frac{m}{2}\rceil$ prototypes which are associated with the lowest minimum fidelity examples and set $m=\lceil\frac{m}{2}\rceil$
}
}
\KwOut{$(True, \bm{0})$}
}
\caption{($t, \bm{b}$) = Certify$\left(lb, ub, Q, \theta, f(.), \bm{x}_0, s\right)$}
  \label{algo:certify}
\end{algorithm}
\vspace{-.1cm}
\section{Method}
\vspace{-.1cm}
\label{sec:meth}
Our approach comprises Algorithms \ref{algo:Ecertify} and \ref{algo:certify}. Given the problem elements $\vx_0, f, \theta$ defined in Section~\ref{sec:prob}, Algorithm \ref{algo:Ecertify} outputs the largest half-width $w$ that it claims is certified. Each iteration of Algorithm \ref{algo:Ecertify} decides which region to certify next based on whether the previous region was certified or not. The actual certification of a region happens in Algorithm \ref{algo:certify}, where we provide three strategies $s$ to do so. If Algorithm \ref{algo:certify} certifies a region, defined by lower and upper bounding half-widths $lb$ and $ub$, then Algorithm \ref{algo:Ecertify} will either double $ub$ or choose it to be midway between the current $ub$ and $B$, where $B$ is an upper bound on half-widths to be considered. Otherwise, if the region is found to be violating, the next $ub$ is the midpoint between $B$ and $lb$, the width of the last certified region. Algorithm \ref{algo:Ecertify} will continue for a pre-specified number of iterations $Z$, after which it will output the largest certified region it found. %If an upper bound $B$, indicative of a region that is not certified, is already found, then once a smaller region within it is certified, the next region to be checked for certification will be midway between the smaller certified region and $B$. 

Some remarks on  Algorithm~\ref{algo:Ecertify}: i) 
The lower bound $lb$ will typically be 0 initially, unless one already knows a region that is surely certified. %If so, one could start from there. 
As we discuss later, if $g(.)$ is known to be Lipschitz for instance and the explanation function is linear, then one could also set a higher $lb$ value. ii) Although the end goal is to certify a hypercube around $\vx_0$, Algorithm \ref{algo:Ecertify} asks Algorithm \ref{algo:certify} to certify regions \emph{between} hypercubes with half-widths $lb$ and $ub$. This is because the region with half-width $lb$ has already been certified at that juncture, and hence when certifying a larger region $ub$ we need not waste queries on examples that lie inside $lb$. %, instead saving the query budget for the region in between the two that has not yet been certified. 
We do this by sampling examples from the larger hypercube and only querying those that lie outside the smaller one.\footnote{we set $\sigma\propto \frac{1}{d}$ in Algorithm \ref{algo:Ecertify} since, with increasing dimension it becomes easier for an example sampled from a Gaussian to lie outside the hypercube as all dimensions need to lie within the specified ranges.} iii) Other ways of updating the upper bound $B$ are in Appendix \ref{sec:expdmore}. 

In Algorithm \ref{algo:certify} we present three strategies: unif, unifI and adaptI. The first strategy, uniform (unif), is a simple uniform random sampling strategy that simply queries $g(.)$ in the region specified by Algorithm \ref{algo:Ecertify}. If the fidelity is met for all examples queried then a boolean value of True is returned, else False is returned along with the example where the fidelity was the worst. In the second strategy, uniform incremental (unifI), we uniformly randomly sample at each iteration (i.e. from $1$ to $\lfloor \log(Q) \rfloor$) a set of $n$ examples and then using them as centers of a Gaussian we sample $\lfloor\frac{q}{n}\rfloor$ examples. Again examples belonging to the region are queried and True or False (with the failing example) is returned. This method in a sense is performing a dynamic grid search over the region in an incremental fashion in an attempt to certify it. Our third, and possibly, most promising strategy is adaptive incremental (adaptI), where like in unifI we uniformly at random sample centers or prototypical examples, but then adaptively decide how many examples to sample around each prototype depending on how promising it was in finding the minimum quality example. So at each stage in the innermost loop we choose half of the most promising prototypes and sample more around them until we reach a single prototype or find a violating example. This method thus focuses the queries in regions where it is most likely to find a violating example. %More intuition in Figure \ref{fig:meth} and Appendix \ref{sec:mi}.

\vspace{-.1cm}
\section{Analysis}
\label{sec:ans}
\vspace{-.1cm}
%Questions remain if the total query budget used by each strategy is at most $Q$ and if there are any (probabilistic) performance guarantees that can be derived. We answer these in the affirmative in the next section. 
In this section, we provide probabilistic performance guarantees for Algorithms~\ref{algo:Ecertify} and \ref{algo:certify}. We also verify that the total query budget used by each strategy is at most $Q$. Without loss of generality (w.l.o.g.) assume $\bm{x}_0$ is at the origin, i.e., $\bm{x}_0=\bm{0}$. Then any hypercube of (half-) width $w$, where $w\ge 0$, can be denoted by $[-w,w]^d$, and $d$ is the dimensionality of the space. Let $f^{*}_w$ be the minimum fidelity value in $[-w,w]^d$, and let $\hat{f}^{*}_w$ be the estimated minimum fidelity in that region based on the methods mentioned in Algorithm \ref{algo:certify}. Note that we always have $f^*_w \leq \hat{f}^*_w$.

Given the above notation, the output of Algorithm~\ref{algo:Ecertify} is a region $[-w,w]^d$ that is claimed to be certified, implying $\hat{f}^*_w \geq \theta$. However, the condition that we would ideally like to hold is $f^*_w \geq \theta$, involving the unknown $f^*_w$. Thus, we would like $\hat{f}^*_w$ to be close to $f^*_w$. In what follows, we provide bounds on the probability that $\hat{f}^*_w$ and $f^*_w$ differ by at most $\epsilon$, i.e., $P[\hat{f}^*_w - f^*_w \leq \epsilon] \geq 1 - p$, for any $\epsilon > 0$ and $p\in[0,1]$.

One way to interpret our bounds is as follows:\footnote{Another way is to regard $p$ as given and solve for $\epsilon$ to get a $(1-p)$-confidence interval $[\hat{f}^*_w - \epsilon, \hat{f}^*_w]$ for $f^*_w$.} Fix a value for $\epsilon$ and suppose that the region $[-w, w]^d$ is actually violating, by a ``margin'' of at least $\epsilon$: $f^*_w \leq \theta - \epsilon$. Then the probability that Algorithm~\ref{algo:Ecertify} incorrectly certifies $[-w,w]^d$ ($\hat{f}^*_w \geq \theta$) is at most $p$. On the other hand, if $[-w,w]^d$ is truly certified, then $\hat{f}^*_w \geq f^*_w \geq \theta$ and Algorithm~\ref{algo:Ecertify} also certifies the region. In the last case, if $\theta - \epsilon < f^*_w < \theta$, then $[-w,w]^d$ is violating but within the specified margin $\epsilon$ so we do not insist on a guarantee.

We note for our first result below that Algorithm \ref{algo:Ecertify} doubles or halves the range every time we certify or fail to certify a region respectively. Hence, to certify the final region $[-w,w]^d$ we will take $m=O(\log(w))$ steps. W.l.o.g. assume the number of subsets of $[-w,w]^d$ certified by the algorithm is $c \leq m$. %then the number of sets that were found to have examples with fidelity less than $\theta$ were $\bar{c}=m-c$. 
Let $w_1 \leq \dots \leq w_c$ denote the upper bounds ($ub$ in Algorithm~\ref{algo:Ecertify}) of the certified regions in increasing order, where $w_c= w$. %, and let $b_1,...,b_{\bar{c}}$ denote the upper bounds with increasing value for the remaining regions that failed certification, where $b_1=b$. 
Let us also denote the fidelity of an example $\bm{x}$ in between two hypercubes $[-j,j]^d,~[-i,i]^d$ where $j\ge i\ge 0$ by $f^{(\bm{x})}_{j,i}$.%\footnote{\textcolor{blue}{If $i=0$ we simplify the notation by dropping $i$, that is, $f^{(\bm{x})}_{j,i}$ is written as $f^{(\bm{x})}_{j}$.}} 
The following lemma is a consequence of %Given that our certification strategies sample examples independently, and hence 
certification in a region being independent of certification in a disjoint region (all proofs in Appendix).
%\begin{align}
%\label{eq:main}
%P[f^{*}_w\ge \theta, f^{*}_{b}< \theta|\hat{f}^{*}_w\ge \theta, \hat{f}^{*}_{b}< \theta]&=\prod_{i=1}^c P[f^{*}_{w_{i},w_{i-1}}\ge \theta|\hat{f}^{*}_{w_{i},w_{i-1}}\ge \theta]\prod_{j=1}^{\bar{c}}P[f^{*}_{b_{j},b_{j-1}}< \theta|\hat{f}^{*}_{b_{j},b_{j-1}}< \theta]\nonumber\\
%&=\prod_{i=1}^c P[f^{*}_{w_{i},w_{i-1}}\ge \theta|\hat{f}^{*}_{w_{i},w_{i-1}}\ge \theta]\nonumber\\
%&\ge \min\limits_{i\in \{1,...,c\}} P[f^{*}_{w_{i},w_{i-1}}\ge \theta|\hat{f}^{*}_{w_{i},w_{i-1}}\ge \theta]^c
%\end{align}
\begin{lemma} 
\label{lem1}
%Given the notation described above we can lower bound equation \ref{eq:ptb} as follows:
The probability that $\hat{f}^*_w$ and $f^*_w$ differ by at most $\epsilon$ decomposes over regions as follows:
%\begin{align}
%\label{eq:main}
%    P[\hat{f}^{*}_w-f^{*}_w\le \epsilon] \ge \min\limits_{i\in \{1,...,c\}} P[\hat{f}^{*}_{w_i,w_{i-1}}-f^{*}_{w_i,w_{i-1}}\le \epsilon]^c
%\end{align}
\begin{align}\label{eq:main}
    P\bigl[\hat{f}^{*}_w-f^{*}_w \leq \epsilon\bigr] &= 1 - \prod_{i=1}^c P\bigl[\hat{f}^{*}_{w_i,w_{i-1}}-f^{*}_w > \epsilon\bigr]\nonumber\\
    &\geq \max_{i\in \{1,...,c\}} P\bigl[\hat{f}^{*}_{w_i,w_{i-1}}-f^{*}_w \leq \epsilon\bigr],
\end{align}
 where $w_0=0$.
\end{lemma}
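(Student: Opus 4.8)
The plan is to exploit the fact that the certified hypercube $[-w,w]^d$ is a disjoint union of the $c$ ``annular'' regions $A_i \triangleq [-w_i,w_i]^d \setminus [-w_{i-1},w_{i-1}]^d$ for $i\in\{1,\dots,c\}$, with $w_0=0$ and $w_c=w$. First I would record the two structural identities that drive everything. Since a minimum over a union is the minimum of the minima over the pieces, the true minimum fidelity satisfies $f^*_w = \min_i f^*_{w_i,w_{i-1}}$, and, more importantly, the estimate produced by pooling the queries used to certify the individual $A_i$ satisfies $\hat{f}^*_w = \min_i \hat{f}^*_{w_i,w_{i-1}}$. This second identity is what reduces a claim about the whole region to a claim about its parts.

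Next I would rewrite the target event through its complement. Because $\hat{f}^*_w = \min_i \hat{f}^*_{w_i,w_{i-1}}$, the event $\{\hat{f}^*_w - f^*_w > \epsilon\}$ is exactly $\{\min_i \hat{f}^*_{w_i,w_{i-1}} > f^*_w + \epsilon\}$, which holds if and only if $\hat{f}^*_{w_i,w_{i-1}} - f^*_w > \epsilon$ for every $i$ simultaneously. Here $f^*_w$ is a fixed (nonrandom) quantity, so all the randomness in the $i$-th event is carried by the samples drawn inside $A_i$. I would then invoke the stated independence: since $A_1,\dots,A_c$ are pairwise disjoint and Algorithm~\ref{algo:certify} draws and queries samples independently within each, the events $\{\hat{f}^*_{w_i,w_{i-1}} - f^*_w > \epsilon\}$ are mutually independent across $i$. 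Multiplying their probabilities and passing to the complement yields the claimed equality
\[
P\bigl[\hat{f}^*_w - f^*_w \le \epsilon\bigr] = 1 - \prod_{i=1}^c P\bigl[\hat{f}^*_{w_i,w_{i-1}} - f^*_w > \epsilon\bigr].
\]

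For the lower bound, I would set $a_i = P[\hat{f}^*_{w_i,w_{i-1}} - f^*_w > \epsilon]\in[0,1]$. Because each factor is at most $1$, we have $\prod_i a_i \le a_j$ for every $j$, hence $1 - \prod_i a_i \ge 1 - a_j = P[\hat{f}^*_{w_j,w_{j-1}} - f^*_w \le \epsilon]$ for each $j$; taking the maximum over $j$ gives the inequality in \eqref{eq:main}.

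I expect the only genuinely delicate step to be the identity $\hat{f}^*_w = \min_i \hat{f}^*_{w_i,w_{i-1}}$ together with the independence across the disjoint annuli: one must justify that the estimate over the full region really is the minimum of the per-annulus estimates (i.e.\ that pooling queries and taking a global minimum is the same as taking a minimum of per-region minima) and that these per-region estimates share no randomness. Once those two points are secured, the remaining probability manipulations are routine.
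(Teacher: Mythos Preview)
Your proposal is correct and follows essentially the same approach as the paper: the key identity $\hat{f}^*_w = \min_i \hat{f}^*_{w_i,w_{i-1}}$ from the disjoint-union decomposition, independence of samples across disjoint annuli to factor the complementary event, and then bounding all but one factor by $1$ to obtain the $\max$ lower bound. The paper additionally remarks on when the inequality is tight, but your argument for the stated lemma is complete and matches theirs.
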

From equation \ref{eq:main} it is clear that we need to lower bound $P[\hat{f}^{*}_{w_i,w_{i-1}}-f^{*}_w\le \epsilon]$ $\forall i\in\{1,...,c\}$. Since the mathematical form of the bounds will be similar $\forall i$, let us for simplicity of notation denote the fidelities for the $i^{\text{th}}$ region by just the integer subscript $i$, i.e., denote  $\hat{f}^{*}_{w_{i},w_{i-1}}$ by $\hat{f}^{*}_i$ and similarly the fidelities for other examples in that region.
%Since the random variable is $\hat{f}^{*}_i$ we want an expression where it is allowed to vary and not where it is conditioned on to be able to effectively lower bound the above expression. We thus reformulate it and lower bound it as follows:
%\begin{align}
 %   P[f^{*}_i\ge \theta|\hat{f}^{*}_i\ge \theta]&=\lim_{\epsilon\to 0}P[f^{*}_i\ge \theta-\epsilon|\hat{f}^{*}_i\ge \theta]\nonumber\\
  %  &= \lim_{\epsilon\to 0}P[\theta-f^{*}_i\le \epsilon|\hat{f}^{*}_i\ge \theta]\nonumber\\
   % &\ge \lim_{\epsilon\to 0}P[\hat{f}^{*}_i-f^{*}_i\le \epsilon]
%\end{align}
%In the first step we introduce an additional variable $\epsilon$ ($\ge 0$) which at the limit point makes the probability equal to the left hand side as the (sampling) densities are continuous for all the three strategies in any region $i$. The inequality in the last step comes from the fact that $\hat{f}^*_i\le\alpha$ for any real $\alpha$ implies $\theta\le\alpha$ since $\hat{f}^*_i\ge\theta$.
We thus now need to lower bound $P[\hat{f}^{*}_i-f^{*}_w\le \epsilon]$ for the three different certification strategies in Algorithm \ref{algo:certify}.

\noindent\textbf{Uniform Strategy:} This is the simplest strategy where we sample and query $Q$ examples uniformly in the region we want to certify. Let $U$ denote the uniform distribution over the input space in the $i^{\text{th}}$ region and let $F_i^{\text{(u)}}(.)$ denote the cumulative distribution function (cdf) of the fidelities induced by this uniform distribution, i.e., $F_i^{\text{(u)}}(v) \triangleq P_{\bm{r}\sim U}[f^{(\bm{r})}_i\le v]$ for some real $v$ and $\bm{r}$ in the $i^{\text{th}}$ region. 
\begin{lemma}
    \label{lem2}
Given the above notation we can lower bound the probability of unif in a region $i$,
\begin{align}
    P[\hat{f}^{*}_i-f^{*}_w\le \epsilon]\ge 1-\exp{(-QF_i^{\text{(u)}}(f^{*}_w+ \epsilon))}
\end{align}
\end{lemma}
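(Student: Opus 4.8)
The plan is to compute the probability in closed form using the fact that, under the unif strategy, $\hat{f}^{*}_i$ is the minimum of $Q$ i.i.d.\ uniform samples, and then to apply the elementary inequality $1-x\le e^{-x}$ to convert the resulting product into an exponential.

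First I would identify $\hat{f}^{*}_i$ explicitly. In the unif branch of Algorithm~\ref{algo:certify}, the returned example $\bm{r}$ is $\argmin$ of $f$ over the $Q$ sampled points $\bm{r}_1,\dots,\bm{r}_Q$, so $\hat{f}^{*}_i=\min_{j}f_i^{(\bm{r}_j)}$, with the $\bm{r}_j$ drawn i.i.d.\ from the uniform distribution $U$ on the $i^{\text{th}}$ region. Since $f^{*}_w$ is a fixed (nonrandom) quantity, the event of interest rewrites as
\begin{equation}
\{\hat{f}^{*}_i-f^{*}_w\le\epsilon\}=\Bigl\{\min_{j} f_i^{(\bm{r}_j)}\le f^{*}_w+\epsilon\Bigr\},
\end{equation}
i.e.\ the event that \emph{at least one} of the $Q$ queried points has fidelity at most the threshold $f^{*}_w+\epsilon$.

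Next I would pass to the complement, where the minimum exceeding the threshold means \emph{every} sample exceeds it. By the definition of the cdf, a single draw satisfies $P_{\bm{r}\sim U}[f_i^{(\bm{r})}>f^{*}_w+\epsilon]=1-F_i^{\text{(u)}}(f^{*}_w+\epsilon)$, and by independence of the $Q$ draws the joint probability factorizes:
\begin{equation}
P\bigl[\hat{f}^{*}_i-f^{*}_w>\epsilon\bigr]=\bigl(1-F_i^{\text{(u)}}(f^{*}_w+\epsilon)\bigr)^{Q}.
\end{equation}
Taking the complement gives the exact value $P[\hat{f}^{*}_i-f^{*}_w\le\epsilon]=1-(1-F_i^{\text{(u)}}(f^{*}_w+\epsilon))^{Q}$.

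Finally I would bound the product using $1-x\le e^{-x}$, which holds for all real $x$ and in particular for $x=F_i^{\text{(u)}}(f^{*}_w+\epsilon)\in[0,1]$, yielding $(1-F_i^{\text{(u)}}(f^{*}_w+\epsilon))^{Q}\le\exp(-Q\,F_i^{\text{(u)}}(f^{*}_w+\epsilon))$ and hence the stated lower bound. There is essentially no hard step here; the only points deserving care are verifying that $\hat{f}^{*}_i$ is genuinely the sample minimum (so the one-sided event decomposes into a clean intersection over samples) and that the $Q$ queries are mutually independent (so the product form is valid). The bound is tightest when $F_i^{\text{(u)}}(f^{*}_w+\epsilon)$ is appreciable, i.e.\ when a uniform draw has nonnegligible chance of landing in the low-fidelity tail, which is the regime the later estimation results will need to address.
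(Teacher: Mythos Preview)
Your proposal is correct and follows essentially the same approach as the paper: take the complement, factor using independence of the $Q$ uniform draws to obtain $1-(1-F_i^{\text{(u)}}(f^{*}_w+\epsilon))^{Q}$, and then apply $(1-x)^Q\le e^{-Qx}$. The paper's proof is identical in structure, just more terse.
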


\noindent\textbf{Uniform Incremental Strategy:} In this strategy, we sample $n\le q$ samples uniformly $\lfloor\log(Q)\rfloor$ times. Then using each of them as centers we sample $\lfloor\frac{q}{n}\rfloor$ examples and query them. Let the cdfs induced by each of the centers through Gaussian sampling be denoted by $F_i^{\mathcal{N}_{j,k}}(.)$, where $j$ denotes the iteration number that goes up to $\lfloor\log(Q)\rfloor$ and $k$ the $k^{\text{th}}$ sampled prototype/center.
\begin{lemma}
    \label{lem3}
    Given the above notation we can lower bound the probability of unifI in a region $i$,
\begin{align}
    &P[\hat{f}^{*}_i-f^{*}_w\le \epsilon]\ge \nonumber\\&1-\exp{\left(-\max\limits_{j\in \{1,...,\lfloor\log(Q)\rfloor\}, k\in \{1,...,n\}}\left\lfloor\frac{q}{n}\right\rfloor F_i^{\mathcal{N}_{j,k}}(f^{*}_w+ \epsilon)\right)}
\end{align}
\end{lemma}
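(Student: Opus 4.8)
The plan is to reduce the analysis of unifI to the single-Gaussian argument already used for the uniform strategy in Lemma~\ref{lem2}, and then optimize over the choice of Gaussian. The key observation is that $\hat{f}^{*}_i$ is the minimum fidelity over \emph{all} examples queried by unifI in region $i$, taken across every iteration $j\in\{1,\dots,\lfloor\log(Q)\rfloor\}$ and every sampled center $k\in\{1,\dots,n\}$. In particular, for any fixed pair $(j,k)$ the $\lfloor q/n\rfloor$ examples drawn from $\mathcal{N}(\bm{r_k},\sigma^2 I)$ form a subset of all queries, so $\hat{f}^{*}_i$ is at most the minimum fidelity among just those examples. Hence the event that some draw from this single Gaussian attains fidelity $\le f^{*}_w+\epsilon$ is contained in the event $\hat{f}^{*}_i\le f^{*}_w+\epsilon$.

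First I would condition on the realized prototype centers, so that each Gaussian $\mathcal{N}(\bm{r_k},\sigma^2 I)$ is fixed and its $\lfloor q/n\rfloor$ draws are i.i.d. By the definition of $F_i^{\mathcal{N}_{j,k}}$, each draw independently has fidelity $\le f^{*}_w+\epsilon$ with probability $F_i^{\mathcal{N}_{j,k}}(f^{*}_w+\epsilon)$. Exactly as in the proof of Lemma~\ref{lem2}, the probability that none of the $\lfloor q/n\rfloor$ draws lands at or below $f^{*}_w+\epsilon$ is $\bigl(1-F_i^{\mathcal{N}_{j,k}}(f^{*}_w+\epsilon)\bigr)^{\lfloor q/n\rfloor}$, and applying $1-x\le e^{-x}$ gives
\begin{align*}
P\bigl[\hat{f}^{*}_i-f^{*}_w\le\epsilon\bigr] \;\ge\; 1-\Bigl(1-F_i^{\mathcal{N}_{j,k}}(f^{*}_w+\epsilon)\Bigr)^{\lfloor q/n\rfloor} \;\ge\; 1-\exp\Bigl(-\lfloor q/n\rfloor\, F_i^{\mathcal{N}_{j,k}}(f^{*}_w+\epsilon)\Bigr),
\end{align*}
where the first inequality combines the event inclusion noted above with monotonicity of probability.

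Since this lower bound holds for \emph{every} admissible pair $(j,k)$ (with $n=\min(2^j,q)$ determined by $j$), the tightest such bound is obtained by maximizing the right-hand side over $(j,k)$. Because $1-e^{-x}$ is increasing in $x$, we have $\max_{j,k}\bigl(1-e^{-a_{j,k}}\bigr)=1-e^{-\max_{j,k}a_{j,k}}$, so pushing the maximum inside the exponent onto $\lfloor q/n\rfloor\,F_i^{\mathcal{N}_{j,k}}(f^{*}_w+\epsilon)$ yields precisely the claimed bound.

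I expect the main technical care to lie in the conditioning step: the cdfs $F_i^{\mathcal{N}_{j,k}}$ are tied to the randomly drawn centers $\bm{r_k}$, so the i.i.d. argument is cleanest when stated conditionally on those centers, with the bound then read off for the induced (center-dependent) cdfs. A secondary, deliberate point is that the proof discards information—using only the draws from a single best Gaussian rather than pooling all queries—which sidesteps any need to reason about dependence across Gaussians and across iterations, at the cost of a valid but possibly loose lower bound.
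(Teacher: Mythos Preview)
Your argument is correct and reaches the same final bound, but the route differs from the paper's. The paper starts from the full product decomposition
\[
P[\hat{f}^{*}_i-f^{*}_w\le \epsilon]=1-\prod_{j=1}^{\lfloor\log(Q)\rfloor}\prod_{k=1}^{n}\bigl(1-F_i^{\mathcal{N}_{j,k}}(f^{*}_w+\epsilon)\bigr)^{\lfloor q/n\rfloor},
\]
which uses independence across \emph{all} Gaussians and iterations, then applies $(1-x)\le e^{-x}$ to obtain a \emph{sum} in the exponent, and only afterward weakens that sum to the single-term maximum. Your approach instead fixes one pair $(j,k)$, uses event inclusion (the overall minimum is at most the minimum over that Gaussian's draws), and optimizes at the end. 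What this buys you is a cleaner argument requiring only i.i.d.\ sampling within a single Gaussian rather than full cross-Gaussian independence; what the paper's route buys is the tighter intermediate bound with the double sum in the exponent, which it records before discarding down to the max. Your remark about conditioning on the realized centers is also well taken and arguably more careful than the paper's presentation on that point.
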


The above expression conveys the insight that if we find a good prototype $\bm{r_{j,k}}$ (i.e. close to $f^{*}_i$) then $F_i^{\mathcal{N}_{j,k}}(f^{*}_w+ \epsilon)$ will be high, leading to a higher (i.e., better) lower bound than in the uniform case. %Intuitively, if we find a good prototype, exploring the region around it should be beneficial to find a good estimate of $f^{*}_i$. 

\noindent\textbf{Adaptive Incremental Strategy:} This strategy %is possibly the most complex strategy, where we 
explores \emph{adaptively} in more promising areas of the input space, unlike the other two strategies. As with unifI, let the cdfs induced by each of the centers through Gaussian sampling be denoted by $F_i^{\mathcal{N}_{j,k}}(.)$, where $j$ denotes the iteration number that goes up to $\lfloor\log(Q)\rfloor$ and $k$ the $k^{\text{th}}$ sampled prototype for a given $n$.

\begin{lemma}
    \label{lem4}
    Given the above notation and assuming w.l.o.g. $F_i^{\mathcal{N}_{j,k}}(.)\le F_i^{\mathcal{N}_{j,k+1}}(.)$ $\forall j\in \{1,...,\lfloor\log(Q)\rfloor\}, k\in \{1,...,n-1\}$ i.e., the first prototype produces the worst estimates of the minimum fidelity, while the $n^{\text{th}}$ prototype produces the best, we can lower bound the probability of adaptI in a region $i$,
\begin{align}
    &P[\hat{f}^{*}_i-f^{*}_w\le \epsilon] \ge \nonumber\\&1-\exp{\left(-\max\limits_{j\in \{1,...,\lfloor\log(Q)\rfloor\}}\left\lfloor\frac{(n-1)q}{n\log n}\right\rfloor F_i^{\mathcal{N}_{j,n}}(f^{*}_w+ \epsilon)\right)}
\end{align}
\end{lemma}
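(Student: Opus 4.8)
The plan is to follow the same template established in the proofs of Lemmas \ref{lem2} and \ref{lem3}: lower-bound the probability of success by restricting attention to the Gaussian samples drawn around a single, well-chosen prototype, and then invoke the elementary inequality $1-(1-p)^N \ge 1 - e^{-Np}$. Since $\hat{f}^{*}_i$ is the minimum fidelity over \emph{all} examples queried in region $i$, the event that at least one sample drawn around any fixed prototype falls at or below $f^{*}_w + \epsilon$ already implies $\hat{f}^{*}_i - f^{*}_w \le \epsilon$. Discarding all other samples can only decrease the chance of hitting such a low-fidelity point, so any single-prototype bound is a valid lower bound on $P[\hat{f}^{*}_i - f^{*}_w \le \epsilon]$. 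The whole game is therefore to pick the prototype whose accumulated sample count $N$ and per-sample hit probability $F_i^{\mathcal{N}_{j,k}}(f^{*}_w+\epsilon)$ together give the strongest bound.

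Guided by the ordering hypothesis $F_i^{\mathcal{N}_{j,k}} \le F_i^{\mathcal{N}_{j,k+1}}$, I would single out the $n$-th (best) prototype, whose per-sample hit probability is $F_i^{\mathcal{N}_{j,n}}(f^{*}_w+\epsilon)$. Next I would argue that this prototype is retained through every halving round of the inner loop: because it stochastically dominates the others in producing low fidelities, its minimum-fidelity example is (under the ordering) the smallest, so it lies among the $\lceil m/2\rceil$ prototypes with the lowest mfe that adaptI keeps at each step. Granting survival, the best prototype receives fresh i.i.d.\ Gaussian draws at each of the $\lceil \log n\rceil$ inner rounds, with $m_t = \lceil n/2^{t-1}\rceil$ prototypes active at round $t$ and $\lfloor q/(m_t\lceil \log n\rceil)\rfloor$ draws allotted to each surviving prototype.

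The count of samples landing around the best prototype is then $N = \sum_{t=1}^{\lceil \log n\rceil}\lfloor q/(m_t\lceil\log n\rceil)\rfloor$, and summing the geometric series $\sum_t 1/m_t \approx \sum_t 2^{t-1}/n = (n-1)/n$ gives $N \approx \lfloor (n-1)q/(n\log n)\rfloor$, exactly the count appearing in the statement. Treating these $N$ draws as i.i.d.\ from the $n$-th Gaussian, each independently at or below $f^{*}_w+\epsilon$ with probability $F_i^{\mathcal{N}_{j,n}}(f^{*}_w+\epsilon)$, yields $P[\hat{f}^{*}_i - f^{*}_w \le \epsilon] \ge 1 - (1 - F_i^{\mathcal{N}_{j,n}}(f^{*}_w+\epsilon))^{N} \ge 1 - \exp(-N\,F_i^{\mathcal{N}_{j,n}}(f^{*}_w+\epsilon))$. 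Finally, since this holds for every outer iteration $j$ and discarding iterations only weakens the bound, I would take the maximum over $j \in \{1,\dots,\lfloor\log Q\rfloor\}$, recovering the claimed inequality.

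The step I expect to be the main obstacle is the survival argument. The selection in adaptI is driven by the \emph{empirical} minimum-fidelity examples, whereas the ordering hypothesis only controls the \emph{CDFs} $F_i^{\mathcal{N}_{j,k}}$; stochastic dominance does not pathwise guarantee that the best prototype attains the lowest mfe in every realization, and whether it survives round $t$ depends on the very samples drawn in that round, which threatens the clean i.i.d.\ accounting across rounds. I would handle this by reading the ``w.l.o.g.'' hypothesis as asserting an ordering strong enough that the best prototype is always retained (or, more carefully, by coupling the draws so that the $n$-th prototype's mfe is dominated at each round), and I would check that the floor functions in the per-round allotment do not erode the geometric-series total below $\lfloor (n-1)q/(n\log n)\rfloor$.
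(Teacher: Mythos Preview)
Your proposal is correct and follows essentially the same route as the paper: compute the total number of samples allotted to the best prototype (the paper packages this as $\lfloor (2^v-1)q/(n\log n)\rfloor$ with $v=\log n$ for the $n$-th prototype, which is exactly your geometric-series sum), drop all other prototypes and outer iterations to get a single-term lower bound, and apply $(1-x)^N\le e^{-Nx}$. The paper writes the full product over $j,k$ first and then discards all but the $(j,n)$ term, while you restrict to the best prototype from the outset; the logical content is identical.

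Your worry about the survival step is exactly the soft spot the paper also acknowledges: it handles it not by a pathwise argument but by an explicit simplifying assumption (stated in a footnote) that the ordering of the cdfs translates deterministically into the ordering of empirical minima at each halving round. So your reading of the ``w.l.o.g.'' hypothesis as licensing deterministic survival is precisely how the paper uses it.
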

We see above that we \emph{sample exponentially more around the most promising prototypes} (see Lemma \ref{lem4} proof in Appendix), unlike the uniform strategies which do not adapt. Hence, in practice we are likely to estimate $f^{*}_w$ more accurately with adaptI especially in high dimensions.

\noindent\textbf{Remark:} It is easy to see (when the cdfs $F_i(f_w^*+\epsilon)\neq 0$) that for all the three methods asymptotically (i.e., as $Q\rightarrow \infty$) the lower bound on $P[\hat{f}^{*}_i-f^{*}_w\le \epsilon]$ approaches 1 at exponential rate for arbitrarily small $\epsilon$, which is reassuring as it implies that we should certify correctly a region given enough number of queries. For smaller $Q$, as mentioned before, the convergence will heavily depend on the specifics of each sampling strategy. This is also seen in the experiments (see Tables \ref{tab:syn-bounds} and \ref{tab:imagenet-bounds}). Moreover, $F_i(f_w^*+\epsilon)= 0$ is unlikely to happen in practice as can be surmised from Proposition \ref{prop1} in the Appendix.
%\begin{proposition}
%\label{prop1}
 % $F_i(f_w^*+\epsilon)= 0$ iff all sets of inputs in region $i$ corresponding to fidelities in $[f^*_w,f^*_w+\epsilon]$ have measure zero w.r.t. the sampling densities that are non-zero everywhere in region $i$.
  %\vspace{-.2cm}
%\end{proposition}
%The densities are non-zero since, we sample using Uniform for unif and Gaussians for unifI and adaptI in each bounded region. 
Now we can also lower bound equation \ref{eq:main} for each strategy.
\begin{theorem}
\label{thm1}
Based on Lemmas \ref{lem1}, \ref{lem2}, \ref{lem3} and \ref{lem4}
we have,
\begin{align}
&P[\hat{f}^{*}_w-f^{*}_w\le
\epsilon]\ge \\
&\begin{cases}
1-\min\limits_{i\in [[c]]} \exp{(-QF_{w_i}^{\text{(u)}}(f^{*}_w+ \epsilon))} \nonumber\\
\text{unif}\\
1-\min\limits_{i\in [[c]]} \exp{\left(-\max\limits_{j\in [[\lfloor\log(Q)\rfloor]], k\in [[n]]}\left\lfloor\frac{q}{n}\right\rfloor F_{w_i}^{\mathcal{N}_{j,k}}(f^{*}_w+ \epsilon)\right)}  \nonumber\\
\text{unifI}\\
1-\min\limits_{i\in [[c]]} \exp{\left(-\max\limits_{j\in [[\lfloor\log(Q)\rfloor]]}\left\lfloor\frac{(n-1)q}{n\log n}\right\rfloor F_{w_i}^{\mathcal{N}_{j,n}}(f^{*}_w+ \epsilon)\right)}\\
\text{adaptI}
\end{cases}
\end{align}
\end{theorem}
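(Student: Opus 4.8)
\textbf{Proof plan for Theorem \ref{thm1}.}
The plan is to simply combine the per-region bounds of Lemmas \ref{lem2}, \ref{lem3}, and \ref{lem4} with the decomposition established in Lemma \ref{lem1}. Lemma \ref{lem1} gives the inequality
\begin{equation}\label{eq:startpoint}
    P\bigl[\hat{f}^{*}_w-f^{*}_w \leq \epsilon\bigr] = 1 - \prod_{i=1}^c P\bigl[\hat{f}^{*}_{w_i,w_{i-1}}-f^{*}_w > \epsilon\bigr],
\end{equation}
so the whole problem reduces to upper bounding each factor $P[\hat{f}^{*}_i - f^*_w > \epsilon] = 1 - P[\hat{f}^{*}_i - f^*_w \leq \epsilon]$ in the product. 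First I would rewrite each of Lemmas \ref{lem2}--\ref{lem4}, which are stated as lower bounds on $P[\hat{f}^{*}_i - f^*_w \leq \epsilon]$, as the corresponding \emph{upper} bound on the complementary probability $P[\hat{f}^{*}_i - f^*_w > \epsilon]$; in each case the upper bound is exactly the $\exp(-\,\cdots\,)$ term appearing in that lemma (restoring the region subscript $w_i$ that was suppressed for notational convenience just before the three strategy lemmas).

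Next I would substitute these per-factor upper bounds into \eqref{eq:startpoint}. Taking the uniform strategy as the representative case, each factor is bounded by $\exp(-Q F_{w_i}^{\text{(u)}}(f^*_w+\epsilon))$, so
\begin{equation}\label{eq:prodbound}
    \prod_{i=1}^c P\bigl[\hat{f}^{*}_{w_i,w_{i-1}}-f^{*}_w > \epsilon\bigr] \leq \prod_{i=1}^c \exp\bigl(-Q F_{w_i}^{\text{(u)}}(f^*_w+\epsilon)\bigr).
\end{equation}
The one genuine inequality step is then to bound this product of $c$ terms, each in $[0,1]$, by its smallest factor: a product of numbers in $[0,1]$ is at most the minimum of those numbers, i.e. $\prod_{i=1}^c a_i \leq \min_{i} a_i$. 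This yields $\prod_{i=1}^c \exp(-Q F_{w_i}^{\text{(u)}}) \leq \min_{i\in[[c]]} \exp(-Q F_{w_i}^{\text{(u)}}(f^*_w+\epsilon))$, and substituting back into \eqref{eq:startpoint} gives exactly the claimed uniform-strategy bound $1 - \min_{i\in[[c]]}\exp(-QF_{w_i}^{\text{(u)}}(f^*_w+\epsilon))$. The unifI and adaptI cases are identical in structure, differing only in which $\exp(-\,\cdots\,)$ expression comes from the respective lemma (Lemma \ref{lem3} and Lemma \ref{lem4}), so the same two moves --- complementation and the product-to-minimum bound --- deliver all three branches.

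I do not anticipate a serious obstacle here, since the theorem is essentially a bookkeeping assembly of results already proven; the heavy lifting lives in Lemmas \ref{lem1}--\ref{lem4}. The only points requiring care are (i) making sure the bound on each factor of the product is an \emph{upper} bound so that the direction of the inequality is preserved when passing from \eqref{eq:prodbound} to the final min, and (ii) the elementary step $\prod_i a_i \le \min_i a_i$ for $a_i \in [0,1]$, which is what converts the product over all $c$ certified subregions into the worst-region minimum. Intuitively this min is the right quantity: the overall estimate can only be accurate if \emph{every} certified subregion was estimated accurately, so the failure probability is dominated by the single subregion in which the sampling strategy did the poorest job (the largest $\exp(-\,\cdots\,)$, equivalently the smallest effective cdf mass $F_{w_i}(f^*_w+\epsilon)$).
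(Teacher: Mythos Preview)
Your formal argument is correct and is exactly the intended assembly: the paper does not even write out a separate proof of Theorem~\ref{thm1}, treating it as immediate from Lemmas~\ref{lem1}--\ref{lem4}, and your two moves (complement each per-region lower bound, then use $\prod_i a_i \le \min_i a_i$ for $a_i\in[0,1]$) are precisely what is needed. Whether one first applies the product-to-min step to the probabilities themselves (as in the inequality of Lemma~\ref{lem1}) or to the $\exp$ upper bounds (as you do) makes no difference to the final result.

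One caution: your closing intuitive paragraph has the interpretation reversed. The overall failure event $\{\hat f^*_w - f^*_w > \epsilon\}$ requires \emph{every} subregion's estimate to miss by more than $\epsilon$ (since $\hat f^*_w = \min_i \hat f^*_i$), so success in \emph{any one} subregion suffices for overall success. Consequently the failure probability is bounded by $\min_i \exp(-\,\cdots\,)$, which is the \emph{smallest} of the per-region failure bounds --- corresponding to the subregion with the \emph{largest} cdf mass $F_{w_i}(f^*_w+\epsilon)$, i.e.\ where sampling did the \emph{best} job, not the poorest. The math you wrote is fine; just flip the verbal explanation.
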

We also have the following proposition regarding the number of queries used by each strategy.
\begin{proposition}
unif, unifI and adaptI query the black box at most $Q$ times in any call to Algorithm \ref{algo:certify}.
\end{proposition}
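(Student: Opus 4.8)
The plan is to verify the query-budget claim separately for each of the three strategies, in each case summing the number of queries issued across all loop iterations and showing the total is bounded by $Q$. Throughout I set $q = \lfloor Q/\log(Q)\rfloor$ as defined in Algorithm~\ref{algo:certify}, and I will repeatedly use that $\lfloor a/b\rfloor \cdot b \le a$.

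For \textbf{unif}, the count is immediate: the strategy draws exactly $Q$ examples once and queries each, so the total is $Q$. For \textbf{unifI}, the outer loop runs for $i = 1,\dots,\lfloor\log(Q)\rfloor$ iterations. In iteration $i$ we set $n=\min(2^i,q)$, draw $n$ prototypes, and sample $\lfloor q/n\rfloor$ examples around each, issuing $n\cdot\lfloor q/n\rfloor \le n\cdot (q/n) = q$ queries per iteration. Summing over the at most $\lfloor\log(Q)\rfloor$ iterations gives at most $\lfloor\log(Q)\rfloor \cdot q \le \log(Q)\cdot \lfloor Q/\log(Q)\rfloor \le \log(Q)\cdot Q/\log(Q) = Q$, as required. (The prototype draws themselves are sampling steps, not $f$-queries, so they do not count against the budget; I would state this reading explicitly since it is what makes the bound work.)

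For \textbf{adaptI}, I would argue that the per-outer-iteration cost is again at most $q$. Fix an outer iteration $i$, which selects some $n$ prototypes and enters the inner loop over $j=1,\dots,\lceil\log(n)\rceil$. At the start the active prototype count is $m=n$, and $m$ is halved (via $m \leftarrow \lceil m/2\rceil$) at the end of each inner iteration. In inner iteration $j$ we sample $\lfloor q/(m\lceil\log n\rceil)\rfloor$ examples around each of the $m$ active prototypes, for a total of $m\cdot\lfloor q/(m\lceil\log n\rceil)\rfloor \le m\cdot q/(m\lceil\log n\rceil) = q/\lceil\log n\rceil$ queries in that inner iteration. Since the inner loop runs for $\lceil\log(n)\rceil$ iterations, the inner loop uses at most $\lceil\log n\rceil\cdot (q/\lceil\log n\rceil) = q$ queries, and summing over the at most $\lfloor\log(Q)\rfloor$ outer iterations again yields at most $q\lfloor\log(Q)\rfloor \le Q$.

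The main obstacle, and the step to treat carefully, is the adaptI accounting: because the per-inner-iteration sample size $\lfloor q/(m\lceil\log n\rceil)\rfloor$ is written in terms of the \emph{current} $m$ rather than the initial $n$, one must check that the bound $m\cdot\lfloor q/(m\lceil\log n\rceil)\rfloor \le q/\lceil\log n\rceil$ holds uniformly as $m$ shrinks through the halving schedule, so that the $\lceil\log n\rceil$ inner iterations telescope cleanly to $q$ regardless of how $m$ evolves. This uses only the elementary inequality $m\lfloor a/(m b)\rfloor \le a/b$ for positive integers, applied with $a=q$, $b=\lceil\log n\rceil$; the key point is that the $m$ in the numerator cancels the $m$ in the denominator \emph{before} taking the floor's upper bound, so the per-iteration contribution is controlled independently of $m$. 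I would also remark that the guard condition ``if $i2^i\le q$ then $n=2^i$, $k=i$ else $n=2^k$'' caps $n$ so that $\log n$ stays $O(\log Q)$ and the outer count $\lfloor\log(Q)\rfloor$ remains valid; early termination on finding a violating example only decreases the count, preserving the bound.
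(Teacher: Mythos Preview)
Your proof is correct and follows essentially the same approach as the paper's: bound the per-iteration query count by $q$ (via $m\lfloor q/(m\lceil\log n\rceil)\rfloor \le q/\lceil\log n\rceil$ in the adaptI case) and sum over the $\lfloor\log Q\rfloor$ outer iterations. Your treatment is in fact more careful than the paper's in two places: you make explicit that prototypes are only sampled, not queried, and you note that the adaptI inner-iteration bound must hold uniformly as $m$ shrinks, whereas the paper's proof writes the total as a single product with a fixed $m$.
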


\section{Bound Estimation and Special Cases}
\label{sec:special}
In Section \ref{sec:ans}, we derived (with minimal assumptions) finite sample bounds on the probability of estimated and true minimum fidelities being close, which is directly related to correct certification. However, the cdfs $F_i(.)$ are generally unknown. In Section \ref{sec:special:est}, we discuss the estimation of our bounds, and we also provide alternative asymptotic bounds that are cdf-free.
In Section \ref{sec:special:char}, we first provide a partial characterization of $F_i(.)$ for a piecewise linear black box and then discuss settings where the trust region can be identified even more efficiently using our strategies.

\subsection{Bound Estimation}
\label{sec:special:est}
\noindent\textbf{Cdf $F_i(.)$ Estimation:}
An attractive property of our bounds is that the cdfs are all one-dimensional, irrespective of the dimensionality of the input space. Hence, it is efficient to estimate the corresponding (univariate) densities. %have to be estimated. \textcolor{blue}{Our bound estimation thus does not seem to suffer from the curse of dimensionality.}
More specifically, given fidelity values sampled %examples 
in a region by any of the three strategies, %and their corresponding fidelities, 
one can estimate a distribution of these fidelities using standard techniques such as kernel density estimation. The only challenge is that the point of evaluation $f^*_w + \epsilon$ is unknown because $f^*_w$ is unknown. We propose using $\hat{f}^*_w$ or $\theta$ as proxies for $f^*_w$. Since the bounds depend only on subsets that are certified, if we assume that these certifications are correct (i.e. $f^*_w\ge \theta$), then using $\hat{f}^*_w$ should provide (somewhat) optimistic bounds (since $\hat{f}^*_w\ge f^*_w$) while, $\theta$ will provide conservative ones. As we shall see in the experiments, both proxies are close.

\noindent\textbf{Asymptotic (Cdf-free) Bounds:} Rather than finite sample bounds that depend on cdfs $F_i$, one could instead take %analyze our approaches from 
an asymptotic ($Q \to \infty$) perspective and obtain results that are free of $F_i$. Extreme Value Theory (EVT) \citep{evt} is useful in this regard. Given our setting where the minimum fidelity $f^*_i$ in each region $i$ is finite, we can assume %and the minimum fidelity distribution is likely to be light tailed since we either sample from a uniform distribution or gaussians within a specified region the 
$F_i(f_i^*+\epsilon)\approx \eta\epsilon^{\kappa}$ as $\epsilon \to 0$ for some $\eta>0,~\kappa>0$ as is standard in EVT \citep{evt}. This would apply to all three strategies. We state an explicit asymptotic result for the case of i.i.d.~fidelity samples, as it naturally follows from EVT. This i.i.d.~case covers the unif strategy and an i.i.d.~version of unifI discussed in the Appendix. %Thus, the cdf would monotonically increase with increasing $\epsilon$, also increasing the chance of the minimum fidelity estimate lying within $\epsilon$ of the true minimum, which makes intuitive sense.
Here in addition to the empirical minimum fidelity $\hat{f}^*_i$, we also use the second-smallest empirical value, denoted as $\dhat{f}^*_i$. Then the result of \citet{haan1981estimation} (also re-derived in \citet{carvalho2011confidence}) implies the following. 
\begin{corollary}\label{cor:EVT}
For the unif and i.i.d.~unifI strategies, as $Q \to \infty$, %and some probability $p \in [0, 1]$, 
we have
%\begin{align}\label{eq:confIntEVT}
%    P\left[\hat{f}^*_i - f^*_i \leq \frac{\dhat{f}^*_i - \hat{f}^*_i}{(1-p)^{-1/\kappa} - 1} \right] = 1 - p
%\end{align}
\begin{align}\label{eq:probEVT}
    P\left[\hat{f}^*_i - f^*_i \leq \epsilon\right] = \left(1 + \frac{\dhat{f}^*_i - \hat{f}^*_i}{\epsilon}\right)^{-\kappa}.
\end{align}
\vspace{-.5cm}  
\end{corollary}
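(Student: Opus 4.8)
\textbf{Proof proposal for Corollary~\ref{cor:EVT}.}

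The plan is to recognize that the empirical minimum $\hat{f}^*_i$ of $Q$ i.i.d.\ fidelity samples is an extreme order statistic, and that its deviation from the true minimum $f^*_i$ is governed by the classical extreme value limit theory. First I would set up the notation: let $v_{(1)} \leq v_{(2)} \leq \dots \leq v_{(Q)}$ be the order statistics of the sampled fidelity values in region $i$, so that $\hat{f}^*_i = v_{(1)}$ and $\dhat{f}^*_i = v_{(2)}$ are the smallest and second-smallest values. Since we only need the lower tail, I would invoke the local assumption $F_i(f_i^* + \epsilon) \approx \eta \epsilon^{\kappa}$ as $\epsilon \to 0$, which states that the cdf of fidelities behaves like a power law with index $\kappa$ near its left endpoint $f^*_i$. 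This is exactly the regular-variation (domain-of-attraction) condition under which the minimum, suitably normalized, converges to a Weibull-type extreme value distribution.

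The key step is to apply the result of \citet{haan1981estimation} (as re-derived in \citet{carvalho2011confidence}). The idea there is that one can form a \emph{pivotal quantity} from the two smallest order statistics that is asymptotically free of the unknown scale parameter $\eta$ and the unknown endpoint $f^*_i$, depending only on the tail index $\kappa$. Concretely, as $Q \to \infty$ the normalized spacings $(v_{(1)} - f^*_i)$ and $(v_{(2)} - v_{(1)})$ become jointly distributed according to the limiting point-process of extremes, and the ratio $(\hat{f}^*_i - f^*_i)/(\dhat{f}^*_i - \hat{f}^*_i)$ converges to a nondegenerate limit whose distribution I can write explicitly. I would then compute $P[\hat{f}^*_i - f^*_i \leq \epsilon]$ by conditioning on the observed gap $\dhat{f}^*_i - \hat{f}^*_i$ and substituting the limiting law; carrying out the integration over the exponential spacings of the limiting Poisson process yields the closed form $\bigl(1 + (\dhat{f}^*_i - \hat{f}^*_i)/\epsilon\bigr)^{-\kappa}$, which is precisely equation~\eqref{eq:probEVT}.

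I expect the main obstacle to be the careful justification of the pivoting step: turning the two smallest order statistics into an estimate of the confidence level requires showing that, under the power-law tail condition with index $\kappa$, the joint limiting distribution of the two lowest normalized exceedances has the Gumbel/Weibull spacing structure, and that the ratio statistic has the stated $\bigl(1 + t^{-1}\bigr)^{-\kappa}$ survival form. Since the i.i.d.\ assumption makes the samples genuine order statistics, this reduces cleanly to the cited extreme-value results rather than requiring a fresh derivation — so the bulk of the work is in correctly matching our minimization setting (minimum fidelity, left endpoint) to the maximization convention typically used in \citet{haan1981estimation,carvalho2011confidence} by a sign flip $v \mapsto -v$, and in verifying that the power-law assumption $F_i(f_i^*+\epsilon)\approx \eta\epsilon^{\kappa}$ places $f^*_i$ in the relevant domain of attraction. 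The coverage of the unif and i.i.d.~unifI strategies then follows immediately, since in both cases the queried fidelity values within region $i$ are i.i.d.\ draws from a fixed fidelity distribution.
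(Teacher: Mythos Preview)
Your proposal is correct and follows essentially the same route as the paper: both invoke the endpoint-estimation result of \citet{haan1981estimation} (re-derived in \citet{carvalho2011confidence}) under the power-law tail assumption $F_i(f^*_i+\epsilon)\approx\eta\epsilon^{\kappa}$, applied to the i.i.d.\ fidelity samples produced by unif and i.i.d.~unifI. The only presentational difference is that the paper quotes the cited result directly in its confidence-interval form,
\[
P\Bigl[\hat{f}^*_i - f^*_i \leq \frac{\dhat{f}^*_i - \hat{f}^*_i}{(1-p)^{-1/\kappa} - 1}\Bigr] = 1 - p,
\]
and then simply sets the right-hand tolerance equal to $\epsilon$ and solves algebraically for $1-p$ to obtain \eqref{eq:probEVT}; you instead sketch the pivotal-ratio and Poisson-spacing argument that underlies the cited theorem before arriving at the same formula. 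Your extra EVT background is accurate but not required once the reference is invoked, and the paper's algebraic inversion step is the crisper way to land on the stated expression.
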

Corollary~\ref{cor:EVT} is reminiscent of Lemmas~\ref{lem2}--\ref{lem4} except that the region-specific minimum $f^*_i$ has taken the place of the overall minimum $f^*_w$. However, the two coincide for $i = i^* \in \argmin_i f^*_i$. For $i = i^*$, the right-hand side of \eqref{eq:probEVT} is a valid lower bound on the probability $P[\hat{f}^*_w - f^*_w \leq \epsilon]$ in Theorem~\ref{thm1}, as we discuss further in the Appendix. In our experiments, we estimate $i^*$ using the empirical minimum fidelities as $\hat{i} = \argmin_i \hat{f}^*_i$. %we regard the failure probability $p$ as given and let the error $\epsilon^{\mathrm{EVT}} = (\dhat{f}^*_i - \hat{f}^*_i) / ((1-p)^{-1/\kappa} - 1)$ be a function of $p$ as well as the gap $\dhat{f}^*_i - \hat{f}^*_i$ between the smallest and second-smallest observed values. Put another way, \eqref{eq:confIntEVT} implies a $(1-p)$-confidence interval $[\hat{f}^*_i - \epsilon^{\mathrm{EVT}}, \hat{f}^*_i]$ for the true minimum $f^*_i$. 
The exponent $\kappa$, as argued in \citet{haan1981estimation}, can be taken to be $\kappa = d/2$, and thus the bound %confidence interval 
is completely determined given the fidelity samples. %We provide a further simplification and interpretation of \eqref{eq:confIntEVT} in the Appendix.
\subsection{Special Cases}
\label{sec:special:char}
\noindent\textbf{Characterizing cdfs $F_i(.)$ for piecewise linear black box:} %In Appendix \ref{sec:cdfplb} we provide a (partial) characterization of the cdfs $F_i(.)$ for piecewise linear black box functions, which cover widely used models such as neural networks with ReLU activations \citep{hanin2019complexity}, trees and tree ensembles, including oblique trees \citep{murthy1994system} and model trees \citep{gama2004functional}. This characterization assumes a linear explanation function and a commonly defined fidelity function \citep{ans,mame}.
Several popular classes of models are piecewise linear or piecewise constant, for example neural networks with ReLU activations \citep{hanin2019complexity}, trees and tree ensembles, including oblique trees \citep{murthy1994system} and model trees \citep{gama2004functional}. We provide a partial characterization of the cdfs $F_i(.)$ for such piecewise linear black box functions $g:\R^d\rightarrow [0,1]$, a linear explanation function $e_{\bm{y}}:\R^d\rightarrow [0,1]$ estimated for the point $\bm{y}\in \R^d$, and the following fidelity function \citep{ans,linex,mame}:
\begin{align}
\label{eq:fid}
    f_{\vy}(\bm{x}) \triangleq 
     1-|g(\bm{x})-e_{\bm{y}}(\bm{x})|.
\end{align}
Assume that the black box $g$ has $t\le p$ linear pieces within the $i^{\text{th}}$ region $R_i$. %(we henceforth drop the index $i$). 
In the $s$th piece, $s=1,\dots,t$, $g$ can be represented as a linear function $g_s(\vx) = \vbeta_s^T \vx$, where $\vbeta_s \in \R^d$. Moreover, the $s$th piece is geometrically a polytope, which we denote as $\gP_{i,s} \subset \R^d$. The explanation $e_{\vy}(\vx) = \valpha_{\vy}^T \vx$ is linear throughout. Thus within the $s$th piece, the difference $\Delta_s(\vx) = g_s(\vx) - e_{\vy}(\vx)$ that determines the fidelity \eqref{eq:fid} is also linear, $\Delta_s(\vx) = (\vbeta_s - \valpha_{\vy})^T \vx$.

Let us first consider the unif strategy where examples are sampled uniformly from $R_i$. The distribution of fidelity values is a mixture of $t$ distributions, one corresponding to each linear piece of $g$:
\begin{equation}\label{eq:fid_mixture}
    F_i(\cdot) = \sum_{s=1}^t \pi_s F_{i,s}(\cdot), 
\end{equation}
where $\sum_{s=1}^t \pi_s = 1$. 
In the uniform case, the probability $\pi_s$ that the $s$th piece is active is given by the ratio of volumes $\pi_s = \vol(\gP_{i,s} \cap R_i) / \vol(R_i)$. The cdf $F_{i,s}$, or, equivalently, the corresponding probability density function (pdf), is largely determined by the pdf of $\Delta_s(\vx)$. The property of the latter pdf that is clearest to reason about is its support. The endpoints of the support can be determined by solving two linear programs, $\Delta_{s,\minmax} = \minmax_{\vx\in \gP_{i,s}\cap R_i} (\vbeta_s - \valpha_{\vy})^T \vx$. (The shape of the pdf %can be derived for example by first determining the corresponding cdf $F_{\Delta_s}$, which is again a ratio of volumes $F_{\Delta_s}(\delta) = \vol(\gP_{i,s} \cap R_i \cap \{\vx: (\vbeta_s - \valpha_{\vy})^T \vx \leq \delta\}) / \vol(\gP_{i,s} \cap R_i)$, 
is harder to determine; intuitively, the density at a value $\delta$ is proportional to the volume of the $\delta$-level set of $\Delta_s(\vx)$ intersected with the polytope, $\vol(\{\vx: (\vbeta_s - \valpha_{\vy})^T \vx = \delta\} \cap \gP_{i,s} \cap R_i)$.) Given the pdf of $\Delta_s(\vx)$, the absolute value operation in \eqref{eq:fid} corresponds to folding the pdf over the vertical axis, and the $1 - {}$ operation flips and shifts the result. Overall, we can conclude that $F_{i,s}$ is supported on an interval that is determined by $\Delta_{s,\min}$ and $\Delta_{s,\max}$. A larger difference vector $(\vbeta_s - \valpha_{\vy})$ in the $s$th piece will tend to produce larger $\Delta_{s,\min}$, $\Delta_{s,\max}$ in magnitude, and hence lower fidelities. The minimum fidelity $f_i^*$ corresponds to the largest $\lvert\Delta_{s,\min}\rvert$, $\lvert\Delta_{s,\max}\rvert$ over $s$. %The weight $\pi_s$ of the $s$th component in the mixture depends on the volume $\vol(\gP_{i,s} \cap R_i)$.

We now consider how the above reasoning changes for the unifI and adaptI strategies. First, instead of a single uniform distribution of examples, we have a mixture of Gaussians $\gN_{j,k}$ indexed by iteration number $j$ and prototype $k$. Hence \eqref{eq:fid_mixture} is augmented with summations over $j$ and $k$, and $\pi_s$, $F_{i,s}$ gain indices to become $\pi_s^{j,k}$, $F_{i,s}^{j,k}$. Second, instead of volumes, the weight $\pi_s^{j,k}$ is given by a ratio of probabilities under each Gaussian: $\pi_s^{j,k} = P_{\gN_{j,k}}(\gP_{i,s} \cap R_i) / P_{\gN_{j,k}}(R_i)$. Third, we now have multiple pdfs of $\Delta_s(\vx)$ to consider, one for each Gaussian $\gN_{j,k}$, and their shapes depend on how each Gaussian weights the points in $\gP_{i,s} \cap R_i$. What does not change however is the support $[\Delta_{s,\min}, \Delta_{s,\max}]$ of $\Delta_s(\vx)$, as this is a geometric quantity depending on the black box $g$ and explanation $e_{\vy}$ but not the distribution (uniform, $\gN_{j,k}$, or otherwise). Hence, the same statements above apply regarding the relationship between the the difference vectors $(\vbeta_s - \valpha_{\vy})$ and the range of fidelities, mediated by $\Delta_{s,\min}$, $\Delta_{s,\max}$.

\noindent\textbf{More Efficient Certification:} %In Appendix \ref{sec:mefcd} 
We now discuss how having a black box model that is Lipschitz or piecewise linear can further speed up our methods. In the Lipschitz case we can automatically (i.e. without querying) certify a region and set a non-trivial $lb$ value with additional speedups possible. In the piecewise linear case instead of a head start (i.e. higher $lb$) we could stop our search early.  

\noindent\textbf{1) Lipschitz Black Box:} Let the black box function be denoted by $g(.):\mathcal{R}^d\rightarrow \mathcal{R}$ and the explanation function by $e_{\bm{y}}(.):\mathcal{R}^d\rightarrow \mathcal{R}$, where the subscript $\bm{y}$ denotes that the explanation function was estimated at $\bm{y}\in \mathcal{R}^d$. Let us assume the explanation function is linear i.e., $e_{\bm{y}}(\bm{x})=\bm{\alpha}_{\bm{y}}^T\bm{x}$ (viz. in LIME and variants), where $\bm{\alpha}_{\bm{y}}\in \mathcal{R}^d$. Let the (in)fidelity function (complement of the fidelity function) for some explanation function $e_{\bm{y}}(.)$ be then defined as,
\begin{align}
\label{eq:infid}
    \bar{f}(\bm{x})\triangleq |g(\bm{x})-e_{\bm{y}}(\bm{x})|
\end{align}
Now for certification, we would want to find a rectangular region $R$ in the input space such that $\bar{f}(\bm{x})\le \bar{\theta}~\forall \bm{x}\in R$, where $\bar{\theta}$ is our certification level (complement of $\theta$) and $R$ is symmetric around $\bm{x}$. Given that the black box is $l$-lipschitz we would have,
\begin{align}
    \label{eq:lip}
    |g(\bm{x})-g(\bm{y})|\le l||\bm{x}-\bm{y}||
\end{align}
for some $l>0$. Assume for simplicity that $g(\bm{x})=e_{\bm{x}}(\bm{x})$, i.e., the explanation function perfectly mimics the black box if it is estimated at the same input $\bm{x}$. In other words, infidelity is zero if the estimation is at the same example. Even if we allow for some error it does not fundamentally change the results\footnote{Final bound is shifted proportional to the error.}, but our simplifying assumption conveys the main idea more clearly in our opinion.

To certify a region $R$ around $\bm{x}$ we now want to find $\forall \bm{y}\in R$, $|g(\bm{y})-e_{\bm{x}}(\bm{y})|\le \bar{\theta}$. Upper bounding the left hand side and forcing it to be $\le \bar{\theta}$ will give us a conservative estimate of the region around $\bm{x}$ which will be certified without having to query it. Let us thus now upper bound this quantity,
\begin{align}
    |g(\bm{y})-e_{\bm{x}}(\bm{y})|&=|e_{\bm{x}}(\bm{x})- e_{\bm{x}}(\bm{y})+g(\bm{y})-e_{\bm{x}}(\bm{x})|\nonumber\\
    &\le |e_{\bm{x}}(\bm{x})- e_{\bm{x}}(\bm{y})|+|g(\bm{y})-e_{\bm{x}}(\bm{x})|\nonumber\\ 
    &= |\bm{\alpha}_{\bm{x}}^T(\bm{x}-\bm{y})|+|g(\bm{y})-g(\bm{x})|\nonumber\\
    &\le ||\bm{\alpha}_{\bm{x}}||\cdot||\bm{x}-\bm{y}||+l||\bm{x}-\bm{y}||\nonumber\\
    &=||\bm{x}-\bm{y}||\left(||\bm{\alpha}_{\bm{x}}||+l\right)
\end{align}
The derivation mainly uses Cauchy-Schwartz inequality and that $g(.)$ is $l$-lipschitz.
Therefore, we can now readily obtain a certification region $R_{\bm{x}}$ which is a hypercube around $\bm{x}$ such that
\begin{align}
    \label{eq:lipreg}
    R_{\bm{x}} \subseteq \left\{y, \text{ where } ||\bm{x}-\bm{y}||\le \frac{\bar{\theta}}{||\bm{\alpha}_{\bm{x}}||+l}\right\} 
\end{align}
The region $R_{\bm{x}}$ can be used to set the initial lower bounds when calling Algorithm \ref{algo:Ecertify}, rather than the typical zero. Thus, we already would have a non-trivial region that is certified before we even make a single query for reasonable values of $\bar{\theta}$.

Interestingly, one could potentially apply this approach in an alternating fashion where once a certain region is certified by our algorithm we could try to estimate how far beyond it, again conservatively, will the infidelity not worsen below $\bar{\theta}$. However, this will have to be done more carefully as our algorithm may not have certified a region with certainty and hence errors may cascade.

\noindent\textbf{2) Piecewise Linear Black Box:}
In general, knowing that the black box is piecewise linear with say $p$ pieces may not help boost our certification algorithm. However, if the fidelity is computed in a way which corresponds to the number of pieces then that can potentially be very useful. For instance, again assume the explanation function is linear $e_{\bm{y}}(\bm{x})=\bm{\alpha}_{\bm{y}}^T\bm{x}$, and that fidelity in this case is computed as the correlation between the explanation and the corresponding linear piece $\bm{\beta}_{\bm{x}}$ in the black box function (which can be estimated) as follows: 
$f(\bm{x})\triangleq\frac{|\bm{\beta}_{\bm{x}}^T\bm{\alpha}_{\bm{y}}|}{||\bm{\beta}_{\bm{x}}||\cdot ||\bm{\alpha}_{\bm{y}}||}$.
In such a case, we would know that the maximum number of fidelities we would encounter for an explanation would be utmost $p$. So at any stage in our algorithm if we encounter $p$ different values for fidelity and if all of them are $\ge \theta$, then we would know that the entire input space is certified and can stop our search.

%\noindent\textbf{1) Lipschitz Black Box:}
%Consider same notation as before. Assume the black box $g(.)$ is $l$-Lipschitz, then with a linear explanation function $e_{\bm{x}}(\bm{y})=\bm{\alpha}_{\bm{x}}^T\bm{y}$, an infidelity threshold $\bar{\theta}$ where infidelity is a complement of fidelity (i.e. $1-$ equation \ref{eq:fid}) we can readily obtain a certification region $R_{\bm{x}}$ (proof in Appendix) which is a hypercube around $\bm{x}$ such that
%$R_{\bm{x}} \subseteq \left\{y, \text{ where } ||\bm{x}-\bm{y}||\le \frac{\bar{\theta}}{||\bm{\alpha}_{\bm{x}}||+l}\right\}$.
%The region $R_{\bm{x}}$ can be used to set the initial lower bounds when calling Algorithm \ref{algo:Ecertify}, rather than the typical zero. Thus, we already would have a non-trivial region that is certified before we even make a single query for reasonable values of $\bar{\theta}$. Interestingly, one could potentially apply this approach in an alternating fashion where once a region is certified by our algorithm we could try to estimate how far beyond it, again conservatively, will the infidelity not worsen below $\bar{\theta}$. However, this will have to be done with care as we may not have certified a region with certainty and hence errors may cascade.

\begin{table*}[htbp]
\centering
\caption{\small{Synthetic results for $x=[0]^d$, $Z=10$, $\theta=0.75$, explanation is slope $0.75$ hyperplane and optimal half-width $w$ is $\frac{1}{d}$. Standard errors for $w$, bounds computed using Theorem \ref{thm1} and EVT bounds for unif and unifI are in Tables \ref{tab:synstderr}, \ref{tab:syn-bounds} and \ref{tab:syn-evt-bounds} respectively in the Appendix.}
}
\vspace{.1cm}
\scriptsize
   \begin{tabular}{|c|c|c|c|c|c|c|c|c|c|}
  \hline
	\multirow{2}{*}{$d$} & \multirow{2}{*}{$Q$}
       & \multicolumn{2}{c|}{unif} & \multicolumn{2}{c|}{unifI} & \multicolumn{2}{c|}{adaptI} &  \multicolumn{2}{c|}{ZO$^+$} \\
       & & $w$ & Time (s)& $w$ & Time (s)& $w$ & Time (s)& $w$ & Time (s)\\\hline
       \multirow{4}{*}{$1$} & $10$ & $1$ & $.001$ & $1$ & $.001$& $1$ &$.001$ & $1$& $.012$\\
       & $10^2$ & $1$ & $.006$ & $1$& $.004$&  $1$&$.002$ &$1$ & $1.221$\\
       & $10^3$ & $1$& $.055$& $1$& $.041$&  $1$&$.026$& $1$& $1.724$\\
       & $10^4$ & $1$& $.53$& $1$& $.418$&  $1$& $.189$& $1$& $1.641$\\
       \hline
       \multirow{4}{*}{$10$} & $10$ & $.06$& $.001$& $.037$& $.001$& $.142$& $.001$ & $.3$& $.012$\\
       & $10^2$ & $.082$& $.003$& $.06$& $.007$& $.08$& $.003$ & $.1$& $.125$\\
       & $10^3$ & $.09$& $.036$& $.085$& $.049$& $.11$& $.044$& $.1$& $1.354$\\
       & $10^4$ & $.1$& $.363$& $.117$& $.615$& $.1$& $.551$ & $.1$& $14.944$\\
       \hline
       \multirow{4}{*}{$10^2$} & $10$ & $.012$& $.001$& $.006$& $.001$& $.007$& $.001$&$.05$ & $.031$\\
       & $10^2$ & $.012$& $.005$& $.007$& $.012$& $.008$& $.005$&$.025$ &$.3$ \\
       & $10^3$ & $.011$& $.054$& $.009$& $.158$ & $.01$& $.09$& $.012$&$4.072$ \\
       & $10^4$ & $.01$& $.632$& $.01$& $1.692$& $.01$& $.51$& $.009$&$55.87$\\
       \hline
       \multirow{4}{*}{$10^3$} & $10$ & $5\times 10^{-4}$& $.003$& $3\times 10^{-4}$& $.004$& $5\times 10^{-4}$&$.002$ &$.037$ & $.307$\\
       & $10^2$ & $6\times 10^{-4}$& $.011$& $.001$& $.073$& $6\times 10^{-4}$&$.044$ & $.012$& $2.579$\\
       & $10^3$ & $8\times 10^{-4}$& $.077$& $.001$& $1.074$& $8\times 10^{-4}$&$.511$ & $.003$&$28.335$ \\
       & $10^4$ & $.001$& $.588$& $.001$& $13.786$& $9\times 10^{-4}$& $5.097$& $.001$& $288.523$\\
       \hline
       \multirow{4}{*}{$10^4$} & $10$ & $6.3\times 10^{-5}$& $.012$& $5.1\times 10^{-5}$& $.098$& $5.8\times 10^{-5}$& $.021$&$.006$ & $3.76$\\
       & $10^2$ & $6.6\times 10^{-5}$& $.072$ & $7.7\times 10^{-5}$& $1.187$& $7.8\times 10^{-5}$ & $.43$&$.004$ & $34.602$\\
       & $10^3$ & $8.3\times10^{-5}$& $.771$& $8.4\times 10^{-5}$& $12.452$&$8.5\times 10^{-5}$ & $7.91$ &$8.4\times 10^{-4}$ & $391.494$\\
       & $10^4$ & $8.9\times 10^{-5}$& $4.83$& $9.1\times 10^{-5}$&$112.58$ &$9.4\times 10^{-5}$ & $88.342$& $9.3\times 10^{-5}$& $4384.76$\\
       \hline
\end{tabular}
\label{tab:syn}
\vspace{-.1cm}
\end{table*}

\begin{figure*}[ht!]
    \centering
    \includegraphics[width=0.8\textwidth]{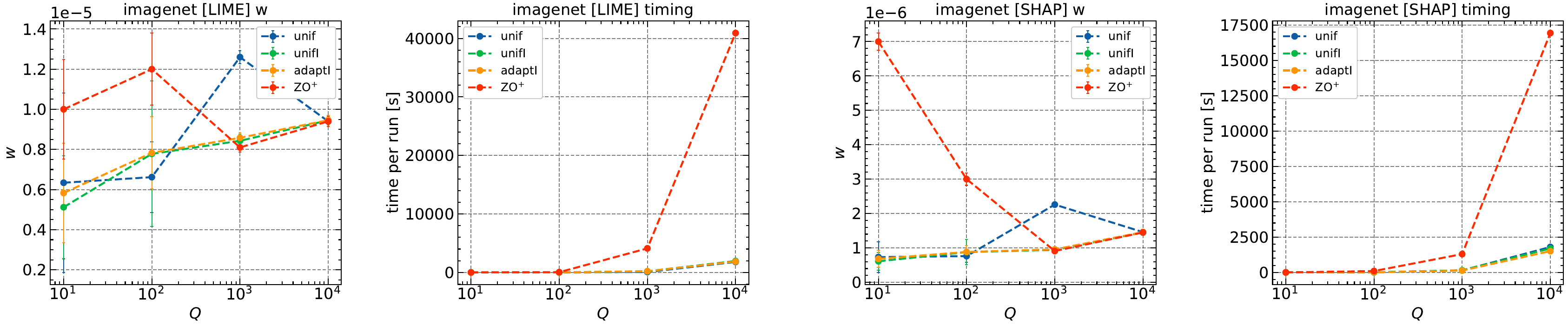}
    \includegraphics[width=0.8\textwidth]{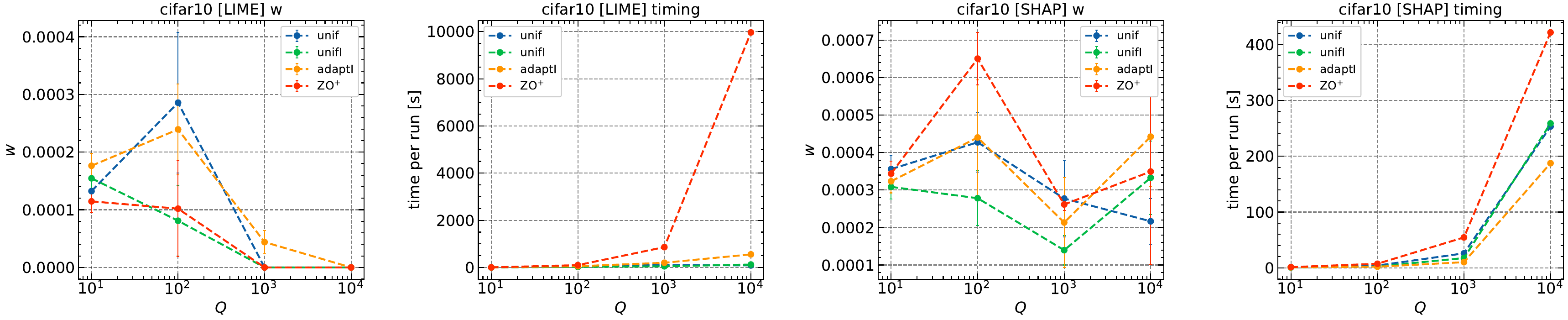}
    \includegraphics[width=0.8\textwidth]{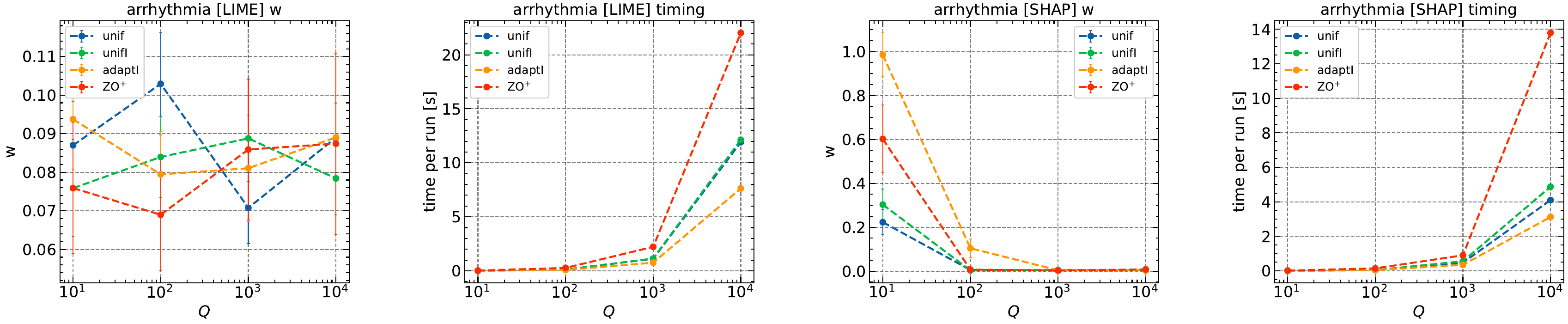}
    \includegraphics[width=0.8\textwidth]{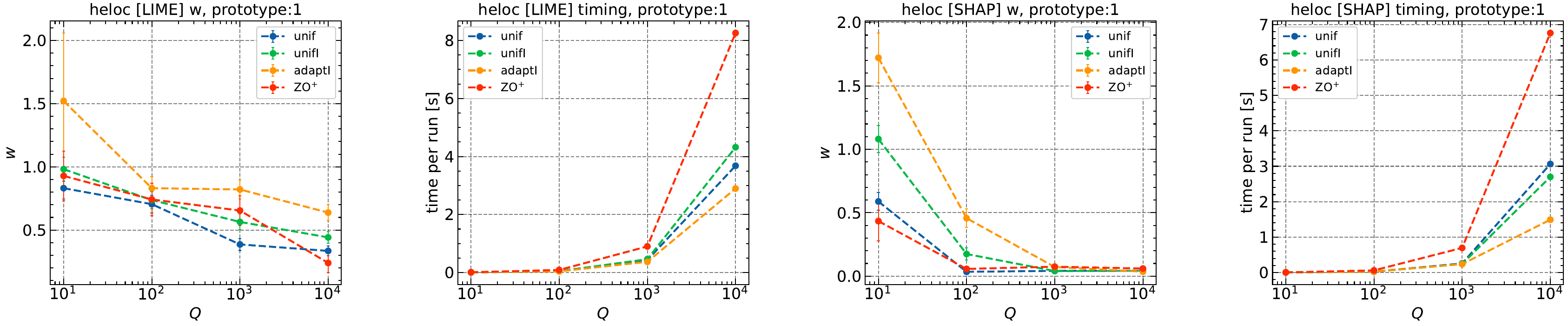}
    \includegraphics[width=0.8\textwidth]{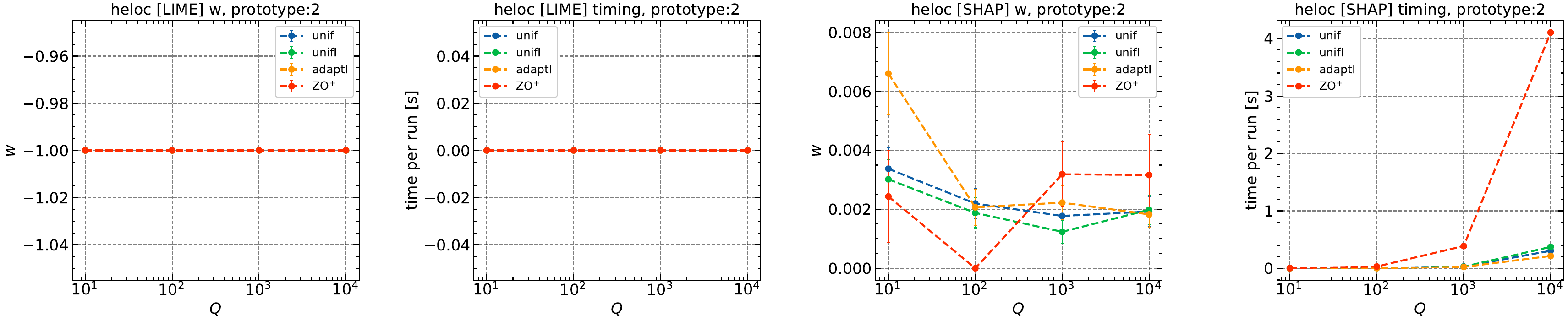}
    \caption{\small{Each row corresponds to a dataset (Row \#: 1-ImageNet, 2-CIFAR10, 3-Arrhythmia, 4,5-HELOC). First two columns are LIME half-width and timing results, while the last two columns are the same for SHAP. Our methods are significantly faster than ZO$^+$, while still converging to similar $w$ in most cases. It seems unif, unifI and adaptI are best for low ($100$s or lower), intermediate ($\approx 1000$) and high dimensions ($10000$s) respectively. Trusting the converged upon half-widths, one can also compare XAI methods as discussed below.}
    }
    \label{fig:perf}
    \vspace{-.3cm}
\end{figure*}

\begin{figure}[htbp]
%\vspace{-1cm}
    
    \includegraphics[width=0.45\columnwidth]{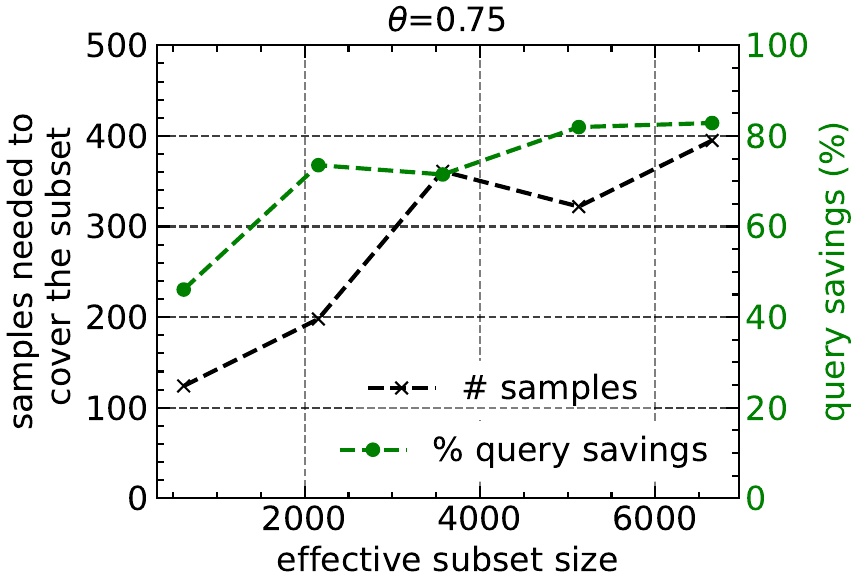}
    \includegraphics[width=0.48\textwidth]{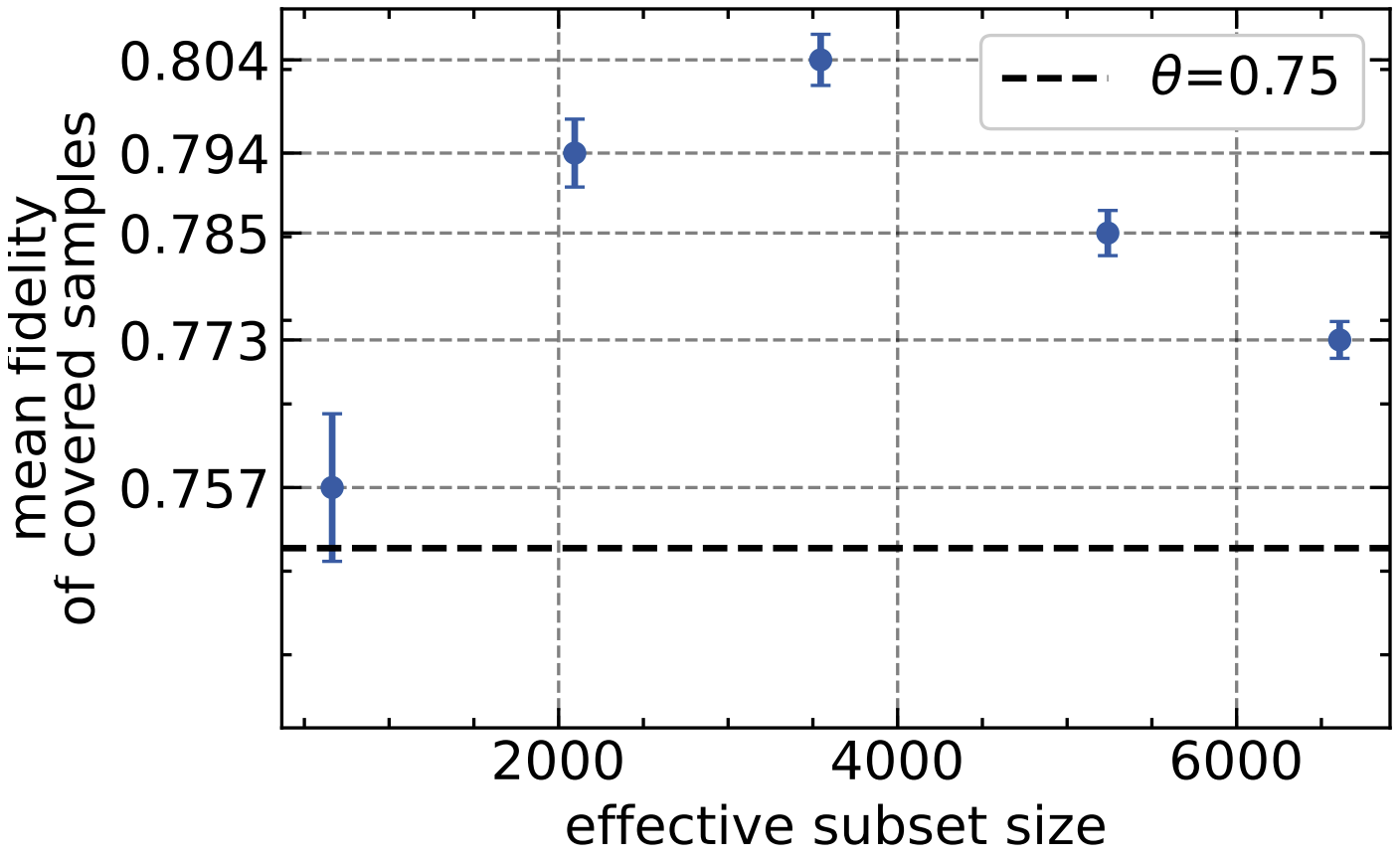}
    \caption{\small{
    Left we see sample and query savings using adaptI on HELOC dataset for LIME, where $Q=1000$. With an order of magnitude less samples and with less than 20\% queries of those needed by LIME we can find explanations for the dataset. Right we see means and standard deviations for the actual fidelities of the covered samples for each subset. As can be seen when considering the entire (effective) dataset (rightmost point), our regions satisfy the $\theta$ constraint with high probability.
    }
    }
    \label{fig:savings-main}

\end{figure}

\section{Experiments}
\label{sec:exp}
One of the primary goals of the synthetic and real data experiments is to verify the accuracy and judge the efficiency of our methods. For the real HELOC dataset \citep{FICO2018} we additionally report interesting insights that can be obtained by finding trust regions. Moreover, we show the query savings obtainable with such an approach as well as qualitatively and quantitatively study the similarity of explanations within a trust region to the certified explanation compared with those outside. Since the problem setup is novel there aren't existing baselines in the explainability literature. Nonetheless, we adapt a ZO toolbox \citep{advtoolbox} to be usable in our setup as the ZO methods are potentially the closest to be used for our problem. We refer to this method as ZO$^+$, where our Ecertify algorithm calls this toolbox as a routine similar to our three strategies. In all the experiments the quality metric is fidelity as defined in eqn.~\ref{eq:fid} (in the Appendix), results are averaged over 10 runs, $Q$ is varied from $10$ to $10000$, $Z$ is set to $10$, $\theta=0.75$ (in the main paper) and we used $4$-core machines with $64$ GB RAM and $1$ NVIDIA A100 GPU.

\noindent\textbf{Synthetic Setup:} We create a piecewise linear function with three pieces where the center piece lies between $[-2,2]$ in each dimension, has an angle of 45 degrees with each axis, and passes through the origin. The other two pieces start at $-2$ and $2$ respectively and are orthogonal to the center piece. The example we want to explain is at the origin. We vary dimensions $d$ from $1$ to $10000$. In the main paper we report results for the explanation being a hyperplane with slope $0.75$ passing through the origin. The optimal half-width is thus $\frac{1}{d}$. Other variations are reported in the Appendix.

\noindent\textbf{Real Setup:}
We experiment on two image datasets, namely ImageNet \citep{imagenet} ($224\times 224$ dimensions) and CIFAR10 \citep{cifar} ($32\times 32$ dimensions), and two tabular datasets, HELOC \citep{FICO2018} ($23$ dimensional) and Arrhythmia \citep{OpenML_as_a_whole} ($195$ dimensional). The model for tabular datasets is Gradient Boosted trees (with default settings) in scikit-learn \citep{scikit-learn}. For ImageNet we used a ResNet50 and for CIFAR10 we used a VGG11 model. We also consider arguably the two most popular local explainers: LIME and SHAP. %We note that since explanations are local to a particular instance, the found widths also vary across a range of values. 
To have a more representative selection of examples to find explanations and half-widths, we chose five prototypical examples \citep{proto} from each dataset.  We show one to two examples for each dataset in the main paper where the others are in the Appendix. More details such as explainer settings, certification strategy settings (etc.) for each dataset are in the Appendix.

\noindent\textbf{Observations:} \emph{i) Accuracy and Efficiency:} From the synthetic experiments, we see in Table~\ref{tab:syn} that although all methods converge close to the true certified half-width, our methods are an order of magnitude or more efficient than ZO$^+$. Also they seem to converge faster (in terms of queries) in high dimensions ($100$ to $10000$ dimensions). Comparing between our methods it seems unif is best (and sufficient) for lowish dimensions (up to $100$), while unifI is preferable in the intermediate range ($1000$) and adaptI is best when the dimensionality is high ($10000$). Thus the incremental and finally adaptive abilities seem to have an advantage as the search space increases. Although we query (at most) $Q$ times in each strategy, adaptI and unifI are typically slower than unif because we sample from $n$ different Gaussians $\log(Q)$ times as opposed to sampling $Q$ examples with a single function call. This however, will not always happen if a violating example is found faster, as seen for adaptI on some real datasets in Figure~\ref{fig:perf}. %in each run by the other strategies {as we see next}. 
We also report our Theorem~\ref{thm1} bounds (estimated as in Section~\ref{sec:special:est}) on the probability of closeness and the (additional) time to compute them in Table \ref{tab:syn-bounds} in the Appendix. As can be seen the bounds converge fast to $1$ especially for adaptI and are efficient to compute (at most a few minutes). EVT bounds based on Corollary~\ref{cor:EVT}, shown in Table \ref{tab:syn-evt-bounds}, are also high enough to be meaningful for unifI and improve with increasing $Q$, but become looser with increasing input dimensionality.

From the experiments on real data, we again see from Figure~\ref{fig:perf} that our methods are significantly faster than ZO$^+$, while they still converge to (roughly) the same half-widths in most cases. The running times are especially higher in the LIME image cases because LIME has to create masks for each image we certify. We also observe that adaptI is generally faster in most cases because it finds the violating examples faster that the other strategies. As such, in terms of convergence of the estimated half-width with increasing number of queries balanced against efficiency, we observe that unif is probably best for HELOC and Arrythmia which are low dimensional datasets, unifI is best for CIFAR10 which has dimension close to $1000$, this also holds for non-linear explanation methods such as RISE as seen in Figure \ref{fig:rise} in Appendix \ref{sec:nonlinexp}, and adaptI is best for ImageNet which has $40K+$ dimensions. Probability bounds are reported in Table \ref{tab:imagenet-bounds} for ImageNet. Here again like in the synthetic case we see fast convergence especially for adaptI with the bound computation being efficient.

ii) \emph{Compare XAI methods:} Interestingly, our analysis can also be used to compare XAI methods. We observe that LIME widths are typically much larger than those found for SHAP, and hence the explanations are more generalizable beyond the specific example. This however, does not mean that LIME is always more robust than SHAP as the quality of the explanation depends on the desired fidelity. SHAP typically has fidelity of $1$ at $\vx_0$, while LIME may have lower fidelity at $\vx_0$ but generalizes farther in the sense of fidelity remaining above the threshold. %then the explanation is more generalizable. 
For instance, in row 4 in Figure \ref{fig:perf} LIME has a fidelity of $0.87$ which is greater than our set threshold of $0.75$. The explanation here considers AvgMlnFile and NumSatisfactoryTrades as important factors, while SHAP considers ExternalRiskEstimate as the most important factor. The latter is more informative for the specific example but doesn't generalize as well nearby. The last row shows the downside of generalizability where LIME fidelity even for the example we want to explain is lower than our threshold of $0.75$ and so we return $-1$, but SHAP produces a trust region. Thus, one could select which method to use based on the desired threshold for the quality metric. As such, this type of analysis can be used to compare XAI methods on individual examples, on regions, as well as on entire datasets, and across different models.

%again compare on accuracy and time. True region maybe determined by querying one of the methods (say unif) 100x more?

%show also qualitative results maybe (viz. example images with applied explanations within and outside the certification set).

%Can also make statements maybe as to when a certain explainer is preferable. Mention that it can serve as an additional explainability metric (its funny in a way that it needs a metric (viz. fidelity) for computation but then is a metric in itself. Dont know what cute term for it would be (it Appendixs any given metric))?

\noindent\emph{iii) Query savings:} In Figure~\ref{fig:savings-main}(left), we demonstrate a practical use of our method for explanation re-use and savings in model queries. We take random subsets of the HELOC dataset of size 10\% to 100\%. % of its total size. 
For each subset independently, we find the number of samples (left axis in Figure~\ref{fig:savings-main}(left)) for which we need to compute and certify LIME explanations so that the certified regions cover the rest of the samples in the subset. For each explanation, we say that a sample is covered if the top 60\% (as ranked by LIME) of the sample's features fall within the region. If a sample lies in multiple regions we randomly pick one. %For each explanation, certification is done on the top 60\% of features as identified by LIME and the remaining 40\% are not constrained. 
Figure~\ref{fig:savings-main}(right) %in the appendix 
shows that considering the top 60\% is sufficient for the actual fidelities of covered points to satisfy the threshold $\theta = 0.75$ with high probability. The right axis in Figure~\ref{fig:savings-main}(left) shows that we can save 80\% of queries given that LIME by default queries 5K times per explanation. More details are in Appendix \ref{sec:exp-savings}.
% More details and results with other top-k percentages in the appendix. %We see that we require an order of magnitude fewer samples to cover the subsets, and also the query savings is as high as $\approx80\%$.

%\emph{Please see Appendix \ref{sec:discexp} for a discussion on the generality of our approach and interesting future directions.}

\begin{table}[htbp]
\small
\centering
\caption{Similarities ($\pm$ standard errors) for explanations computed for examples covered by a certified region (inside $w$), and for (random) examples outside the region on the HELOC dataset. As can be seen the explanations for examples that lie within our trust regions are significantly more similar to the certified explanation than those that lie outside it. Best results are bolded.}
\vspace{.2cm}
\begin{tabular}{llcc}
\toprule
                Metric     & Value       &  inside $w$ &  outside $w$ \\
% Metric & Value &               &                \\
\midrule
Top-5 intersection & mean &            $\mathbf{0.85}$ \tiny{$\pm 0.0215$} &          $0.77$  \tiny{$\pm 0.0165$} \\
                     & median &    $\mathbf{0.87}$  \tiny{$\pm 0.0270$} &          $0.79$  \tiny{$\pm 0.0207$} \\

\hline
Spearman's Rank & mean &  $\mathbf{0.74}$  \tiny{$\pm0.0230$} &       $0.61$  \tiny{$\pm0.0191$} \\
               Correlation      & median & $\mathbf{0.76}$  \tiny{$\pm0.0288$} &          $0.63$ \tiny{$\pm0.0239$} \\
\bottomrule
\end{tabular}
\label{tab:consistency}
\end{table}

\begin{figure}[h]
    \centering
    \includegraphics[width=.8\columnwidth]{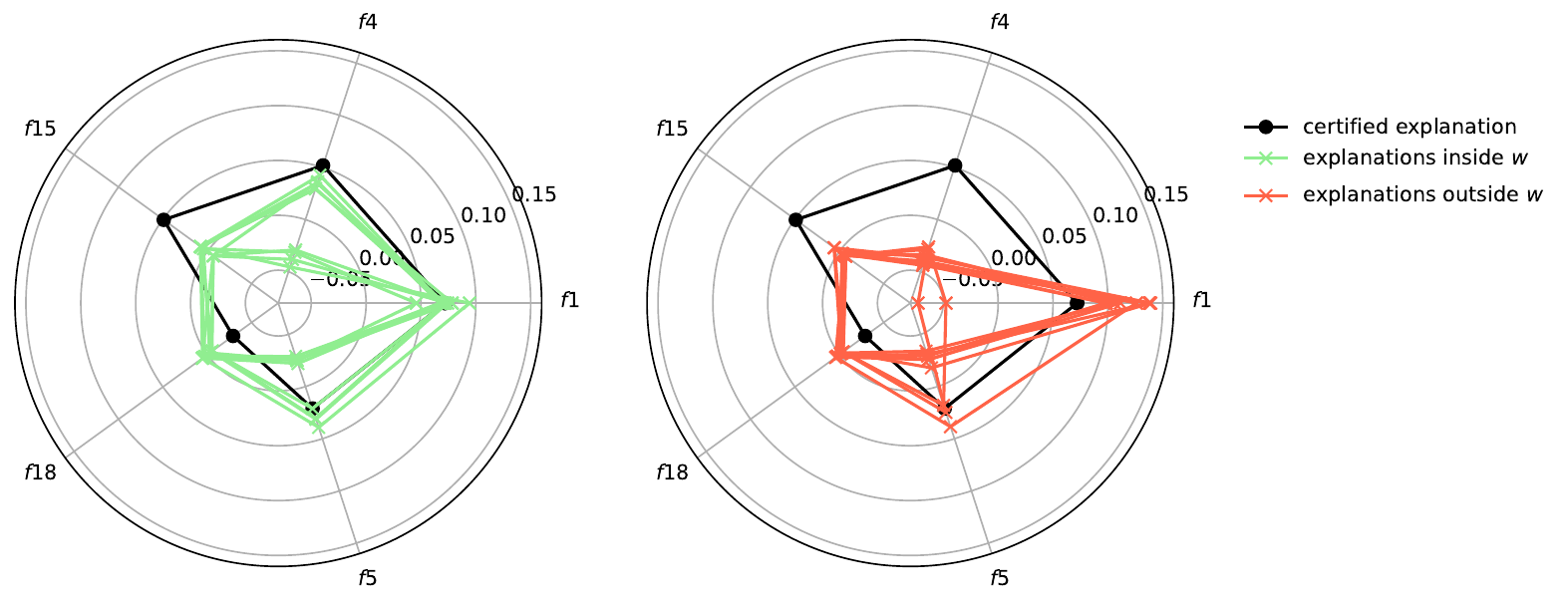}
    \caption{\small{Visualization of a certified explanation (black) for a randomly chosen example in the HELOC dataset along with explanations for examples lying within the trust region (nine green pentagons on the left) and those (randomly chosen nine) lying outside the trust region (orange pentagons on the right). 
    The explanations depict the feature importances for the top-5 features based on the certified explanation.  f1, f4, f15, f18 and f5 denote `ExternalRiskEstimate', `AverageMInFile', `MSinceMostRecentInqexcl7days', `NetFractionRevolvingBurden' and `NumSatisfactoryTrades' respectively. As can be seen explanations for examples within our trust region are significantly more similar to the certified explanation than those outside of it.}
    }
    \label{fig:radar}
    % \vspace{-.25cm}
\end{figure}
iv) \emph{Explanation stability:}
In this experiment, we empirically evaluate the stability of explanations for examples covered by the certified width or trust region of examples in the HELOC dataset. We use the same setting as used in the query savings experiment, with adaptI as our certification strategy, $\theta=0.75$ and with LIME as the explanation method. Once we have certified an explanation for an example, we find examples within its certified region and compute explanations for them. Our goal here is to see if the certified explanation is more similar to the explanations for examples within the certified region as opposed to those outside of it. This is more of a qualitative assessment of the region in addition to our method returning a trust region satisfying a certain level of fidelity ($\ge \theta$).

To measure the \textit{similarity} between the explanations we use two metrics: \emph{Top-k intersection} (fraction of features that appear within top k in both the explanations, we set k=$5$), and \emph{Spearman rank correlation} (compares the ordering of features, magnitude of 1 means perfect correlation between two ranked lists). We compute the mean and median of these metrics for explanations of $22$  randomly chosen (to be certified) examples from the training data that cover non-overlapping regions. We observe in Table \ref{tab:consistency} that for explanations computed outside the region (randomly chosen $100$ examples), both Top-k intersection and Spearman rank correlation are much worse than those within the region, hence suggesting that the explanations computed for examples inside the certified region are indeed more similar and consistent with the certified explanation. A visual portrayal of this is seen in Figure \ref{fig:radar}, where for a randomly chosen example we see that its explanation is closer (as depicted by the shapes of the pentagon) to explanations of examples within its certified region than those outside of it.

\section{Discussion Section}
\label{sec:discexp}
Rather than certified hypercubes, one could also find hyper-rectangles or even arbitrary $\ell_p$ balls (not just $\ell_\infty$) with our strategies for which too the main theoretical results should apply. 
This would nonetheless require extra book-keeping to correctly demarcate the certified (and violating) boundaries in each case. From a practical standpoint the strategies could also return the nearest violating example (i.e. fidelity $< \theta$) than the minimum fidelity one reducing the search space even faster. Moreover, the outer \emph{For} loop in unifI and adaptI can be parallelized.

There are multiple interesting future directions which we discuss below.

\paragraph{Applicability to other explanation method types:} Although our experiments considered feature based explanations, one could apply our approach even 
to contrastive \citep{cem,counterfactual} or exemplar \citep{l2c,proto} based explanations, as long as one can apply the given explanation (e.g.~a class-changing perturbation) to different examples and measure the resulting quality. For instance, in the case of contrastive explanations, one could apply the same perturbations that change the class of one input example to other examples and check if the same class change occurs. This would be a potential quality metric in this case. 

\noindent\textbf{Multi-armed bandits, Bayesian optimization and Hyperparameter optimization:} Multi-armed bandit algorithms %from the multi-armed bandit literature 
\citep{mabook} could possibly be adapted to our setting, especially those designed 
for infinitely many arms \citep{mabinf}. We note however that they typically make assumptions such as Lipschitzness, local smoothness or bounded near-optimality dimension \citep{mabubeck}, which we did not have to make 
for our main results.

If we assume some prior knowledge or structure on the search space, then Bayesian optimization methods \cite{bopt} can probably be adapted to our problem. However, it isn't clear what assumptions would be reasonable in the explanation setting and hence we proposed solutions that would work with no such assumptions. But if we have some additional information about the hypothesis class of the black-box function then one could possibly leverage methods from Bayesian optimization.

It is worth mentioning that our certification strategies have relations to methods employed in hyperparameter optimization \citep{hyper}, where efficient search of the hyperparameter space is needed. However, in addition to the domain being completely different, methods in hyperparameter optimization try to find hyperparameter values that will result in the best performing model w.r.t. a certain quality metric such as accuracy. In our case, we do not have an already provided model that we wish to optimize and for which we have to assign computational resources to train. Rather, we have to decipher an intelligent way to find low fidelity examples in a compact region of the input space. It is a priori unclear how the query budget can be effectively assigned and used (viz. sampling prototypes, perturbing them, etc.) in such a setup.
Moreover, theoretical results in their domain typically involve making additional assumptions about the loss behavior with more training, something that wasn't required for us to prove the bounds, not to mention them having little relevance in our setup.

\noindent\textbf{Known upper bound:} If a priori we know an %reasonable 
upper bound on the certification region, there could be more intelligent ways of assigning the query budget to Certify() in Algorithm \ref{algo:Ecertify}, rather than simply 
a fixed budget of $Q$. One could possibly keep querying in the region until the certification criterion is violated. If a violation is found then the new region to certify would be the hypercube contained within the closest dimension of this example to the input. Now repeat the process with the remaining query budget in this new region. Once we exhaust the query budget declare the current region being certified as a valid certified region.

\noindent\textbf{Applicability to manifolds:} It would be desirable to adapt these methods to work on lower dimensional manifolds. As a first step, 
one could simply 
apply the current methods to the latent space (e.g.~as learned by an auto-encoder) rather than the input space. Thus, although the regions will be hypercubes in the latent space they will be more free-form in the input space which might be interesting.

\noindent\textbf{Designing new explanation methods:} One could design new explanation methods that maximize the size of the certification region while also being faithful. Ideas from constraint generation \citep{BDR} could be used here where the identified violating examples would serve as the constraints that get added when finding a suitable explanation possibly leading to more robust explanations.

\noindent\textbf{Limitations:}
Our approaches rely on random sampling and have probabilistic guarantees, hence in any particular case it is possible that the half-widths reported may be different than the true half-widths. Also results may vary run to run. Possible mitigation is by averaging over multiple runs and/or using sufficient query budget.

\bibliography{ref}
\bibliographystyle{plainnat}

\newpage
% \appendix
\onecolumn
\appendix
\section{Proofs for Results in Section \ref{sec:ans}}
\label{sec:prfs}

%\begin{proof}[Lemma 1 proof]
%\begin{align}
 %   P[\hat{f}^{*}_w-f^{*}_w\le \epsilon] &\ge \prod_{i=1}^c P[\hat{f}^{*}_{w_i,w_{i-1}}-f^{*}_{w_i,w_{i-1}}\le \epsilon]\nonumber\\
  %  &\ge \min\limits_{i\in \{1,...,c\}} P[\hat{f}^{*}_{w_i,w_{i-1}}-f^{*}_{w_i,w_{i-1}}\le \epsilon]^c
%\end{align}
%where $w_0=0$. The first inequality stems from the fact that examples in each region are independent and the condition $\hat{f}^{*}_{w_i,w_{i-1}}-f^{*}_{w_i,w_{i-1}}\le \epsilon$ being true $\forall i\in\{1,...,c\}$ $\implies$ $\hat{f}^{*}_w-f^{*}_w\le \epsilon$ will be true.
%\end{proof}
\begin{proof}[Lemma~\ref{lem1} proof]
To prove the first equality, we recall that $\hat{f}^*_w$ is the minimum of the fidelities sampled from $[-w,w]^d$, while $\hat{f}^*_{w_i,w_{i-1}}$ is the minimum over the samples restricted to the $i$th region $[-w_i, w_i] \backslash [-w_{i-1}, w_{i-1}]$. Since $[-w,w]^d$ is the disjoint union of these regions, it follows that 
\begin{equation}\label{eq:lem1_1}
    \hat{f}^*_w = \min_{i\in\{1,\dots,c\}} \hat{f}^*_{w_i,w_{i-1}}    
\end{equation}
and 
\[
P\bigl[\hat{f}^*_w > f^*_w + \epsilon\bigr] = P\left[\hat{f}^*_{w_1,w_{0}} > f^*_w + \epsilon, \dots, \hat{f}^*_{w_c,w_{c-1}} > f^*_w + \epsilon\right].
\] 
Since samples in different regions are independent, the joint probability above factors to yield the equality in the lemma (after taking the complement).

The inequality in the lemma follows from bounding all but the smallest of the $P\bigl[\hat{f}^*_{w_i,w_{i-1}} > f^*_w + \epsilon\bigr]$ factors by $1$. The inequality is tight if the $\argmin$ corresponding to \eqref{eq:lem1_1} is a single region,
\[
i^* = \argmin_{i\in\{1,\dots,c\}} \hat{f}^*_{w_i,w_{i-1}}, 
\]
and $\epsilon$ is small enough so that $f^*_w + \epsilon < f^*_{w_i,w_{i-1}}$ for $i \neq i^*$ (recall that $f^*_{w_i,w_{i-1}}$ is the true minimum fidelity in the region $[-w_i, w_i] \backslash [-w_{i-1}, w_{i-1}]$). In this case, 
\[
P\bigl[\hat{f}^*_{w_i,w_{i-1}} > f^*_w + \epsilon\bigr] = P\bigl[\hat{f}^*_{w_i,w_{i-1}} \geq f^*_{w_i,w_{i-1}}\bigr] = 1, \quad i \neq i^*,
\]
and 
\[
\prod_{i=1}^c P\bigl[\hat{f}^{*}_{w_i,w_{i-1}}-f^{*}_w > \epsilon\bigr] = P\bigl[\hat{f}^{*}_{w_{i^*},w_{i^*-1}}-f^{*}_w > \epsilon\bigr] = \min_{i\in \{1,...,c\}} P\bigl[\hat{f}^{*}_{w_i,w_{i-1}}-f^{*}_w > \epsilon\bigr].
\]

\end{proof}

From equation \ref{eq:main} it is clear that we need to lower bound $P[\hat{f}^{*}_{w_i,w_{i-1}}-f^{*}_{w_i,w_{i-1}}\le \epsilon]$ $\forall i\in\{1,...,c\}$. Since, the mathematical form of the bounds will be similar $\forall i$, let us for simplicity of notation denote the fidelities for the $i^{\text{th}}$ region by just the suffix $i$, i.e., denote  $f^{*}_{w_{i},w_{i-1}}$ by $f^{*}_i$ and similarly the minimum estimated fidelity and fidelities for other examples in that region.
%Since the random variable is $\hat{f}^{*}_i$ we want an expression where it is allowed to vary and not where it is conditioned on to be able to effectively lower bound the above expression. We thus reformulate it and lower bound it as follows:
%\begin{align}
 %   P[f^{*}_i\ge \theta|\hat{f}^{*}_i\ge \theta]&=\lim_{\epsilon\to 0}P[f^{*}_i\ge \theta-\epsilon|\hat{f}^{*}_i\ge \theta]\nonumber\\
  %  &= \lim_{\epsilon\to 0}P[\theta-f^{*}_i\le \epsilon|\hat{f}^{*}_i\ge \theta]\nonumber\\
   % &\ge \lim_{\epsilon\to 0}P[\hat{f}^{*}_i-f^{*}_i\le \epsilon]
%\end{align}
%In the first step we introduce an additional variable $\epsilon$ ($\ge 0$) which at the limit point makes the probability equal to the left hand side as the (sampling) densities are continuous for all the three strategies in any region $i$. The inequality in the last step comes from the fact that $\hat{f}^*_i\le\alpha$ for any real $\alpha$ implies $\theta\le\alpha$ since $\hat{f}^*_i\ge\theta$.
We thus now need to lower bound $P[\hat{f}^{*}_i-f^{*}_w\le \epsilon]$ for the three different certification strategies proposed in Algorithm \ref{algo:certify}.

\noindent\textbf{Uniform Strategy:} This is the simplest strategy where we sample and query $Q$ examples uniformly in the region we want to certify. Let $U$ denote the uniform distribution over the input space in the $i^{\text{th}}$ region and let $F_i^{\text{(u)}}(.)$ denote the cumulative distribution function (cdf) of the fidelities induced by this uniform distribution, i.e. $F_i^{\text{(u)}}(v) \triangleq P_{\bm{r}\sim U}[\hat{f}^{(\bm{r})}_i\le v]$ for some real $v$ and $\bm{r}$ belonging to the $i^{\text{th}}$ region. Then if $\hat{f}^{(\bm{r_1})}_i,...,\hat{f}^{(\bm{r_Q})}_i$ are the fidelities of the $Q$ examples sampled by this strategy, we have

\begin{proof}[Lemma 2 proof]
\begin{align}
    P[\hat{f}^{*}_i-f^{*}_w\le \epsilon]&=1-P[\hat{f}^{*}_i>f^{*}_w+ \epsilon]\nonumber\\
    &= 1-\prod_{j=1}^QP[\hat{f}^{(\bm{r_j})}_i>f^{*}_w+ \epsilon]\nonumber\\
    &= 1-(1-F_i^{\text{(u)}}(f^{*}_w+ \epsilon))^Q\nonumber\\
    &\ge 1-\exp{(-QF_i^{\text{(u)}}(f^{*}_w+ \epsilon))}
\end{align}
\end{proof}
In the last step we use the following inequality for $x \in [0,1]$, $(1-x)^n\le \exp{-nx}$.

\noindent\textbf{Uniform Incremental Strategy:} In this strategy, we sample $n\le q$ samples uniformly $\lfloor\log(Q)\rfloor$ times. Then using each of them as centers we sample $\lfloor\frac{q}{n}\rfloor$ examples and query them. Let the cdfs induced by each of the centers through Gaussian sampling be denoted by $F_i^{\mathcal{N}_{j,k}}(.)$, where $j$ denotes the iteration number that goes up to $\lfloor\log(Q)\rfloor$ and $k$ the $k^{\text{th}}$ sampled prototype/center. With an analogous definition as before and similar steps we get,
\begin{proof}[Lemma 3 proof]
\begin{align}
    P[\hat{f}^{*}_i-f^{*}_w\le \epsilon]&=1-\prod_{j=1}^{\lfloor\log(Q)\rfloor}\prod_{k=1}^{n=\min(2^j,q)}\left(1-F_i^{\mathcal{N}_{j,k}}(f^{*}_w+ \epsilon)\right)^{\lfloor\frac{q}{n}\rfloor}\nonumber\\
    &\ge 1-\exp{\left(-\sum_{j=1}^{\lfloor\log(Q)\rfloor}\sum_{k=1}^{n=\min(2^j,q)}\left\lfloor\frac{q}{n}\right\rfloor F_i^{\mathcal{N}_{j,k}}(f^{*}_w+ \epsilon)\right)}\nonumber\\
    &\ge 1-\exp{\left(-\sum_{j=1}^{\lfloor\log(Q)\rfloor}\max\limits_{k\in \{1,...,n\}}\left\lfloor\frac{q}{n}\right\rfloor F_i^{\mathcal{N}_{j,k}}(f^{*}_w+ \epsilon)\right)}\nonumber\\
    &\ge 1-\exp{\left(-\max\limits_{j\in \{1,...,\lfloor\log(Q)\rfloor\}, k\in \{1,...,n\}}\left\lfloor\frac{q}{n}\right\rfloor F_i^{\mathcal{N}_{j,k}}(f^{*}_w+ \epsilon)\right)}
\end{align}
\end{proof}
The above expressions convey the insight that if we find a good prototype $\bm{r_{j,k}}$ implying that $F_i^{\mathcal{N}_{j,k}}(f^{*}_w+ \epsilon)$ is high it will lead to a higher (i.e. better) lower bound than in the uniform case. Intuitively, if we find a good prototype, exploring the region around it should be beneficial to find a good estimate of $f^{*}_i$. 

\noindent\textbf{Adaptive Incremental Strategy:} This is possibly the most interesting strategy, where we explore \emph{adaptively} in more promising areas of the input space unlike the other two strategies. Here too let the cdfs induced by each of the centers through Gaussian sampling be denoted by $F_i^{\mathcal{N}_{j,k}}(.)$, where $j$ denotes the iteration number that goes up to $\lfloor\log(Q)\rfloor$ and $k$ the $k^{\text{th}}$ sampled prototype/center for a given $n$. W.l.o.g. assume $F_i^{\mathcal{N}_{j,k}}(.)\le F_i^{\mathcal{N}_{j,k+1}}(.)$ $\forall j\in \{1,...,\lfloor\log(Q)\rfloor\}, k\in \{1,...,n-1\}$ i.e. the first prototype produces the worst estimates of the minimum fidelity, while the $n^{\text{th}}$ prototype produces the best\footnote{When sampling not always will the best cdf produce the best estimate, although it will be most likely (i.e. highest probability). We assume this probability to be 1 for each cdf based on its position in the ordering of cdfs for clarity of exposition. One could multiply by these probabilities for posterity, but it doesn't change the nature of the bound or its interpretation.}.
\begin{proof}[Lemma 4 proof]

Now we know that for $j\in\{1,...,\lfloor\log(Q)\rfloor\}$,
\begin{align}
\label{eq:nvals}
n = \begin{cases}
  2^j  & \text{if } j2^j\le q \\
  2^l & \text{otherwise, where } l \text{ is the largest }j \text{ such that } j2^j\le q
\end{cases}
\end{align}
then the number of examples that will be sampled around the $k^{\text{th}}$ prototype will be,
\begin{align}
    \label{eq:numex}
    \left\lfloor\frac{(2^v-1)q}{n\log n}\right\rfloor
\text{\quad where, }
v = \begin{cases}
  1  & \text{if } k\in \{1,...,\left\lfloor\frac{n}{2}\right\rfloor\} \\
  2 & \text{else if } k\in \{\left\lceil\frac{n}{2}\right\rceil,...,\left\lfloor\frac{3n}{4}\right\rfloor\}\\
  \vdots & \vdots \\
  \log(n) & \text{else if } k=n
\end{cases}
\end{align}

Then we have,
\begin{align}
    P[\hat{f}^{*}_i-f^{*}_w\le \epsilon]&=1-\prod_{j=1}^{\lfloor\log(Q)\rfloor}\prod_{k=1}^{n}\left(1-F_i^{\mathcal{N}_{j,k}}(f^{*}_w+ \epsilon)\right)^{\left\lfloor\frac{(2^v-1)q}{n\log n}\right\rfloor}\nonumber\\
    &\ge 1-\exp{\left(-\sum_{j=1}^{\lfloor\log(Q)\rfloor}\sum_{k=1}^{n}\left\lfloor\frac{(2^v-1)q}{n\log n}\right\rfloor F_i^{\mathcal{N}_{j,k}}(f^{*}_w+ \epsilon)\right)}\nonumber\\
    &\ge 1-\exp{\left(-\max\limits_{j\in \{1,...,\lfloor\log(Q)\rfloor\}}\left\lfloor\frac{(n-1)q}{n\log n}\right\rfloor F_i^{\mathcal{N}_{j,n}}(f^{*}_w+ \epsilon)\right)}
\end{align}
\end{proof}

\begin{proof}[Proposition 1 proof]
For the unif strategy we query $Q$ times uniformly so its obvious that the query budget will be $Q$.

For unifI we query $n\le q$ samples $\lfloor\log(Q)\rfloor$ times and then using them as centers query $\lfloor\frac{q}{n}\rfloor$ examples each time. Given that $q=\lfloor\frac{Q}{\log(Q)}\rfloor$, the total number of queries is thus $n\lfloor\frac{q}{n}\rfloor\lfloor\log(Q)\rfloor=n\lfloor\frac{Q}{\log(Q)n}\rfloor\lfloor\log(Q)\rfloor\le Q$. This value however will be very close to $Q$ which is what we want.

For adaptI too one can verify that the total query budget will be close but utmost $Q$. This is because we sample and query $\lfloor \frac{q}{m\lceil\log(n)\rceil}\rfloor$ examples for $m$ prototypes $\lceil\log(n)\rceil\lfloor\log(Q)\rfloor$ times making the total query budget used equal to $\lfloor \frac{q}{m\lceil\log(n)\rceil}\rfloor m\lceil\log(n)\rceil\lfloor\log(Q)\rfloor\le q\lfloor\log(Q)\rfloor\le Q$.
\end{proof}

\begin{proposition}
\label{prop1}
  $F_i(f_w^*+\epsilon)= 0$ iff all sets of inputs in region $i$ corresponding to fidelities in $[f^*_w,f^*_w+\epsilon]$ have measure zero w.r.t. the sampling densities that are non-zero everywhere in region $i$.
  %\vspace{-.2cm}
\end{proposition}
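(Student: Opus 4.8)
The plan is to prove the equivalence by directly unpacking the definition of the cumulative distribution function $F_i$. Recall that for any of the three strategies $F_i(v)$ is, by definition, the probability under the relevant sampling density $p$ in region $i$ that a sampled point $\bm{r}$ has fidelity at most $v$; that is, $F_i(v) = P_{\bm{r}\sim p}[f^{(\bm{r})}_i\le v] = \int_{\{\bm{r}\,:\, f^{(\bm{r})}_i \le v\}} p(\bm{r})\, d\bm{r}$. Setting $v = f^*_w + \epsilon$, the quantity $F_i(f^*_w + \epsilon)$ is thus the $p$-measure of the sublevel set $A \triangleq \{\bm{r} \in \text{region } i : f^{(\bm{r})}_i \le f^*_w + \epsilon\}$. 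The entire argument reduces to identifying this sublevel set and then applying the elementary fact that the integral of a density over a set vanishes exactly when that set is null with respect to the density.

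First I would pin down the sublevel set $A$. Because $f^*_w$ is the minimum fidelity over the \emph{entire} hypercube $[-w,w]^d$, and region $i$ (namely $[-w_i,w_i]^d\setminus[-w_{i-1},w_{i-1}]^d$) is a subset of that hypercube, every point $\bm{r}$ in region $i$ satisfies $f^{(\bm{r})}_i \ge f^*_w$. Consequently the lower constraint is automatically active, and $A = \{\bm{r} \in \text{region } i : f^*_w \le f^{(\bm{r})}_i \le f^*_w + \epsilon\}$, which is precisely the set of inputs in region $i$ corresponding to fidelities in $[f^*_w, f^*_w + \epsilon]$.

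With this identification, both directions of the iff are immediate. Since $F_i(f^*_w + \epsilon) = \int_A p(\bm{r})\, d\bm{r}$, we have $F_i(f^*_w + \epsilon) = 0$ if and only if $A$ has measure zero with respect to the sampling density $p$. The hypothesis that $p$ is strictly positive everywhere in region $i$ (which holds for the uniform density and for each Gaussian $\mathcal{N}_{j,k}$ restricted to the region, and hence for any mixture of these) further ensures that this $p$-null condition coincides with Lebesgue-nullity, so the statement is unambiguous no matter which sampling distribution induces $F_i$.

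I do not anticipate a substantive obstacle, as the result is essentially a measure-theoretic restatement of the cdf's definition. The only point requiring care is the identification of the sublevel set $A$ with the fidelity band $[f^*_w, f^*_w + \epsilon]$, which hinges on the observation that $f^*_w$ is a \emph{global} minimum over the full cube rather than a region-local one, so that no point in region $i$ can realize a fidelity strictly below $f^*_w$. I would also make explicit that the positivity of the density is exactly what equates ``$p$-measure zero'' with the geometric (Lebesgue) notion of a negligible set, lest the equivalence be misread as depending on the particular sampling distribution.
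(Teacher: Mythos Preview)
Your proposal is correct and follows essentially the same approach as the paper: both arguments unpack the definition of $F_i(f^*_w+\epsilon)$ as the sampling-measure of the preimage $\{\bm{r}\in\text{region }i: f^{(\bm{r})}_i\le f^*_w+\epsilon\}$ and then read off the equivalence. Your version is slightly tidier in that it handles the sublevel set as a single object (rather than the paper's looser talk of ``all sets'' and a ``sum of these probabilities/measures''), and you make explicit both why the lower bound $f^*_w$ is automatically active and why the positivity hypothesis on the density matters; these clarifications are welcome but do not change the underlying argument.
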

\begin{proof}
First direction, all sets of inputs in region $i$
 corresponding to values in $[f^*_w,f^*_w+\epsilon]$
 having measure zero $\implies$ $F_i(f_w^*+\epsilon)= 0$: Since all sets of inputs in region $i$
 corresponding to values in $[f^*_w,f^*_w+\epsilon]$
 have measure zero this would imply that the probability of getting any value in $[f^*_w,f^*_w+\epsilon]$
 would also have measure zero and hence the sum of these probabilities/measures would also be zero. Second direction, one or more sets of inputs
 corresponding to values in $[f^*_w,f^*_w+\epsilon]$
 having non-zero measure $\implies$ $F_i(f_w^*+\epsilon)\neq 0$: If $\exists$
 a set of inputs in region $i$
 whose values lie in $[f^*_w,f^*_w+\epsilon]$
 with non-zero measure $p$
 then $F_i(f_w^*+\epsilon)\ge p > 0$
, since this set will contribute a probability mass of $p$
 to its corresponding value in $[f^*_w,f^*_w+\epsilon]$.
\end{proof}
The densities are non-zero since, we sample using Uniform for unif and Gaussians for unifI and adaptI in each bounded region.

\section{Topics Related to Extreme Value Theory}
\label{sec:EVT}

\paragraph{i.i.d.~unifI strategy} To facilitate the application of EVT, we use a variant of the unifI strategy that samples examples in an i.i.d.~manner, as opposed to requiring a fixed number $\lfloor \frac{q}{n} \rfloor$ of samples from each Gaussian component (see Algorithm~\ref{algo:certify}) which is not i.i.d. The key is to regard the examples as being drawn from a mixture distribution, specifically a mixture of Gaussian mixtures. As in Algorithm~\ref{algo:certify}, the outer mixture consists of $\lfloor\log(Q)\rfloor$ Gaussian mixtures indexed by $i$, and each Gaussian mixture has $n$ components where $n = \min(2^i, q)$. Instead of drawing the same number of samples from each Gaussian mixture and each Gaussian, we use uniform mixture weights. The overall mixture distribution is therefore 
\[
\sum_{i=1}^{\lfloor\log(Q)\rfloor} \frac{1}{\lfloor\log(Q)\rfloor} \sum_{j=1}^{n_i=\min(2^i,q)} \frac{1}{n_i} \mathcal{N}_{i,j}(\bm{r}_j, \sigma^2 I).
\]
As with the unifI strategy, we first sample the centers $\bm{r}_j$ uniformly, and then sample $Q$ examples from the above mixture distribution for querying the black-box model.

\paragraph{Proof of Corollary~\ref{cor:EVT}} A direct translation of the results of \citet{haan1981estimation}, \citet[Thm.~2.3]{carvalho2011confidence} is as follows:
\begin{align}\label{eq:confIntEVT}
    P\left[\hat{f}^*_i - f^*_i \leq \underbrace{\frac{\dhat{f}^*_i - \hat{f}^*_i}{(1-p)^{-1/\kappa} - 1}}_{\epsilon_i^{\mathrm{EVT}}} \right] = 1 - p.
\end{align}
We then set the quantity $\epsilon_i^{\mathrm{EVT}}$ equal to a given tolerance $\epsilon$ and solve for the corresponding value of $1 - p$. After a bit of algebra, this yields the expression in the corollary.
%\[
%P\left[\hat{f}^*_i - f^*_i > \epsilon\right] = p = 1 - \left(1 + \frac{\dhat{f}^*_i - \hat{f}^*_i}{\epsilon}\right)^{-\kappa}.
%\]

\paragraph{Using Corollary~\ref{cor:EVT} as an alternative to Theorem~\ref{thm1}} 
As discussed in the main text, Corollary~\ref{cor:EVT} differs from Lemmas~\ref{lem2}--\ref{lem4} in having the region-specific minimum $f^*_i$ instead of $f^*_w$, but for $i = i^* \in \argmin_i f^*_i$, we have $f^*_{i^*} = f^*_w$. Hence 
\[
    P\left[\hat{f}^*_{i^*} - f^*_w \leq \epsilon\right] = P\left[\hat{f}^*_{i^*} - f^*_{i^*} \leq \epsilon\right] = \left(1 + \frac{\dhat{f}^*_{i^*} - \hat{f}^*_{i^*}}{\epsilon}\right)^{-\kappa}.
\]
On the other hand, Lemma~\ref{lem1} implies that (recalling the shorthand $\hat{f}^*_i = \hat{f}^*_{w_i,w_{i-1}}$)
\[
    P\bigl[\hat{f}^{*}_w-f^{*}_w \leq \epsilon\bigr] \geq \max_{i\in \{1,...,c\}} P\bigl[\hat{f}^{*}_i-f^{*}_w \leq \epsilon\bigr] \geq P\bigl[\hat{f}^{*}_{i^*}-f^{*}_w \leq \epsilon\bigr].
\]
Hence 
\[
    P\bigl[\hat{f}^{*}_w-f^{*}_w \leq \epsilon\bigr] \geq \left(1 + \frac{\dhat{f}^*_{i^*} - \hat{f}^*_{i^*}}{\epsilon}\right)^{-\kappa},
\]
and Corollary~\ref{cor:EVT} for $i = i^*$ provides a valid alternative to Theorem~\ref{thm1} as claimed.

\paragraph{More interpretable simplification of $\epsilon_i^{\mathrm{EVT}}$}
We now provide an upper bound on the confidence interval width $\epsilon_i^{\mathrm{EVT}}$% = (\dhat{f}^*_i - \hat{f}^*_i) / ((1-p)^{-1/\kappa} - 1)$ 
in \eqref{eq:confIntEVT} that is simpler to interpret. Denoting this upper bound as $\hat{\epsilon}_i^{\mathrm{EVT}}$, it follows that 
\[
    P\left[\hat{f}^*_i - f^*_i \leq \hat{\epsilon}_i^{\mathrm{EVT}} \right] \geq 1 - p,
\]
i.e., $[\hat{f}^*_i - \hat{\epsilon}_i^{\mathrm{EVT}}, \hat{f}^*_i]$ is also (at least) a $(1-p)$-confidence interval for the minimum fidelity $f^*_i$.

To bound $\epsilon_i^{\mathrm{EVT}}$ from above, it is equivalent to bounding the denominator $((1-p)^{-1/\kappa} - 1)$ from below since the numerator $\dhat{f}^*_i - \hat{f}^*_i$ is non-negative. We regard the denominator as a function of $1/\kappa$, $D(1/\kappa) = ((1-p)^{-1/\kappa} - 1)$. This is an exponential function and hence convex in $1/\kappa$. It is therefore bounded from below by its tangent line at $1/\kappa = 0$:
\[
    D\left(\frac{1}{\kappa}\right) \geq D(0) + \frac{D'(0)}{\kappa} = 0 - \frac{\log(1-p)}{\kappa} = \frac{\log(1/(1-p))}{\kappa}.
\]
Hence
\begin{equation}\label{eq:confIntEVT_UB}
    \epsilon_i^{\mathrm{EVT}} \leq \hat{\epsilon}_i^{\mathrm{EVT}} = \frac{\kappa (\dhat{f}^*_i - \hat{f}^*_i)}{\log(1/(1-p))}.
\end{equation}

The upper bound in \eqref{eq:confIntEVT_UB} is proportional to parameter $\kappa$ of the extreme value distribution and to the difference $(\dhat{f}^*_i - \hat{f}^*_i)$ between the smallest and second-smallest observed fidelities. It also depends logarithmically on the confidence level $1-p$. As noted in Section~\ref{sec:special:char}, $\kappa$ is often taken to be $d/2$ \citep{haan1981estimation} (recall that $d$ is the input dimension). In this case, $\hat{\epsilon}_i^{\mathrm{EVT}}$ is proportional to the dimension. The upper bound becomes tighter as $1/\kappa \to 0$, i.e., in the limit of high $\kappa$ and high dimension.

\begin{table}[htbp]

 \centering
\caption{\small{Synthetic results for $x=[0]^d$, $Z=10$, $\theta=0.75$, explanation is slope $0.75$ hyperplane and optimal half-width is $\frac{1}{d}$ with standard errors (rounded to 3 decimal places, where if value is 0 after rounding we do not state it).}
}
\label{tab:synstderr}
\scriptsize
 
  \begin{tabular}{|c|c|c|c|c|c|c|c|c|c|}
  \hline
	\multirow{2}{*}{$d$} & \multirow{2}{*}{$Q$}
       & \multicolumn{2}{c|}{unif} & \multicolumn{2}{c|}{unifI} & \multicolumn{2}{c|}{adaptI} &  \multicolumn{2}{c|}{ZO$^+$} \\
       & & $w$ & Time (s)& $w$ & Time (s)& $w$ & Time (s)& $w$ & Time (s)\\\hline
       \multirow{4}{*}{$1$} & $10$ & $1$ & $.001$ & $1$ & $.001$& $1$ &$.001$ & $1$& $.012$\\
       & $10^2$ & $1$ & $.006$ & $1$& $.004$&  $1$&$.002$ &$1$ & $1.221$\\
       & $10^3$ & $1$& $.055$& $1$& $.041$&  $1$&$.026$& $1$& $1.724$\\
       & $10^4$ & $1$& $.53$& $1$& $.418$&  $1$& $.189$& $1$& $1.641$\\
       \hline
       \multirow{4}{*}{$10$} & $10$ & $.06\pm .027$& $.001$& $.037\pm .028$& $.001$& $.142\pm .12$& $.001$ & $.3\pm .07$& $.012$\\
       & $10^2$ & $.082\pm .019$& $.003$& $.06\pm .023$& $.007$& $.08\pm .02$& $.003$ & $.1$& $.125$\\
       & $10^3$ & $.09\pm .018$& $.036$& $.085\pm .019$& $.049$& $.11\pm .02$& $.044$& $.1$& $1.354$\\
       & $10^4$ & $.1\pm .016$& $.363$& $.117\pm .008$& $.615$& $.1\pm .01$& $.551$ & $.1$& $14.944$\\
       \hline
       \multirow{4}{*}{$10^2$} & $10$ & $.012\pm .005$& $.001$& $.006\pm .002$& $.001$& $.007\pm .09$& $.001$&$.05$ & $.031$\\
       & $10^2$ & $.012\pm .003$& $.005$& $.007\pm .002$& $.012$& $.008\pm .002$& $.005$&$.025$ &$.3$ \\
       & $10^3$ & $.011\pm .004$& $.054$& $.009\pm .001$& $.158$ & $.01\pm .001$& $.09$& $.012$&$4.072$ \\
       & $10^4$ & $.01\pm .003$& $.632$& $.01\pm .001$& $1.692$& $.01\pm .001$& $.51$& $.009$&$55.87$\\
       \hline
       \multirow{4}{*}{$10^3$} & $10$ & $5\times 10^{-4}$& $.003$& $3\times 10^{-4}$& $.004$& $5\times 10^{-4}$&$.002$ &$.037\pm .008$ & $.307$\\
       & $10^2$ & $6\times 10^{-4}$& $.011$& $.001$& $.073$& $6\times 10^{-4}$&$.044$ & $.012$& $2.579$\\
       & $10^3$ & $8\times 10^{-4}$& $.077$& $.001$& $1.074$& $8\times 10^{-4}$&$.511$ & $.003$&$28.335$ \\
       & $10^4$ & $.001$& $.588$& $.001$& $13.786$& $9\times 10^{-4}$& $5.097$& $.001$& $288.523$\\
       \hline
       \multirow{4}{*}{$10^4$} & $10$ & $6.3\times 10^{-5}$& $.012$& $5.1\times 10^{-5}$& $.098$& $5.8\times 10^{-5}$& $.021$&$.006\pm .001$ & $3.76$\\
       & $10^2$ & $6.6\times 10^{-5}$& $.072$ & $7.7\times 10^{-5}$& $1.187$& $7.8\times 10^{-5}$ & $.43$&$.004$ & $34.602$\\
       & $10^3$ & $8.3\times10^{-5}$& $.771$& $8.4\times 10^{-5}$& $12.452$&$8.5\times 10^{-5}$ & $7.91$ &$8.4\times 10^{-4}$ & $391.494$\\
       & $10^4$ & $8.9\times 10^{-5}$& $4.83$& $9.1\times 10^{-5}$&$112.58$ &$9.4\times 10^{-5}$ & $88.342$& $9.3\times 10^{-5}$& $4384.76$\\
       \hline
\end{tabular}
\end{table}
\begin{table}[htbp]
 \centering
\caption{\small{Synthetic results for $x=[0]^d$, $Z=10$, $\theta=0.9$, explanation is slope $0.9$ hyperplane and optimal half-width is $\frac{1}{d}$ with standard errors (rounded to 3 decimal places, where if value is 0 after rounding we do not state it).
}}
 \label{tab:synstderr2}
\scriptsize

  \begin{tabular}{|c|c|c|c|c|c|c|c|c|c|}
  \hline
	\multirow{2}{*}{$d$} & \multirow{2}{*}{$Q$}
       & \multicolumn{2}{c|}{unif} & \multicolumn{2}{c|}{unifI} & \multicolumn{2}{c|}{adaptI} &  \multicolumn{2}{c|}{ZO$^+$} \\
       & & $w$ & Time (s)& $w$ & Time (s)& $w$ & Time (s)& $w$ & Time (s)\\\hline
       \multirow{4}{*}{$1$} & $10$ & $1$ & $.001$ & $1$ & $.001$& $1$ &$.001$ & $1$& $.011$\\
       & $10^2$ & $1$ & $.004$ & $1$& $.005$&  $1$&$.002$ &$1$ & $1.122$\\
       & $10^3$ & $1$& $.06$& $1$& $.04$&  $1$&$.024$& $1$& $1.678$\\
       & $10^4$ & $1$& $.55$& $1$& $.42$&  $1$& $.181$& $1$& $1.666$\\
       \hline
       \multirow{4}{*}{$10$} & $10$ & $.071\pm .022$& $.001$& $.039\pm .027$& $.001$& $.133\pm .11$& $.001$ & $.2\pm .09$& $.015$\\
       & $10^2$ & $.083\pm .017$& $.002$& $.063\pm .019$& $.005$& $.08\pm .02$& $.003$ & $.1$& $.128$\\
       & $10^3$ & $.11\pm .012$& $.032$& $.081\pm .021$& $.051$& $.11\pm .01$& $.048$& $.1$& $1.411$\\
       & $10^4$ & $.1\pm .011$& $.369$& $.115\pm .009$& $.623$& $.1\pm .008$& $.573$ & $.1$& $14.871$\\
       \hline
       \multirow{4}{*}{$10^2$} & $10$ & $.013\pm .004$& $.001$& $.006\pm .002$& $.001$& $.008\pm .08$& $.001$&$.053$ & $.034$\\
       & $10^2$ & $.013\pm .003$& $.005$& $.006\pm .003$& $.014$& $.009\pm .001$& $.004$&$.027$ &$.35$ \\
       & $10^3$ & $.01\pm .005$& $.051$& $.01\pm .002$& $.162$ & $.012\pm .001$& $.11$& $.011$&$4.113$ \\
       & $10^4$ & $.01\pm .002$& $.655$& $.01\pm .001$& $1.679$& $.01$& $.56$& $.01$&$58.12$\\
       \hline
       \multirow{4}{*}{$10^3$} & $10$ & $5.3\times 10^{-4}$& $.003$& $5.7\times 10^{-4}$& $.004$& $5.9\times 10^{-4}$&$.002$ &$.028\pm .01$ & $.317$\\
       & $10^2$ & $6.3\times 10^{-4}$& $.014$& $9.2\times 10^{-4}$& $.077$& $7.9\times 10^{-4}$&$.042$ & $.009$& $2.636$\\
       & $10^3$ & $8.5\times 10^{-4}$& $.079$& $.001$& $1.051$& $9.1\times 10^{-4}$&$.563$ & $.002$&$29.638$ \\
       & $10^4$ & $.001$& $.613$& $.001$& $12.673$& $9.9\times 10^{-4}$& $5.165$& $.001$& $291.122$\\
       \hline
       \multirow{4}{*}{$10^4$} & $10$ & $6.7\times 10^{-5}$& $.011$& $6.1\times 10^{-5}$& $.123$& $6.6\times 10^{-5}$& $.03$&$.004\pm .001$ & $3.83$\\
       & $10^2$ & $7.4\times 10^{-5}$& $.074$ & $7.9\times 10^{-5}$& $1.221$& $8.2\times 10^{-5}$ & $.48$&$.002$ & $36.671$\\
       & $10^3$ & $8.7\times10^{-5}$& $.777$& $8.9\times 10^{-5}$& $12.53$&$9.3\times 10^{-5}$ & $8.16$ &$8.6\times 10^{-4}$ & $401.821$\\
       & $10^4$ & $9.5\times 10^{-5}$& $5.01$& $9.6\times 10^{-5}$&$101.99$ &$9.9\times 10^{-5}$ & $90.112$& $9.6\times 10^{-5}$& $4517.119$\\
       \hline
\end{tabular}
\end{table}
\begin{table}[htbp]
 \centering
\caption{\small{Synthetic results for $x=[0]^d$, $Z=10$, $\theta=0.5$, explanation is slope $0.5$ hyperplane and optimal half-width is $\frac{1}{d}$ with standard errors (rounded to 3 decimal places, where if value is 0 after rounding we do not state it).
}}
\label{tab:synstderr3}
\scriptsize
 
  \begin{tabular}{|c|c|c|c|c|c|c|c|c|c|}
  \hline
	\multirow{2}{*}{$d$} & \multirow{2}{*}{$Q$}
       & \multicolumn{2}{c|}{unif} & \multicolumn{2}{c|}{unifI} & \multicolumn{2}{c|}{adaptI} &  \multicolumn{2}{c|}{ZO$^+$} \\
       & & $w$ & Time (s)& $w$ & Time (s)& $w$ & Time (s)& $w$ & Time (s)\\\hline
       \multirow{4}{*}{$1$} & $10$ & $1$ & $.001$ & $1$ & $.001$& $1$ &$.001$ & $1$& $.015$\\
       & $10^2$ & $1$ & $.005$ & $1$& $.005$&  $1$&$.004$ &$1$ & $1.312$\\
       & $10^3$ & $1$& $.051$& $1$& $.039$&  $1$&$.023$& $1$& $1.756$\\
       & $10^4$ & $1$& $.51$& $1$& $.432$&  $1$& $.181$& $1$& $1.611$\\
       \hline
       \multirow{4}{*}{$10$} & $10$ & $.07\pm .028$& $.001$& $.032\pm .025$& $.001$& $.122\pm .13$& $.001$ & $.24\pm .08$& $.011$\\
       & $10^2$ & $.078\pm .012$& $.004$& $.061\pm .022$& $.005$& $.08\pm .02$& $.003$ & $.1$& $.133$\\
       & $10^3$ & $.09\pm .01$& $.04$& $.083\pm .015$& $.047$& $.12\pm .01$& $.046$& $.1$& $1.211$\\
       & $10^4$ & $.1\pm .013$& $.351$& $.109\pm .005$& $.622$& $.1\pm .05$& $.566$ & $.1$& $15.175$\\
       \hline
       \multirow{4}{*}{$10^2$} & $10$ & $.012\pm .006$& $.001$& $.007\pm .001$& $.001$& $.008\pm .08$& $.001$&$.06$ & $.037$\\
       & $10^2$ & $.011\pm .003$& $.006$& $.008\pm .002$& $.014$& $.008\pm .002$& $.006$&$.023$ &$.312$ \\
       & $10^3$ & $.011\pm .002$& $.051$& $.009\pm .001$& $.151$ & $.012\pm .001$& $.10$& $.013$&$4.178$ \\
       & $10^4$ & $.01\pm .002$& $.611$& $.01\pm .002$& $1.633$& $.01\pm .001$& $.46$& $.01$&$58.62$\\
       \hline
       \multirow{4}{*}{$10^3$} & $10$ & $5.2\times 10^{-4}$& $.004$& $4.8\times 10^{-4}$& $.005$& $5.3\times 10^{-4}$&$.002$ &$.023\pm .008$ & $.365$\\
       & $10^2$ & $6\times 10^{-4}$& $.016$& $8\times 10^{-4}$& $.075$& $7.3\times 10^{-4}$&$.046$ & $.009$& $2.677$\\
       & $10^3$ & $8.1\times 10^{-4}$& $.075$& $.001$& $1.033$& $8.8\times 10^{-4}$&$.525$ & $.003$&$27.547$ \\
       & $10^4$ & $.001$& $.595$& $.001$& $12.976$& $9.9\times 10^{-4}$& $4.98$& $.001$& $285.479$\\
       \hline
       \multirow{4}{*}{$10^4$} & $10$ & $6.4\times 10^{-5}$& $.01$& $6.1\times 10^{-5}$& $.101$& $6.8\times 10^{-5}$& $.019$&$.003\pm .002$ & $3.82$\\
       & $10^2$ & $6.7\times 10^{-5}$& $.073$ & $7.9\times 10^{-5}$& $1.234$& $8.1\times 10^{-5}$ & $.46$&$.002$ & $33.1$\\
       & $10^3$ & $8.2\times10^{-5}$& $.775$& $8.5\times 10^{-5}$& $12.437$&$8.9\times 10^{-5}$ & $8.12$ &$8.5\times 10^{-4}$ & $389.352$\\
       & $10^4$ & $9.1\times 10^{-5}$& $4.78$& $9.4\times 10^{-5}$&$115.01$ &$9.7\times 10^{-5}$ & $90.103$& $9.4\times 10^{-5}$& $4291.438$\\
       \hline
\end{tabular}
\end{table}

\section{Experimental Details and More Results}
\label{sec:expdmore}

In our implementation of Ecertify, we also have an additional exit condition that checks how close $lb$ and $ub$ are in any iteration of $Z$. If the difference is less than $\frac{0.1}{d}$, we return the current best solution. This prevents the strategies from trying to find samples in this very narrow (low volume) region, which can be difficult.

\paragraph{Choices for the upper bound $B$:} In Section~\ref{sec:meth}, we introduced the variable $B$ (from Algorithm~\ref{algo:Ecertify}) that bounds the half-widths to consider from above. Recall that in Algorithm~\ref{algo:Ecertify}, once a violator ($\bm{b}$) is found, we shrink the candidate region by setting the new $ub$ to be the midpoint between $B$ and $lb$ (the width of the last certified region), and $B$ was set to be the \emph{minimum} of all the coordinates of $\bm{b}$ that exceed $lb$. In this experiment, we consider other choices for setting $B$: i) to the \emph{maximum} value and ii) to the \emph{mean} value of the coordinates of $\bm{b}$ that exceed $lb$.

We choose the synthetic data set-up as described in Section~\ref{sec:exp} for this experiment as the true certified half-widths are known ($1/d$). In Figures~\ref{fig: B-choices-width} and~\ref{fig: B-choices-time}, we report the found $w$'s and timings for the three variations of $B$ (for each of the three strategies).

In Figure~\ref{fig: B-choices-width}, we observe that for smaller dimensions ($\approx 10$s) the choice of $B$ has negligible effect, but for higher dimensions taking the \emph{minimum} provides much more accurate estimates of the true half-width albeit slightly conservative, while both \emph{max} and \emph{mean} choices overestimate the true half-width. The reason for this is as follows: note that once the upper bound $B$ is set, the resulting certified half-width $w$ could at best converge to $B$, and thus setting $B$ to be the maximum (or mean) of the violator ($\bm{b}$)'s coordinates can be overly optimistic.
%since if a region gets incorrectly certified

In Figure~\ref{fig: B-choices-time}, we observe \emph{min} also enjoys the benefit of faster running time, since it brings about the largest reduction in (candidate) widths to consider. This analysis supports our choice of using \emph{min} in Algorithm~\ref{algo:Ecertify} as well as in our implementation.

\begin{figure}[htbp]
    \centering
    \includegraphics[width=\textwidth]{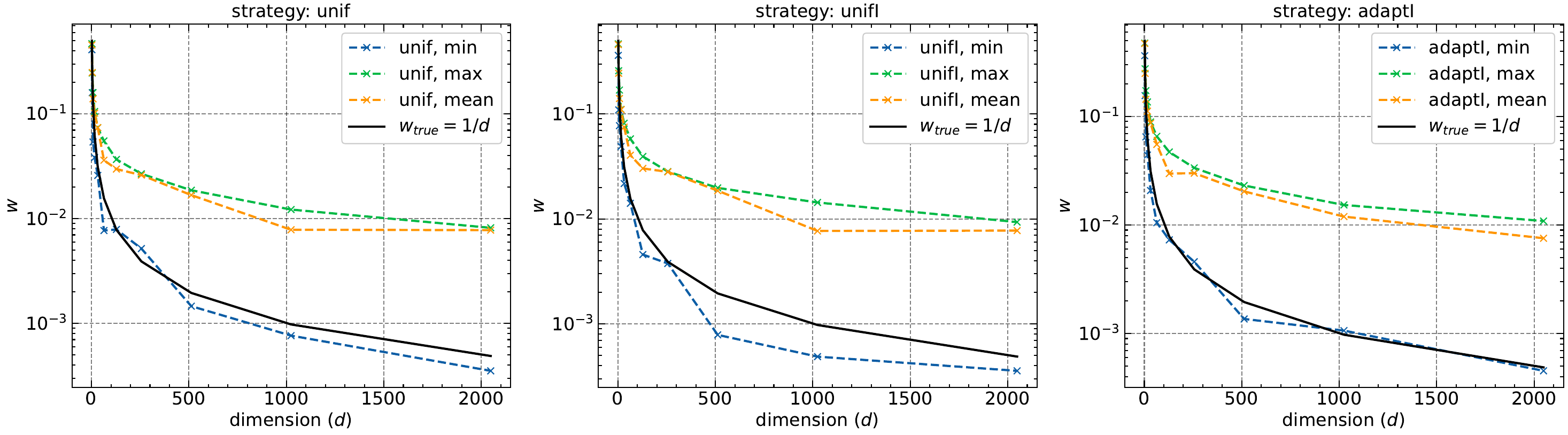}
    \caption{\small{Certified half-widths vs. dimensions plots for the synthetic data set-up with different choices of setting $B$ for the proposed 3 strategies. Note that, choosing \emph{min} is (slightly) conservative as it is almost always below the true certified width (the black solid curve) and both \emph{max} and \emph{mean} overestimate the true width (the y-axis is in log-scale).}
    }
    \label{fig: B-choices-width}
    % \vspace{-.25cm}
\end{figure}

\begin{figure}[htbp]
    \centering
    \includegraphics[width=\textwidth]{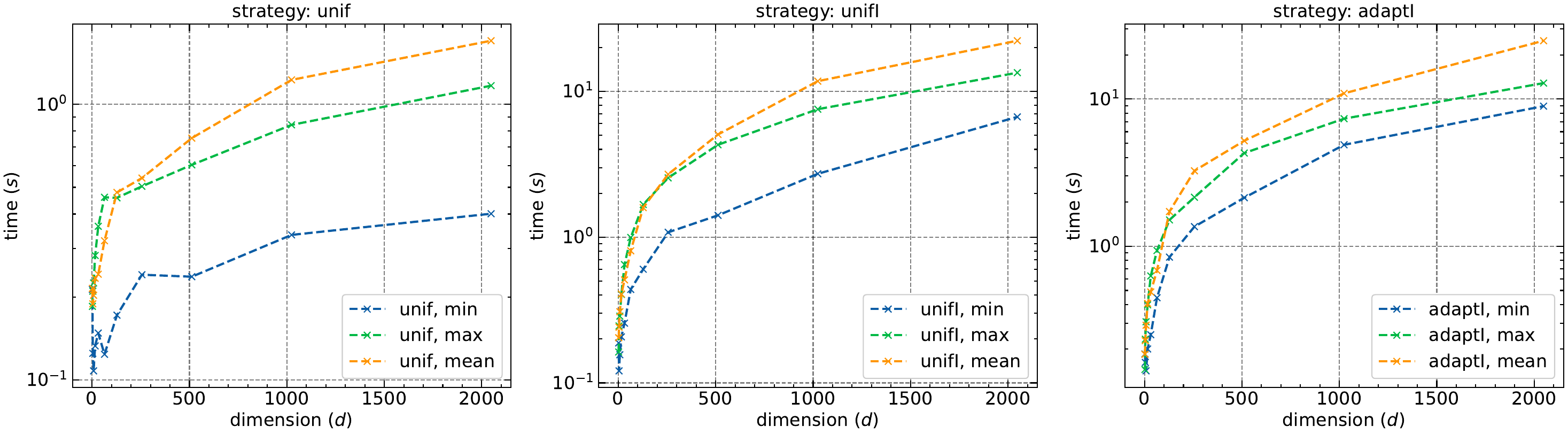}
    \caption{\small{Timing charts for the 3 strategies with different choices for $B$. We see that choosing \emph{min} results in shorter run-times for all three strategies (the y-axis is in log-scale) since it brings about the largest reduction in candidate spatial widths to consider.}
    }
    \label{fig: B-choices-time}
    % \vspace{-.25cm}
\end{figure}

\paragraph{LIME setting:} For tabular classification datasets (heloc and arrhythmia), we obtained LIME explanations by using $1000$ samples around each instance, and with top $5$ features in the explanation. We did not discretize continuous features. We constructed the linear explanation using the coefficients and intercepts from the explanation to apply it to other instances.

For images applying LIME explanations is not straightforward since each image has its own mask based on the superpixels it identifies. Hence to apply explanations across images we identified coefficient values for each pixel in the input image and then depending on the (absolute value of the) mask for a sampled image in the current to be certified region summed the relevant coefficients. This intuitively is equivalent multiplying coefficients with feature values for an example. We also reduced the neighbourhood sample size for LIME to 100 (for CIFAR10) and to 10 (for ImageNet) as this mask finding procedure was time consuming, especially when the certification was done with a high query budget.
\paragraph{SHAP setting:} For SHAP, we used the model agnostic KernelSHAP explainer and used mean values of features from training data as the background values. We obtained shapley values for an instance by taking 10 features and 1000 explanation samples (\texttt{nsamples}) for tabular, and 1000 features and 500 \texttt{nsamples} for image datasets. To apply the obtained SHAP explanation on other examples, we obtained an equivalent linear regression model from the shapley values following ~\cite{Amparore_2021_leaf}.

\subsection{Additional Synthetic Experiments}
As mentioned in the main paper we report results here on more cases along with their standard errors. In Table \ref{tab:synstderr} we see results of Table \ref{tab:syn} with standard errors. Tables \ref{tab:synstderr2} and \ref{tab:synstderr3} report results for $\theta=0.9$ and $\theta=0.5$ respectively where the explanations are hyperplanes having a $\theta$ slope with all the axes. As we can see the insights discussed in the main paper carry over for these cases too.

\subsection{Additional Real Experiments}
In Figures \ref{fig: imagenet-additional}, \ref{fig: cifar10-additional}, \ref{fig: arrhy-additional} and \ref{fig: fico-additional} we see qualitatively similar behavior\footnote{In Figure \ref{fig: cifar10-additional} ZO$^+$ exited without returning a half-width for $Q=10$ on CIFAR10 and hence it is not plotted for that value. For prototype 2 using LIME ZO$^+$ again exited immediately for $Q=10000$ and hence the zero values for time and half-width.} as discussed in the main paper, where in terms of half-widths, in general, unif seems to be the best for the lower dimensional datasets such as HELOC and Arrhythmia, while unifI is best for CIFAR10 and adaptI is best for ImageNet. Again adaptI seems to be the fastest, possibly because of the high efficiency in rejection sampling and it honing on to the violating examples in a region with (much) fewer queries than the allotted budget $Q$ on average.

%imagenet --- other prototypes
\begin{figure}[htbp]
    \centering
    \includegraphics[width=\textwidth]{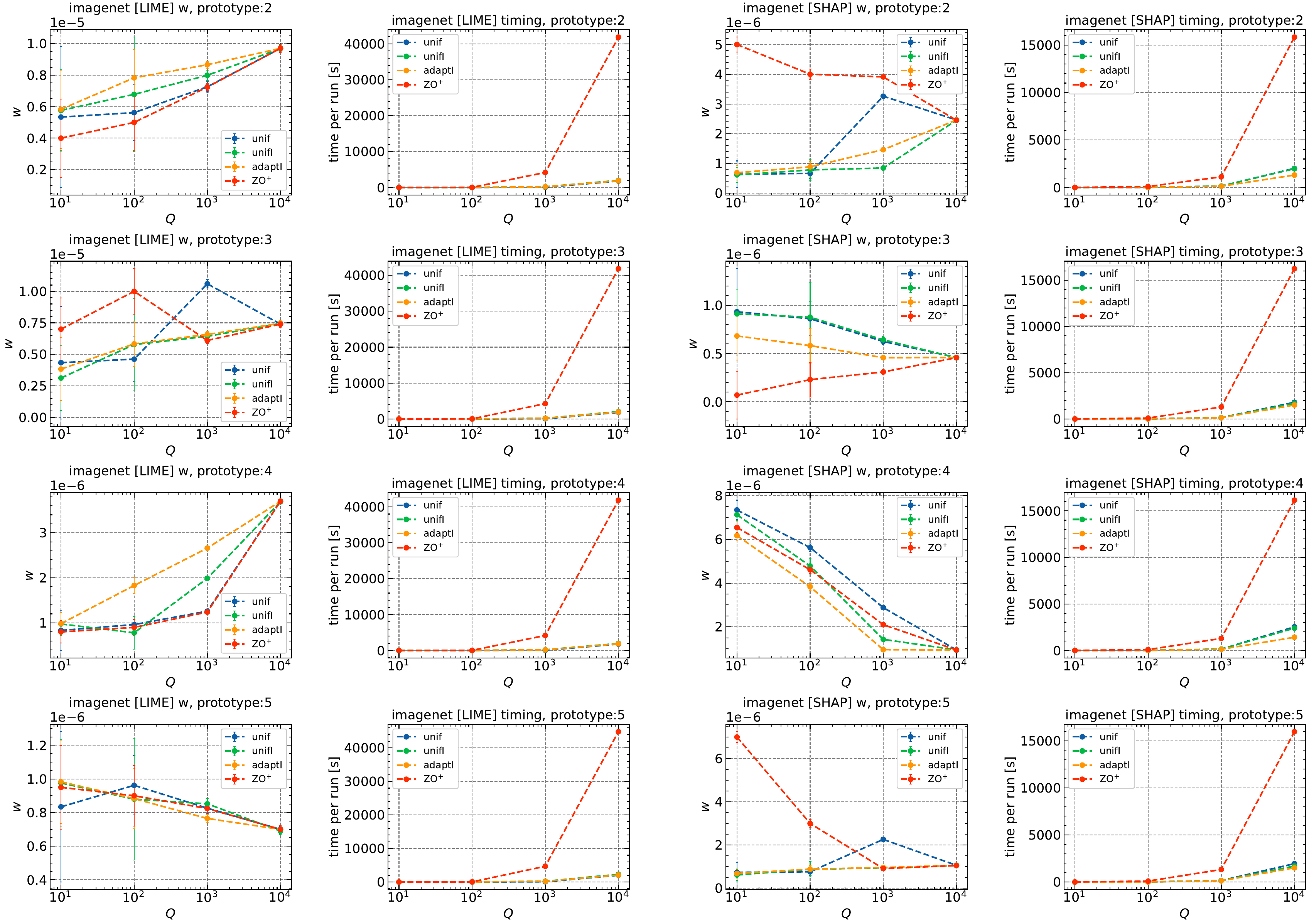}
    \caption{\small{Rows 1-4 above correspond to prototypes 2-5 from the ImageNet dataset. The first prototype results are in the main paper. First two columns are LIME half-width and timing results, while last two columns are SHAP half-width and timing results respectively.
    }}
    \label{fig: imagenet-additional}
    % \vspace{-.25cm}
\end{figure}

%cifar10 --- other prototypes
\begin{figure}[htbp]
    \centering
    \includegraphics[width=\textwidth]{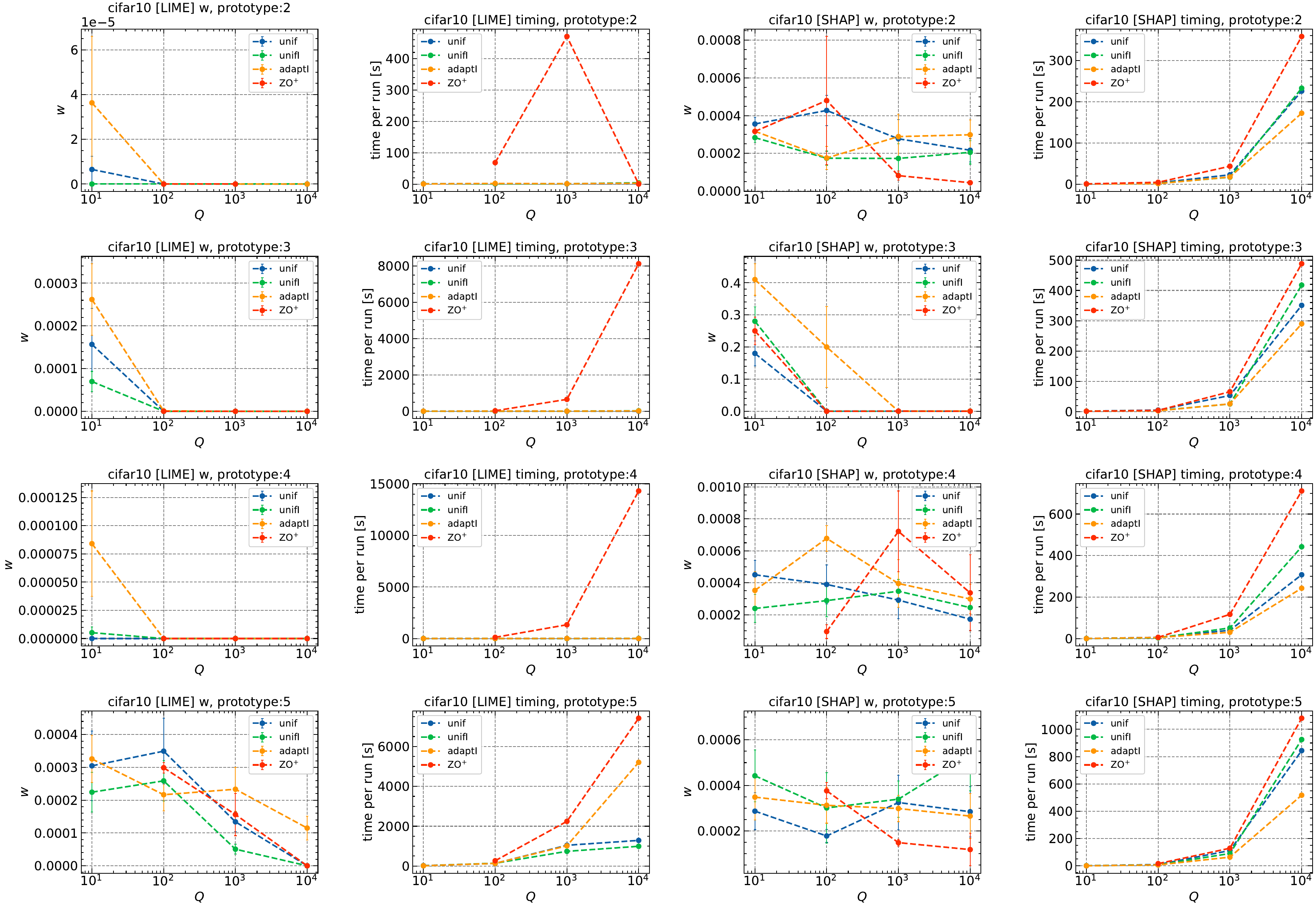}
    \caption{\small{Rows 1-4 above correspond to prototypes 2-5 from the CIFAR10 dataset. The first prototype results are in the main paper. First two columns are LIME half-width and timing results, while last two columns are SHAP half-width and timing results respectively.
    }}
    \label{fig: cifar10-additional}
    % \vspace{-.25cm}
\end{figure}

% arrhythmia --- other prototypes
\begin{figure}[htbp]
    \centering
    \includegraphics[width=\textwidth]{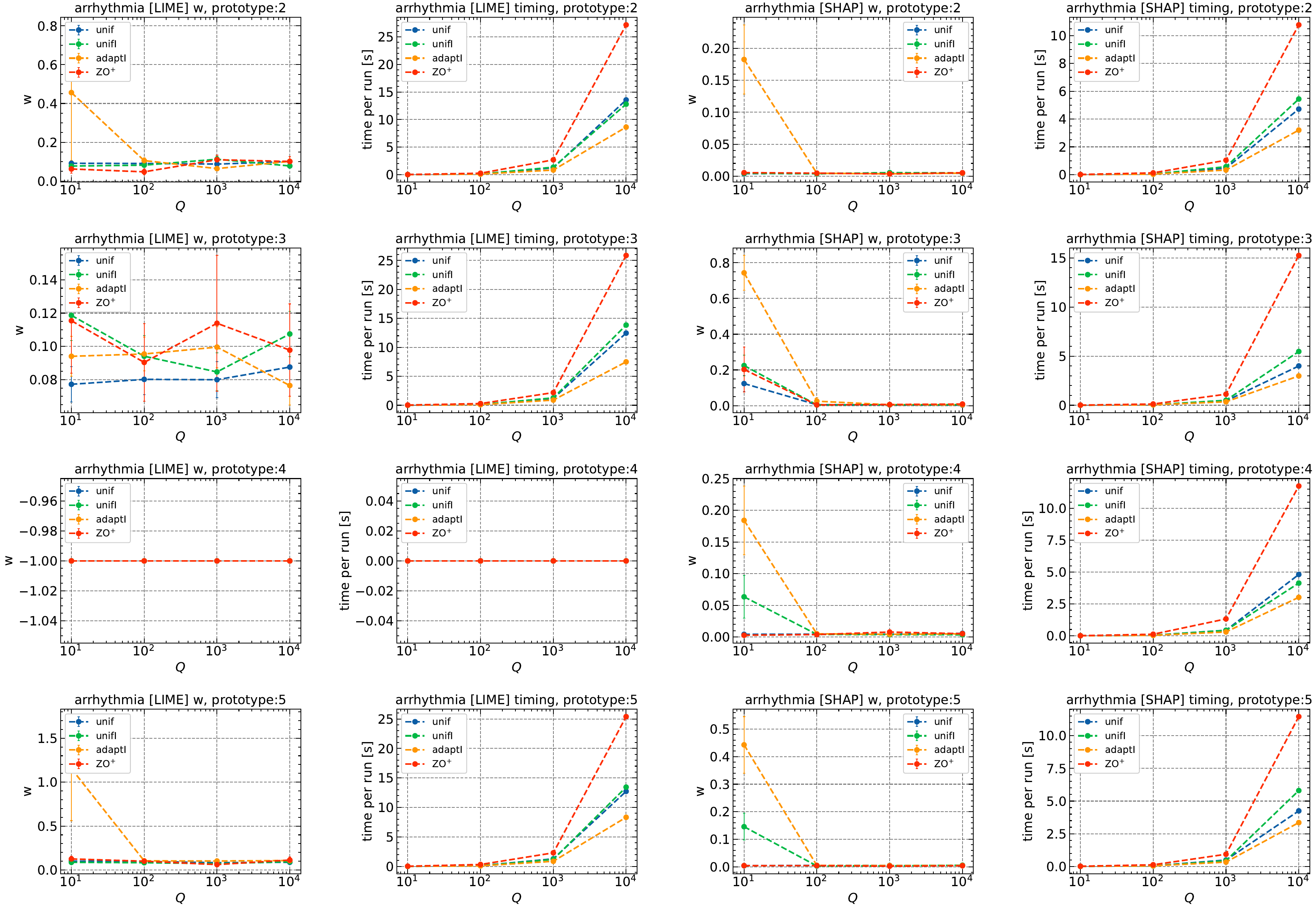}
    \caption{\small{Rows 1-4 above correspond to prototypes 2-5 from the arrhythmia dataset. The first prototype results are in the main paper. First two columns are LIME half-width and timing results, while last two columns are SHAP half-width and timing results respectively.
    }}
    \label{fig: arrhy-additional}
    % \vspace{-.25cm}
\end{figure}

% fico-heloc --- other prototypes
\begin{figure}[htbp]
    \centering
    \includegraphics[width=\textwidth]{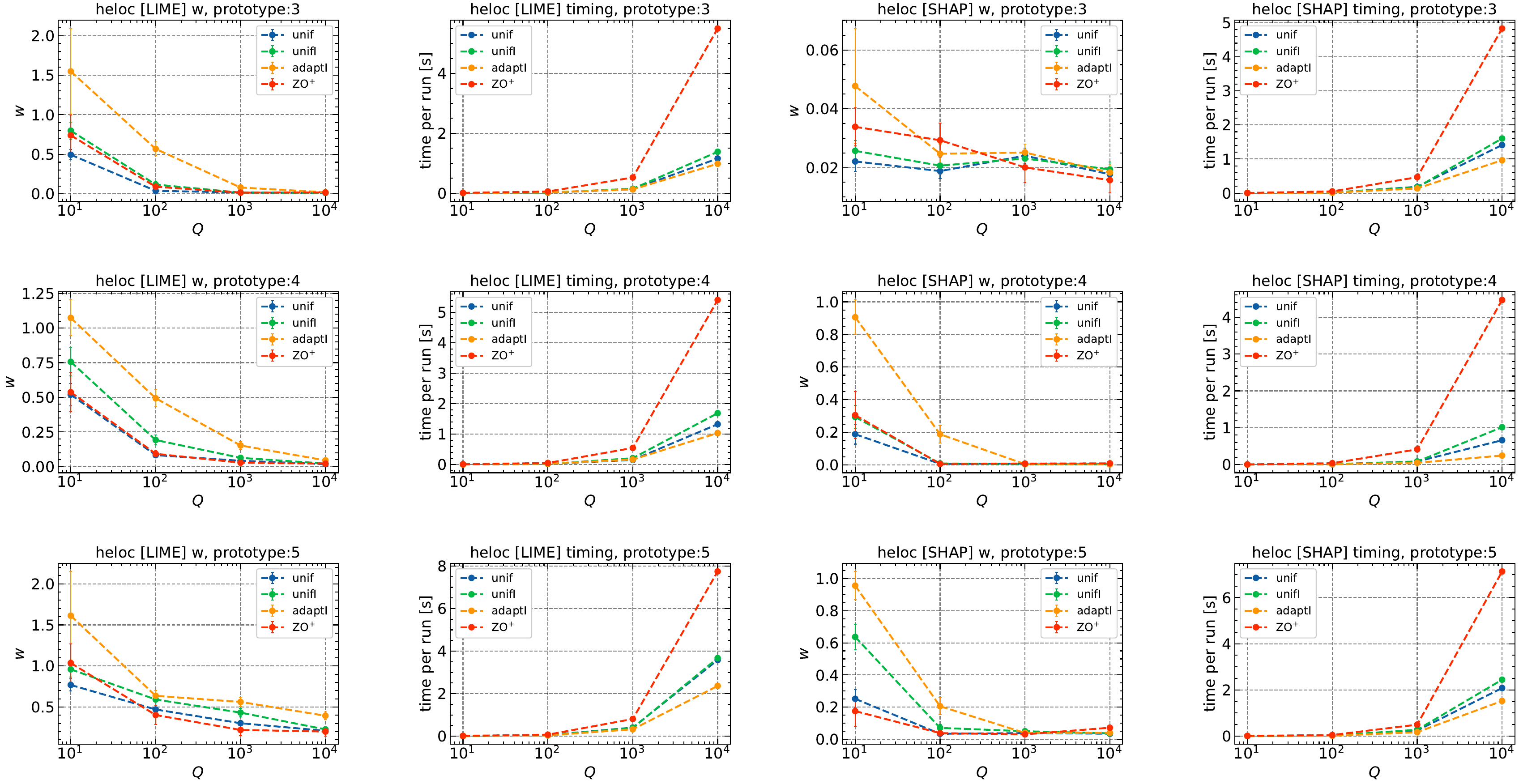}
    \caption{\small{Rows 1-3 above correspond to prototypes 3-5 from the HELOC dataset. The first two prototype results are in the main paper. First two columns are LIME half-width and timing results, while last two columns are SHAP half-width and timing results respectively.
    }}
    \label{fig: fico-additional}
    % \vspace{-.25cm}
\end{figure}

\subsection{Query savings using certified explanations}
\label{sec:exp-savings}
In this section, we demonstrate a practical use-case of our method in explanation re-use and savings on queries while computing explanations for a payload sample on a deployed model. The idea is to find certified regions at a given fidelity level $\theta$ for few samples in the dataset, and then compute the number of samples \emph{covered} by those regions. For the covered samples, we can simply re-use the already computed corresponding (certified) explanations instead of computing them anew.

For this experiment, we choose the HELOC dataset~\cite{FICO} and a Gradient Boosted Tree model as also used in other experiments, and the LIME explainer to certify. In the experiment, we sample random subsets of increasing size from the original training set (fractions from $0.1$ to $1.0$), and for each such subset we find how many training examples are required to cover that entire subset of examples. To find the samples needed to cover the subset, we employ a sequential covering approach. In each iteration, we pick an instance at random from the subset and compute and certify its explanation. We then update the subset by removing all instances that are covered by the certified region for the explanation. We say a region \emph{covers} a training example if a fraction of the example's feature values fall inside the certified region. For this demonstration, we choose $\theta=0.75$ and the fraction to be the top $60\%$ of features as ranked by the LIME explanation. This was sufficient for the fidelity constraint to be satisfied with high probability (see Figure~\ref{fig:savings-main}(right)). One could choose higher fractions, however, since HELOC has $<$ 7000 (train) examples and hence does not densely fill the input space the coverage reduces with increasing fraction (at $80\%$ almost no other examples are covered). Thus, we chose $60\%$ where the fidelity constraint is (mostly) met, but we still get significant coverage. This fraction could be higher for more densely populated datasets (even $100\%$), where one could still see substantial coverage and query savings. 
Note that there are a few samples for which even an explanation computed for the sample itself has fidelity less than $\theta$. We ignore such samples from the subsets and hence the x-axis in the figures state \emph{effective} subset size.
% To calculate the coverage of a region, we set 3 different matching thresholds: $0.4$, $0.5$, and $0.6$ to see the variability in number of samples needed to cover the entire subset.

In Figure~\ref{fig:savings-main}(left) in the main paper, we show that we need an order of magnitude fewer samples to cover the entire subset. In Figure~\ref{fig:savings-main}(right), we show the means and standard deviations of the actual fidelities of the covered samples for which we did not compute explanations explicitly, and instead used a certified explanation from another sample. As can be seen the mean fidelity values are above the threshold, and the standard deviation bars shrink as the subset size grows.
% Figure~\ref{fig: query-savings} shows the results of this experiment. The x-axis shows effective subset size (i.e., by removing all samples for which certification failed as fidelity $< \theta$ even on that instance), and y-axis shows the number of samples that covered the entirety of the subset at 3 different matching thresholds as defined earlier. The left plot is for $\theta=0.75$ and in the right one we reduced $\theta$ to $0.6$. Note that the y-axis scales are different for the two figures, showing we need significantly less samples to cover a subset when the fidelity threshold is lower (or certification width is larger). Also the the number of samples needed to cover grows if we require more features to fall inside $w$.

These results suggest that even if we pick samples randomly from a dataset to certify, we would need an order of magnitude fewer samples to cover the dataset, with corresponding savings in not having to compute explanations for covered examples. At the same time, the covered examples satisfy the fidelity threshold $\theta$ with high probability.

\section{Bound computation}
In Tables \ref{tab:syn-bounds} and \ref{tab:imagenet-bounds} we see the bounds computed using Theorem \ref{thm1} for the synthetic and real data experiments.

\begin{table}[htbp]
 \centering
\caption{\small{Bounds computed based on Theorem \ref{thm1} for the synthetic results reported in Table \ref{tab:syn}. $\epsilon$ was set to $0.01$ and kernel density estimation (kde) using the scipy library with default settings was done to estimate the distribution of fidelities. The cdfs were computed based using the true $f^*$ in the region, as well as two proxies for $f^*$, namely $\hat{f}^*$ i.e. minimum estimated fidelity for the respective region based on the fidelities returned by the algorithms, and $\theta$ i.e. fidelity threshold passed as input to the algorithms. Also time to compute the bounds is reported given that we have the fidelities already available for samples from running the respective strategies. The time is an average over three options for $f^*$ mentioned above each of which takes similar time. As can be seen the bounds using $f^*$ or its estimate $\hat{f}^*$ are quite close, while the bounds using $\theta$ as a proxy for $f^*$ are slightly conservative. We can see that adaptI converges to probability 1 the fastest in terms of $Q$, probably because of its ability to hone in on the low fidelity regions leading to higher values of the cdf and hence tighter bounds. In terms of time, unif is the fastest since we just have to estimate a single cdf for each region we certify. For adaptI, approximately $\log(Q)$ cdfs have to be estimated per region, while for unifI it is $\approx q(1+\log\log(Q))$, which leads to the higher time. Nonetheless, in practical terms, all bounds seem to be reasonably efficient to compute (within $\sim 6$ minutes here).
}}
\label{tab:syn-bounds}
\vspace{.25cm}
\scriptsize
\begin{tabular}{|c|c|c|c|c|c|c|c|c|c|c|c|c|c|}
  \hline
	\multirow{2}{*}{$d$} & \multirow{2}{*}{$Q$}
       & \multicolumn{4}{c|}{unif} & \multicolumn{4}{c|}{unifI} & \multicolumn{4}{c|}{adaptI} \\
       & & $f^*$ & $\hat{f}^*$ & $\theta$ & Time (s)& $f^*$ & $\hat{f}^*$ & $\theta$ &  Time (s)& $f^*$ & $\hat{f}^*$ & $\theta$ &  Time (s)\\\hline
       \multirow{4}{*}{$1$} & $10$ & $0.001$ & $0.001$ & $0.001$ & $0.123$& $0.001$ &$0.001$&$0.001$  & $0.127$& $0.001$ &$0.001$&$0.001$ &  $0.135$ \\
       & $10^2$ & $0.457$ & $0.457$ & $0.457$& $0.158$&  $0.764$&$0.764$ & $0.764$ & $1.312$& $0.971$ & $0.971$ &$0.971$ &$0.411$\\
       & $10^3$ & $1.000$& $1.000$& $1.000$& $0.213$&  $1.000$& $1.000$& $1.000$& $12.623$& $1.000$& $1.000$& $1.000$& $0.801$\\
       & $10^4$ & $1.000$& $1.000$& $1.000$& $0.727$&  $1.000$& $1.000$& $1.000$& $368.671$& $1.000$& $1.000$& $1.000$& $3.638$\\
       \hline
       \multirow{4}{*}{$10$} & $10$ & $0.001$ & $0.001$ & $0.001$ & $0.125$& $0.001$ &$0.001$&$0.001$  & $0.129$& $0.001$ &$0.001$&$0.001$ &  $0.132$ \\
       & $10^2$ & $0.461$ & $0.461$ & $0.445$& $0.162$&  $0.764$&$0.764$ & $0.691$ & $1.396$& $0.962$ & $0.962$ &$0.949$ &$0.405$\\
       & $10^3$ & $1.000$& $1.000$& $1.000$& $0.217$&  $1.000$& $1.000$& $1.000$& $13.142$& $1.000$& $1.000$& $1.000$& $0.793$\\
       & $10^4$ & $1.000$& $1.000$& $1.000$& $0.79$&  $1.000$& $1.000$& $1.000$& $370.263$& $1.000$& $1.000$& $1.000$& $3.792$\\
       \hline
       \multirow{4}{*}{$10^2$} & $10$ & $0.001$ & $0.001$ & $0.001$ & $0.127$& $0.001$ &$0.001$&$0.001$  & $0.127$& $0.001$ &$0.001$&$0.001$ &  $0.130$ \\
       & $10^2$ & $0.466$ & $0.468$ & $0.461$& $0.161$ &  $0.771$ & $0.773$ & $0.768$ & $1.387$& $0.972$ & $0.971$ &$0.967$ &$0.431$\\
       & $10^3$ & $1.000$& $1.000$& $1.000$& $0.209$&  $1.000$& $1.000$& $1.000$& $13.128$& $1.000$& $1.000$& $1.000$& $0.828$\\
       & $10^4$ & $1.000$& $1.000$& $1.000$& $0.789$&  $1.000$& $1.000$& $1.000$& $370.527$& $1.000$& $1.000$& $1.000$& $3.716$\\
       \hline
       \multirow{4}{*}{$10^3$} & $10$ & $0.001$ & $0.001$ & $0.000$ & $0.129$ & $0.001$ &$0.001$&$0.001$  & $0.127$ & $0.001$ &$0.001$&$0.000$ &  $0.137$ \\
       & $10^2$ & $0.455$ & $0.455$ & $0.441$& $0.163$ &  $0.778$ & $0.780$ & $0.772$ & $1.411$& $0.970$ & $0.971$ &$0.965$ &$0.429$\\
       & $10^3$ & $1.000$& $1.000$& $0.981$& $0.222$&  $1.000$& $1.000$& $1.000$& $13.172$& $1.000$& $1.000$& $0.995$& $0.811$\\
       & $10^4$ & $1.000$& $1.000$& $1.000$& $0.733$&  $1.000$& $1.000$& $1.000$& $371.321$& $1.000$& $1.000$& $1.000$& $3.712$\\
       \hline
       \multirow{4}{*}{$10^4$} & $10$ & $0.001$ & $0.001$ & $0.000$ & $0.130$& $0.001$ &$0.001$&$0.000$  & $0.125$& $0.001$ &$0.001$&$0.000$ &  $0.140$ \\
       & $10^2$ & $0.449$ & $0.450$ & $0.444$& $0.153$&  $0.765$&$0.765$ & $0.759$ & $1.344$& $0.972$ & $0.972$ &$0.969$ &$0.401$\\
       & $10^3$ & $1.000$& $1.000$& $0.998$& $0.235$&  $1.000$& $1.000$& $0.999$& $12.763$& $1.000$& $1.000$& $1.000$& $0.789$\\
       & $10^4$ & $1.000$& $1.000$& $1.000$& $0.766$&  $1.000$& $1.000$& $1.000$& $373.891$& $1.000$& $1.000$& $1.000$& $3.601$\\
       \hline
\end{tabular}
\end{table}

\begin{table}[htbp]
 \centering
\caption{\small{Probability lower bounds from EVT for the synthetic results reported in Table~\ref{tab:syn} (same as Table~\ref{tab:syn-bounds}). As discussed in Section~\ref{sec:special} and Appendix~\ref{sec:EVT}, the bounds apply to the unif and i.i.d.~unifI strategies and are based on Corollary~\ref{cor:EVT}. Ideally, one should apply Corollary~\ref{cor:EVT} with $i = i^* \in \argmin_i f^*_i$, but since $i^*$ is not known to the algorithms, we use $\hat{i} \in \argmin_i \hat{f}^*_i$ as an approximation. As in Table~\ref{tab:syn-bounds}, $\epsilon = 0.01$, and the exponent $\kappa$ in Corollary~\ref{cor:EVT} is set to $d/2$. We make the following observations: 1) The bounds for i.i.d.~unifI in particular are high enough to be meaningful. 2) At the same time, the bounds in Table~\ref{tab:syn-evt-bounds} are weaker than those in Table~\ref{tab:syn-bounds}. This appears to be the price of using an easily computable asymptotic expression rather than estimating cdfs. While the $Q=10$ results in Table~\ref{tab:syn-evt-bounds} might appear to be better, we recall that EVT holds in the limit of large $Q$ so the $Q=10$ values are questionable. We include them for completeness to match Table~\ref{tab:syn-bounds}. 3) The bounds in Table~\ref{tab:syn-evt-bounds} do suffer somewhat from increasing dimension $d$, due to the exponent $\kappa = d/2$. 4) The bounds for i.i.d.~unifI are much better than for unif. This supports the intuition that if one of the prototypes from unifI happens to be good (having close to minimum fidelity), then sampling more densely around it is better than sampling uniformly throughout.}
}
\label{tab:syn-evt-bounds}
\vspace{.25cm}
\scriptsize
\begin{tabular}{|c|c|c|c|}
  \hline
	$d$ & $Q$
       & unif & i.i.d.~unifI \\\hline
       \multirow{4}{*}{$1$} & $10$ & $0.624$ & $0.654$ \\
       & $10^2$ & $0.9$ & $0.875$ \\
       & $10^3$ & $0.989$& $0.981$\\
       & $10^4$ & $0.998$& $0.998$\\
       \hline
       \multirow{4}{*}{$10$} & $10$ & $0.077$ & $0.575$ \\
       & $10^2$ & $0.191$ & $0.553$ \\
       & $10^3$ & $0.244$& $0.563$\\
       & $10^4$ & $0.317$& $0.645$\\
       \hline
       \multirow{4}{*}{$10^2$} & $10$ & $0.066$ & $0.451$ \\
       & $10^2$ & $0.081$ & $0.513$ \\
       & $10^3$ & $0.081$& $0.493$\\
       & $10^4$ & $0.083$& $0.558$\\
       \hline
       \multirow{4}{*}{$10^3$} & $10$ & $0.016$ & $0.416$ \\
       & $10^2$ & $0.081$ & $0.471$ \\
       & $10^3$ & $0.141$& $0.479$\\
       & $10^4$ & $0.109$& $0.484$\\
       \hline
       \multirow{4}{*}{$10^4$} & $10$ & $0.02$ & $0.364$ \\
       & $10^2$ & $0.05$ & $0.436$ \\
       & $10^3$ & $0.05$& $0.454$\\
       & $10^4$ & $0.07$& $0.515$\\
       \hline
\end{tabular}
\end{table}
\clearpage

\begin{table}[htbp]

 \centering
\caption{\small{Below we see the (estimated) lower bounds on the probability in Theorem 1 and the additional time to compute them for the example in the main paper on ImageNet (Figure 1 first row), given that we have the fidelities already available for samples from running the respective strategies. $\epsilon$ was set to $0.01$ and kernel density estimation (kde) using the scipy library with default settings was done to estimate the distribution of fidelities. The cdfs were computed based on two proxies for $f^*$ (which is unknown): i) $\hat{f}^*$ i.e. minimum estimated fidelity for the respective region based on the fidelities returned by the algorithms and ii) $\theta$ i.e. fidelity threshold passed as input to the algorithms. The latter would provide a conservative estimate of our bounds since, $f^*\ge \theta$ for a certified region. We can see that adaptI converges to probability 1 the fastest in terms of $Q$, probably because of its ability to hone in on the low fidelity regions leading to higher values of the cdf and hence tighter bounds. In terms of time, unif is the fastest since we just have to estimate a single cdf for each region we certify. For adaptI, approximately $\log(Q)$ cdfs have to be estimated per region, while for unifI it is $\approx q(1+\log\log(Q))$, which leads to the higher time. Nonetheless, in practical terms, all bounds seem to be reasonably efficient to compute (within $\sim 6$ minutes here).
}}
\label{tab:imagenet-bounds}
\vspace{.25cm}
\scriptsize

\begin{tabular}{cccccccc}
\toprule
\multirow{2}{*}{Explanation method}  & \multirow{2}{*}{Criterion}   &  \multirow{2}{*}{Strategies}  & \multirow{2}{*}{$f^*$ proxy} & \multicolumn{4}{c}{$Q$}    \\
 &  &  &  & $10$       &   $100$      &    $1000$      &    $10000$      \\
\midrule
\multirow{12}{*}{LIME} & \multirow{6}{*}{time (s)} & \multirow{2}{*}{unif} & $\hat{f}^*$ &  $0.1350$ &  $0.1750$ &   $0.2250$ &    $0.7350 $\\
&  & & $\theta$ &  $0.1289$ &  $0.1612$ &   $0.2023$ &    $0.7011 $\\
 \cline{3-8}
     &        & \multirow{2}{*}{unifI} & $\hat{f}^*$ & $0.1543$ &  $1.4000$ &  $13.9179$ &  $373.9050$ \\
     &  & & $\theta$ &  $0.1399$ &  $1.2234$ &  $12.7832$ &  $365.1237$ \\
      \cline{3-8}
     &        & \multirow{2}{*}{adaptI} & $\hat{f}^*$ &  $0.1620$ &  $0.4200$ &   $0.8100$ &    $3.8220$\\
     &      &  & $\theta$ &  $0.1494$ &  $0.4012$ &   $0.7914$ &    $3.5822$\\
 \cmidrule{2-8}
     & \multirow{6}{*}{bounds} & \multirow{2}{*}{unif} & $\hat{f}^*$ & $0.0000$ &  $0.4833$ &   $1.0000$ &    $1.000$ \\
     & & & $\theta$ & $0.0000$ &  $0.4665$ &   $0.9892$ &    $1.000$ \\
      \cline{3-8}
     &   & \multirow{2}{*}{unifI} & $\hat{f}^*$ & $0.0002$ &  $0.7746$ &   $1.0000$ &    $1.000$ \\
     &   & & $\theta$ & $0.0001$ &  $0.7582$ &   $0.9862$ &    $1.000$ \\
      \cline{3-8}
     &   & \multirow{2}{*}{adaptI} & $\hat{f}^*$ & $0.0002$  & $0.9668$ &   $1.0000$ &    $1.000$ \\
     &  &  &  $\theta$ & $0.0002$ &  $0.9589$ &   $1.0000$ &    $1.000$ \\
\hline
\multirow{12}{*}{SHAP} & \multirow{6}{*}{time (s)} & \multirow{2}{*}{unif} & $\hat{f}^*$ & $0.1470$ &  $0.1860$ &   $0.2340$ &    $0.8110$ \\
&  & & $\theta$ & $0.1298$ &  $0.1821$ &   $0.2218$ &    $0.7981$ \\
 \cline{3-8}
     &  & \multirow{2}{*}{unifI} & $\hat{f}^*$ & $0.1656$ &  $1.5200$ &  $14.1200$ &  $375.3340$ \\
     &  & & $\theta$ & $0.1581$ &  $1.4827$ &  $12.2371$ &  $369.4216$ \\
      \cline{3-8}
     &        & \multirow{2}{*}{adaptI} & $\hat{f}^*$ & $0.1710$ &  $0.5100$ &   $0.8900$ &    $3.9870$ \\
     &        &  & $\theta$ & $0.1601$ &  $0.4897$ &   $0.8691$ &    $3.7456$ \\
 \cmidrule{2-8}
     & \multirow{6}{*}{bounds} & \multirow{2}{*}{unif} & $\hat{f}^*$ & $0.0000$ &  $0.4550$ &   $0.9840$ &    $1.0000$ \\
     &  &  & $\theta$ & $0.0000$ &  $0.4417$ &   $0.9612$ &    $1.0000$ \\
      \cline{3-8}
     &   & \multirow{2}{*}{unifI} & $\hat{f}^*$ & $0.0003$ &  $0.7844$ &   $1.0000$ &    $1.0000$ \\
     &   &  & $\theta$ & $0.0001$ &  $0.7519$ &   $0.9831$ &    $1.0000$ \\
     \cline{3-8}
     &        & \multirow{2}{*}{adaptI} & $\hat{f}^*$ & $0.0003$ &  $0.9384$ &   $1.0000$ &    $1.0000$ \\
     &        &  & $\theta$ & $0.0003$ &  $0.9272$ &   $1.0000$ &    $1.0000$ \\
\bottomrule
\end{tabular}
 
\end{table}

\begin{figure}[b]
 %\vspace{-.65cm}
   \centering
    \includegraphics[width=\textwidth]{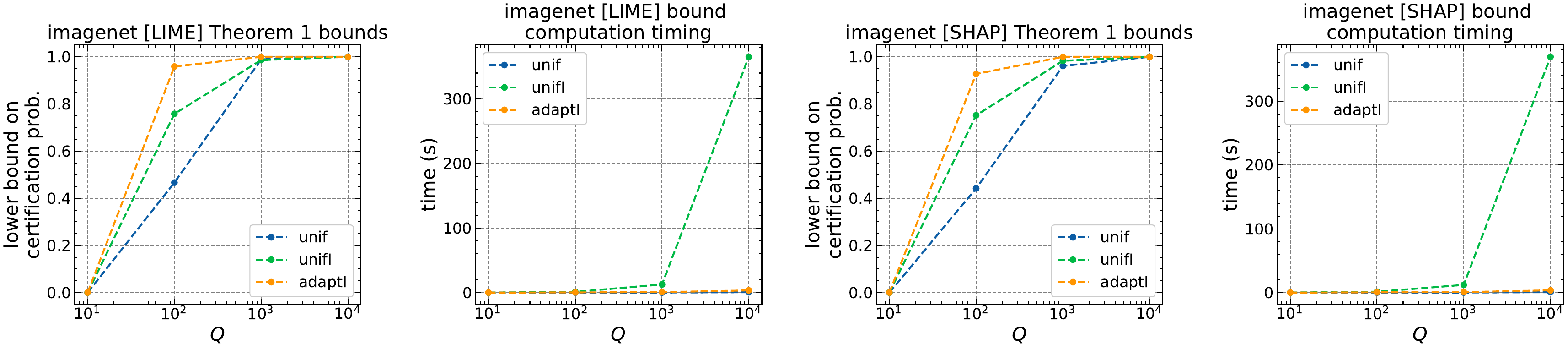}
    \caption{\small{Visualizing trends for the results in Table \ref{tab:imagenet-bounds}, where $f^*$ proxy is $\theta$ (conservative estimate). Results are qualitatively similar when $f^*$ proxy is $\hat{f}^*$. Left two figures are bounds and timing for LIME respectively. Right two figures are bounds and timing for SHAP respectively.
    }}
    \label{fig:imagenet-bounds}
\end{figure}

\begin{figure}[htbp]
    \centering
    \includegraphics[width=\columnwidth]{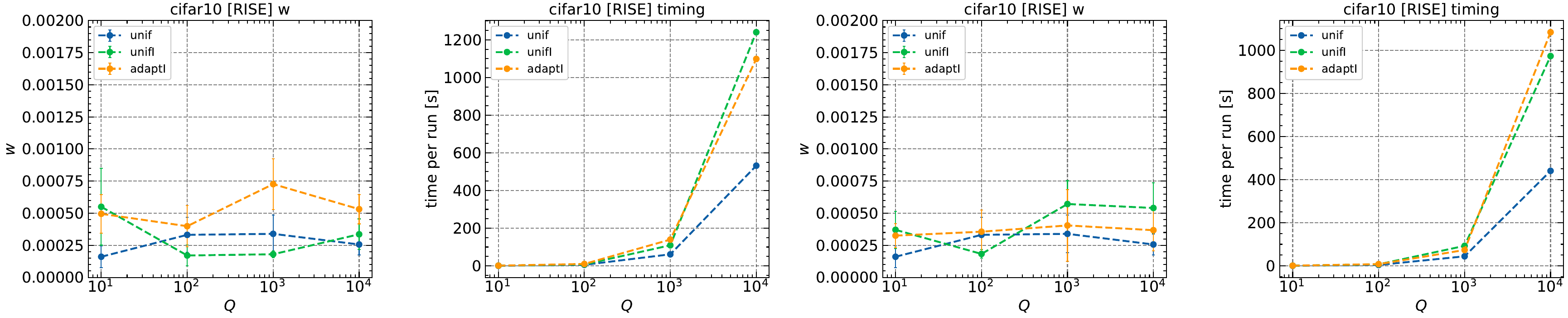}
    \caption{\small{Half-widths and timing charts for the certification strategies based on RISE explanations for two randomly chosen CIFAR-10 examples ($1^{\text{st}}$ example: left two figures, $2^{\text{nd}}$ example: right two figures). ZO$^+$ exited prematurely without converging for RISE so we do not report results for it here.}
    }
    \label{fig:rise}
    % \vspace{-.25cm}
\end{figure}
\section{Certifying non-linear explanations: RISE}
\label{sec:nonlinexp}

Here we choose another explanation method, RISE~\cite{Petsiuk2018rise}, for image models and demonstrate how it can be certified within our framework. RISE is a random sampling based model agnostic explanation method that outputs a saliency mask as the explanation for the predicted class, given a model and an input image. Below we outline the steps to find trust regions associated with such an explanation.

Once we have obtained such an explanation (which is simply importance weights for each pixel) using RISE for an input image, we obtain the model's score (for the predicted class) when the explanation (or normalized mask) is \emph{applied} (elementwise multiplication) to the image and the product is passed to the model. This score gives a measure of how good the local approximation (i.e., explanation) of the model is, without an explicit functional form of the explanation such as in LIME. We apply the same mask on sampled images (during certification) and query the model to get their respective scores. Here again fidelity is the quality metric, which similar to our other experiments, is one minus the absolute difference between an image's (prediction) score for the class of the input image and its score with the mask applied to it.

In the experiment depicted in Figure \ref{fig:rise} we certify two randomly chosen examples from CIFAR10. We find that the results are qualitatively similar to those in the main paper for LIME and SHAP. Again, all methods converge to similar half-widths.

This experiment further reinforces the fact that our certification strategies are generally applicable with them being agnostic to both the model and the explanation method.

\end{document}